\def\flow{RMPflow\xspace}
\def\tree{RMP-tree\xspace}
\def\algebra{RMP-algebra\xspace}
\def\pushforward{ \texttt{\small pushforward}\xspace}
\def\pullback{\texttt{\small pullback}\xspace}
\def\resolve{\texttt{\small resolve}\xspace}
\def\newstuff{*}
\def\newflow{RMPfusion\xspace}
\def\newtree{RMP-tree\newstuff\xspace}
\def\newalgebra{RMP-algebra\newstuff\xspace}
\def\newpullback{\texttt{\small pullback}\newstuff\xspace}
\theoremstyle{plain}
\newtheorem{lemma}{Lemma}
\newtheorem{theorem}{Theorem}
\theoremstyle{definition}
\newtheorem{definition}{Definition}
\theoremstyle{remark}
\def\CC{\mathcal{C}}
\def\MM{\mathcal{M}}\def\NN{\mathcal{N}}
\def\SS{\mathcal{S}}\def\TT{\mathcal{T}}
\def\Bb{\mathbf{B}}\def\Cb{\mathbf{C}}
\def\Gb{\mathbf{G}}\def\Hb{\mathbf{H}}
\def\Jb{\mathbf{J}}\def\Kb{\mathbf{K}}
\def\Mb{\mathbf{M}}
\def\ab{\mathbf{a}}
\def\fb{\mathbf{f}}
\def\gb{\mathbf{g}}\def\hb{\mathbf{h}}
\def\mbb{\mathbf{m}}
\def\xb{\mathbf{x}}
\def\Rbb{\mathbb{R}}
\def\R{\Rbb}
\def\t{\top}
\def\*{\star}
\newcommand{\q}{\mathbf{q}}
\newcommand{\qd}{{\dot{\q}}}
\newcommand{\qdd}{{\ddot{\q}}}
\newcommand{\x}{\mathbf{x}}
\newcommand{\xd}{{\dot{\x}}}
\newcommand{\xdd}{{\ddot{\x}}}
\newcommand{\y}{\mathbf{y}}
\newcommand{\yd}{{\dot{\y}}}
\newcommand{\z}{\mathbf{z}}
\newcommand{\zd}{{\dot{\z}}}
\newcommand{\f}{\mathbf{f}}
\newcommand{\h}{\mathbf{h}}
\newcommand{\J}{\mathbf{J}}
\newcommand{\B}{\mathbf{B}}
\newcommand{\C}{\mathbf{C}}
\newcommand{\G}{\mathbf{G}}
\newcommand{\I}{\mathbf{I}}
\newcommand{\M}{\mathbf{M}}
\newcommand{\sdot}[2]{\overset{\lower0.1em\hbox{$\scriptscriptstyle #2$}}{#1}}
\title{Riemannian Motion Policy Fusion through\\Learnable Lyapunov Function Reshaping}
\author{
Mustafa Mukadam\textsuperscript{1},  Ching-An Cheng\textsuperscript{1}, Dieter Fox\textsuperscript{2,3}, Byron Boots\textsuperscript{2,3}, and Nathan Ratliff\textsuperscript{3}\\
\textsuperscript{1}Georgia Institute of Technology, USA\\
\textsuperscript{2}University of Washington, USA\\
\textsuperscript{3}NVIDIA, USA
}
\begin{document}
\maketitle

\vspace{-8mm}
\begin{abstract}
\flow is a recently proposed policy-fusion framework based on differential geometry. While \flow has demonstrated promising performance, it requires the user to provide sensible subtask policies as Riemannian motion policies (RMPs: a motion policy and an importance matrix function), which can be a difficult design problem in its own right. We propose \newflow, a variation of \flow, to address this issue. \newflow supplements \flow with weight functions that can hierarchically reshape the Lyapunov functions of the subtask RMPs according to the current configuration of the robot and environment. This extra flexibility can remedy imperfect subtask RMPs provided by the user, improving the combined policy's performance. These weight functions can be learned by back-propagation. Moreover, we prove that, under mild restrictions on the weight functions, \newflow always yields a globally Lyapunov-stable motion policy. This implies that we can treat \newflow as a structured policy class in policy optimization that is guaranteed to generate stable policies, even during the immature phase of learning. We demonstrate these properties of \newflow in imitation learning experiments both in simulation and on a real-world robot.
\end{abstract}

\vspace{-4mm}
\keywords{Reactive motion generation, Structured end-to-end learning}


\vspace{-2mm}
\section{Introduction}\label{sec:intro}
\vspace{-3mm}

Motion planning and control are core techniques to robotics~\citep{urmson2008autonomous,johnson2015team,correll2018analysis}.
Ideally a good algorithm must be both computationally efficient and capable of navigating a robot safely and stably across a wide range of environments. 
Several systems were recently proposed to address this challenge~\citep{2017_rss_system,Mukadam-ICRA-17,cheng2018rmpflow} through closely integrating planning and control techniques. 
In particular, \flow~\citep{cheng2018rmpflow} is designed to combine reactive policies~\citep{khatib1987unified,Nakanishi_IJRR_2008,Peters_AR_2008,IjspeertDMPs2013,lo2016virtual} and planning~\citep{RIEMORatliff2015ICRA}.
%
Based on differential geometry, \flow offers a unified treatment of the nonlinear geometries arising from a robot's internal kinematics and task spaces (e.g. environments with obstacles). 
Given user-provided subtask motion policies expressed in the form of Riemannian Motion Policies (RMPs)~\citep{ratliff2018riemannian} (i.e. a second-order motion policy along with a matrix function that acts as a directional importance weight), \flow can synthesize a global motion policy for the full task in an efficient and geometrically consistent manner
%
and has desirable properties
 such as stability and being coordinate-free~\citep{cheng2018rmpflow}.

\flow has been successfully applied in many applications 
\cite{meng2019NeuralAutoNavigation,paxton2019RLDS,sutanto2019TactileServoing,li2019MultiAgentRMPs,li2019stable}. But
\flow is not perfect. Despite its advancement, practical usage difficulties remain. For instance, the user must provide RMPs with matrix functions that properly describe the characteristics of subtask motion policies in order to build an effective \flow system.
Otherwise, the final global policy may have unsatisfactory performance, though still being geometrically consistent (with respect to some meaningless geometric structure).
This poses a challenge for practitioners who are inexperienced in control systems, or for designing policies of tasks where the full state is hard to describe. 

In this paper we introduce a hierarchical Lyapunov function reshaping scheme into \flow to remedy the requirement of providing high-quality subtasks RMPs from the user. 
The modified algorithm, called \newflow, adds a set of multiplicative weight \emph{functions} in the policy fusion step of \flow, which can be manually parametrized or modeled by function approximators (like neural networks). 
In a high level, these weight functions let \newflow adapt between multiple versions of \flow according to the robot's configuration and the environment. (\flow is \newflow with constant weights.)
Therefore, an immediate benefit of our new algorithm is the extra design flexibility added to \flow. 
Compared with \flow, \newflow allows the user to start with simpler subtask RMPs and gradually build up more complex behaviors through the use of weight functions.

Interestingly, these weight functions in general do not just linearly combine outputs of motion policies as in~\citep{slotine1991general,arkin2008governing}. Instead they {hierarchically} reshape the inherent Lyapunov functions 
of the provided subtask policies, overall giving a nonlinear effect on the global policy \newflow creates. %
We prove that \newflow produces a policy that is Lyapunov-stable with respect to this reshaped Lyapunov function given by the weight functions. Therefore, the overall the system is stable, as long as the weight functions are non-negative and non-degenerate.

These properties suggest that we can treat \newflow as a structured policy class in reinforcement/imitation learning and optimize the weight functions to improve the combined policy's performance.
%
%
Importantly, as \newflow remains stable under minor restriction on weight functions, we arrive at a policy class that is guaranteed to be stable, even during the immature phase of learning. 
Thus, \newflow is suitable for learning with safety constraints; 
for example, we can ensure that certain safe policies (like collision avoidance) are the only ones activated when the robot is facing extreme conditions.
These theoretical properties of \newflow are verified in imitation learning tasks, in both simulations and on a real-world robot (Figure~\ref{fig:franka}). Not only did \newflow learn to mimic the expert policy, but it also yielded stable policies throughout the learning.

\begin{figure}[!t]
	\centering
	\begin{subfigure}[b]{0.45\linewidth}
		\centering
		\includegraphics[trim={0 0 0 0},clip,width=0.65\linewidth]{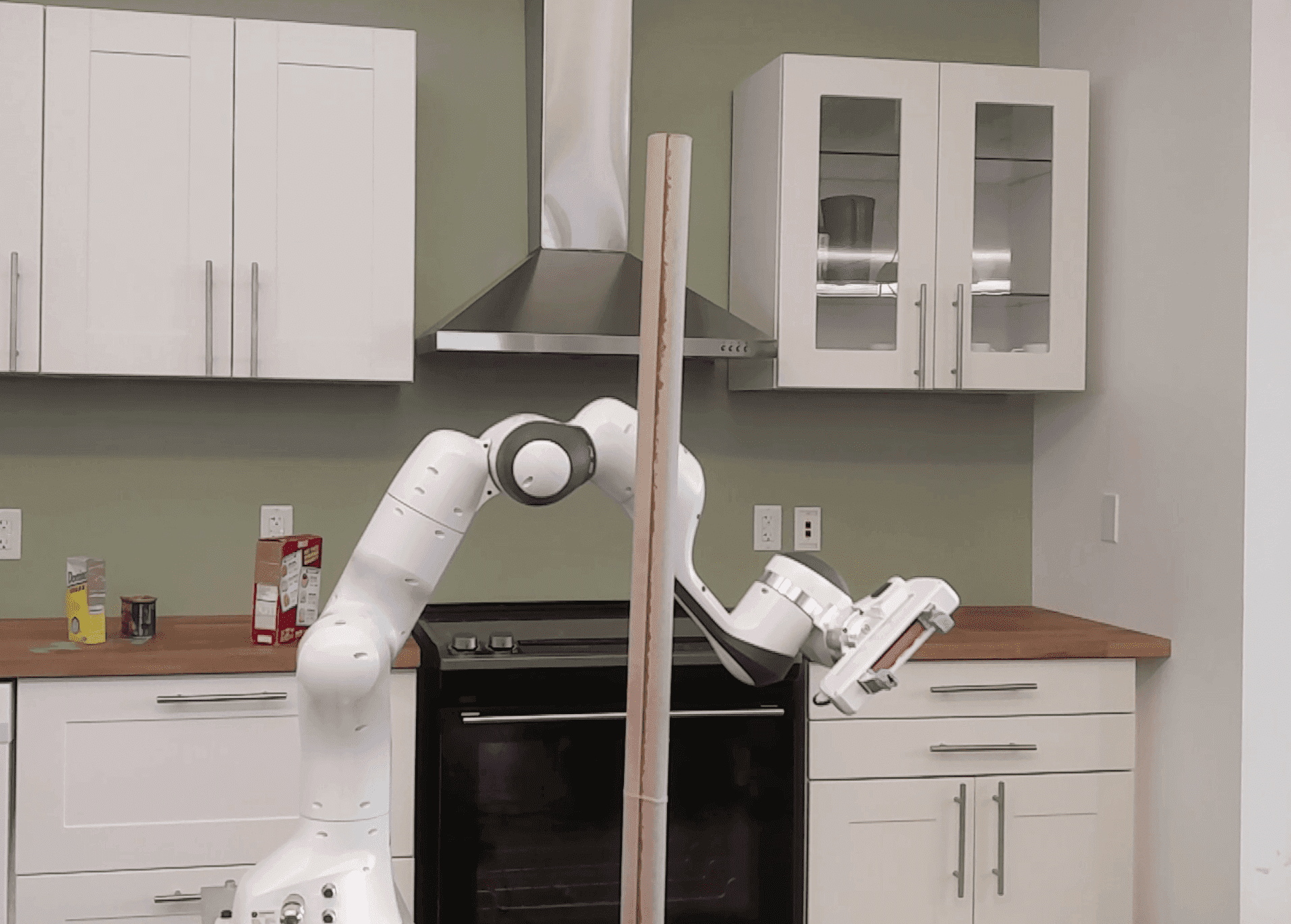}
		\caption{\small A snapshot of the experiment.}
		\label{fig:franka_real}
	\end{subfigure}
	\begin{subfigure}[b]{0.45\linewidth}
		\centering
		\includegraphics[trim={0 0 0 0},clip,width=0.8\linewidth]{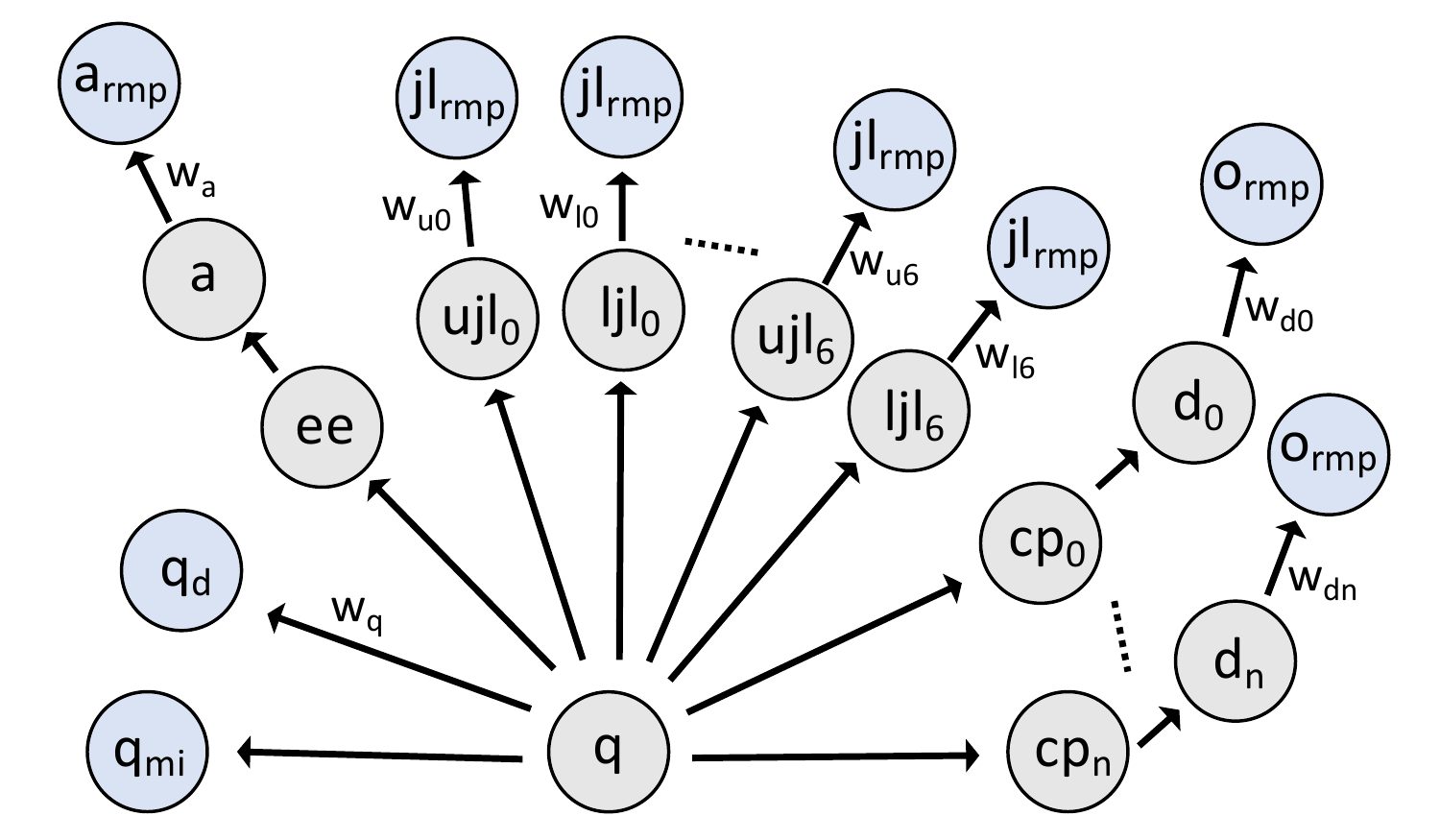}
		\caption{\small The \newtree used for the Franka robot.}
		\label{fig:franka_tree}
	\end{subfigure}
	\vspace{-2mm}
	\caption{\small Franka robot navigating around an obstacle using \newflow with the \newtree.
	Gray nodes show task spaces, blue nodes show subtask RMPs, and weight functions are shown on the respective edges where they are defined.
	See Section \ref{sec:franka exps} for details.}
	\label{fig:franka}
	\vspace{-6mm}
\end{figure}

\textbf{Notation\quad}
We use boldface to denote vectors and matrices. For derivatives, we use both the symbols, $\nabla$ and $\partial$, which are transpose to each other: for $\x \in \R^m$ and a differential map $\y:\R^m \to \R^n$, we write $\nabla_\x \y(\x) = \partial_\x \y(\x)^\t \in \R^{m \times n}$.  For convenience we use  $[\cdot]_{\cdot}^\cdot$ to denote horizontal concatenation in composing a matrix; for example, we write $\M = [\mbb_i]_{i=1}^m \in \R^{m\times m}$ for $\mbb_i \in \R^m$. We use $\R^{m\times m}_{+}$ to denote symmetric, positive semi-definite matrices in $\R^{m\times m}$. 
We  assume all manifolds and maps are sufficiently smooth. Coordinates of manifolds mentioned here will be assumed local.

\vspace{-4mm}
\section{Background}
\vspace{-2mm}

\subsection{Motion Policies}
\vspace{-2mm}

We treat a robot's configuration as a point on some smooth manifold and model its motion through differential equations. 
Assume the robot has been feedback linearized. We are interested in motions that can be described by second-order differential equations. We call these differential equations, \emph{motion policies}, as they essentially define how the robot reacts given the current state (i.e. the configuration and the velocity).  Suppose the robot lives on a $d$-dimensional manifold $\CC$ (the configuration space) with coordinate $\q \in \R^d$. We say a map $\pi$ is a {motion policy} on $\CC$ if the robot travels according to the differential equation $\qdd = \pi(\q, \qd)$, where $\hspace{0.2ex}\dot{}\hspace{0.2ex} $ denotes time derivative. 
%


While motion policies can be specified directly on the configuration space $\CC$, it is often more natural to define them indirectly on the \emph{task space} $\TT$ (another manifold) that describes the target application and then transform them back to the configuration space $\CC$. 
This is the central idea underlined in operational space control~\citep{khatib1987unified}.
For instance, suppose $\TT$ has a coordinate $\x$ that is related to $\CC$ through a task map $\psi$ (i.e. $\x = \psi(\q)$). 
A popular way to design motion policies is through the analogy of a mass-spring-damper system~\cite{khatib1987unified,Nakanishi_IJRR_2008,Peters_AR_2008,IjspeertDMPs2013}. These policies can be written in the task space $\TT$ as 
\begin{align} \label{eq:mass-spring-damper system}
\hspace{-2mm}\Mb(\x)\xdd + \C(\x,\xd)\xd = -\Kb(\x-\x_g) -\B\xd 
\end{align}
where 
$\Mb(\x)\succ 0 $ is the inertia matrix (this inertia might not necessarily be the physical inertia of the robot), $\C(\x,\xd)\xd$ is the Coriolis term with respect to $\Mb$, $\Kb\succeq0$ is the stiffness matrix, $\B\succ0$ is the damping matrix, and $\x_g$ is the goal in $\TT$. 
Using~\eqref{eq:mass-spring-damper system}, the robot's behavior can be easily understood: it travels toward $\x_g$ along a trajectory regulated by  damper $\B$. 


\vspace{-2mm}
\subsection{\flow}\label{sec:tree}
\vspace{-2mm}

However, specifying a global task-space policy, like above, can sometimes still be a daunting task, as the task requirement becomes complicated.
\flow is designed to address this issue~\cite{cheng2018rmpflow}. Rather than asking the user to provide a \emph{global} task-space policy, \flow asks for only motion policies for \emph{subtasks} of the original problem. This potentially can be much simpler.
For instance, directly designing a reaching policy for a cluttered environment is non-trivial, but individually specifying policies for obstacle-free goal reaching and collision avoidance is more straightforward.

Inspired by geometric control theory~\citep{bullo2004geometric}, \flow provides a rigorous framework for policy fusion 
with theoretical guarantees, such as stability and geometric consistency.
In implementation, \flow is realized by a data structure called \emph{\tree}, and a set operations called \emph{\algebra}.
Below we highlight major features of each component.

\textbf{\tree\quad}
An \tree (e.g. \cref{fig:franka_tree} but without the weights $w_{\cdot}$) is a directed tree, which expresses the task map $\psi$ as a sequence of basic maps. 
The \tree serves two major purposes: (i) it provides a language for the user to specify the connections between different subtasks, and (ii) it allows \flow to reuse those basic computations inside $\psi$ to achieve efficient policy fusion.
In the \tree, each node represents an RMP and its state; and each edge represents a transformation between manifolds in the user given decomposition of $\psi$. 
Particularly, the leaf nodes consist of the user-defined subtask RMPs, and the root node maintains the RMP of the global policy $\pi$ on $\CC$.

\tree uses RMP~\citep{ratliff2018riemannian} to describe motion policies on manifolds.
Consider an $m$-dimensional manifold $\MM$ with coordinate $\x \in \R^m$ and a motion policy $\ab$ on $\MM$ (i.e. $\xdd = \ab(\x,\xd)$). An RMP pairs the motion policy $\ab$ with an \textit{abstract} inertia matrix $\Mb(\x,\xd) \in \R^{m\times m}_+$, a function of \emph{both} $\x$ and $\xd$ that describes the directional importance of $\ab$ at the current state $(\x,\xd)$ (see~\cite{cheng2018rmpflow} for details). 
The RMP of $\ab$ can be written in the canonical form $(\ab, \M)^\MM$ or in the natural form $[\f, \M]^\MM$, in which $\f = \M \ab$ is called the force map. Note that $\f$ and $\Mb$ are not necessarily physical quantities, and that the motion policy in an RMP is not necessarily in the form of~\eqref{eq:mass-spring-damper system}.



\textbf{\algebra\quad}
\flow uses the \algebra to combine the subtask policies at leaf nodes into a global policy on the configuration space at the root node. \algebra consists of three operators:

(i) \pushforward propagates the state $(\x,\xd)$ of a node in the \tree to update the states of its $K$ child nodes. The state of its $i$th child node is computed as $(\y_i, \yd_i) = (\psi_{i}(\x) , \J_i (\x) \xd )$, where $\psi_{i}$ is the transformation $\y_i = \psi_{i}(\x)$ and $\J_i = \partial_\x \psi_{i}$ is the Jacobian matrix.

(ii) \pullback propagates the RMPs from the $K$ child nodes to the parent node as  $[\f, \M ]^\MM$ with
\begin{align} \label{eq:pullback}
	\f =\textstyle \sum_{i=1}^{K} \J_i^\t (\f_{i} - \M_i \dot{\J}_i \xd) 
	\qquad\text{and}\qquad
	\M =\textstyle \sum_{i=1}^{K} \J_i^\t \M_i \J_i
\end{align}
where  $[\f_{i}, \M_i]^{\NN_i}$  is the RMP of the $i$th child node in the natural form.

(iii) \resolve maps an RMP from its natural form $[\f, \M]^{\MM}$ to its canonical form $(\ab, \M)^{\MM}$ with $\ab = \M^{\dagger} \f$, where $\dagger$ denotes Moore-Penrose inverse. 

To compute the global policy $\pi$ at time $t$, \flow first performs a forward pass by recursively calling \pushforward. 
Then it performs a backward pass by recursively calling \pullback and computes $[\fb_{\texttt{r}}, \M_{\texttt{r}}]^\CC$ at the root. 
Finally, the global policy $\pi=\ab_{\texttt{r}}$ is generated by using \resolve. 
%
%
Loosely speaking, the global policy $\pi$ can be viewed as a weighted combination of the subtask policies. This can be seen by rewriting~\eqref{eq:pullback} as $\ab = \M\f = (\sum_{i=1}^{K} \J_i^\t \M_i \J_i)^{-1} \J_i^\t (\M_i\ab_{i} - \M_i \dot{\J}_i \xd)$ (which is linear combination of child policies $\ab_i$ plus some curvature correction due to $\dot{\Jb}_i$).

\vspace{-1mm}
\subsection{Theoretical Properties of \flow and GDSs} \label{sec:GDS}
\vspace{-1mm}

\flow is proved to be Lyapunov stable and coordinate-free, when the subtask policies belong to Geometric Dynamical Systems (GDSs)~\cite{cheng2018rmpflow}. 
GDSs are a family of dynamical systems on manifolds that generalizes~\eqref{eq:mass-spring-damper system} to have \emph{velocity-dependent} inertias.
Let $\MM$ be an $m$-dimensional manifold with coordinate $\x \in \R^m$. Let $\Gb: \R^m \times \R^m \to \R^{m\times m}_{+}$  be a \emph{metric} matrix, $\B: \R^m \times \R^m \to \R^{m\times m}_{+}$ be a \emph{damping} matrix, and $\Phi: \R^m \to \R$ be a \emph{potential} function, which is lower bounded.
A dynamical system on $\MM$ is said to be a \emph{GDS} $(\MM, \Gb, \B, \Phi)$ if it follows
\begin{align} \label{eq:GDS}
\Mb(\x,\xd)  \xdd 
+ \bm\xi_{\G}(\x,\xd)  = - \nabla_\x \Phi(\x) - \Bb(\x,\xd)\xd,
\end{align}
in which  
$\Mb(\x,\xd)  \coloneqq\Gb(\x,\xd) + \bm\Xi_{\G}(\x,\xd)$, $\bm\Xi_{\G}(\x,\xd) \coloneqq \textstyle \frac{1}{2} \sum_{i=1}^m  \dot{x}_i \partial_{\xd} \gb_{i}(\x,\xd)$, 
$\bm\xi_{\G}(\x,\xd) \coloneqq\textstyle  \sdot{\Gb}{\x}(\x,\xd) \xd - \frac{1}{2} \nabla_\x (\xd^\t \Gb(\x,\xd) \xd)$, 
and $\sdot{\Gb}{\xb}(\x,\xd) \coloneqq  [\partial_{\x}  \gb_{i} (\x,\xd) \xd]_{i=1}^m$.
The term $\Mb$ is again called the inertia matrix, despite being a function of both $\x$ and $\xd$.
The \emph{curvature} terms $\bm\Xi(\x,\xd)$ and $\bm\xi(\x,\xd)$ are generated from the dependency of $\Gb(\x,\xd)$ on $\x$ and $\xd$;
if $\G(\x,\xd) = \G(\x)$, then $\G(\x) = \M(\x)$ and  $\bm\xi_{\Gb}(\x,\xd) = \Cb(\x,\xd) \xd$ in~\eqref{eq:mass-spring-damper system}. 
In view of this, a GDS extends \eqref{eq:mass-spring-damper system} to have general potentials and velocity-dependent metrics, which is useful in modeling collision avoidance behaviors~\cite{cheng2018rmpflow}.


The behavior of a GDS $(\MM, \Gb, \B, \Phi)$ is characterized by the Lyapunov function
\begin{align} \label{eq:energy}
\textstyle
V(\x, \xd) = \frac{1}{2} \xd^\t \G(\x,\xd) \xd + \Phi(\x). 
\end{align}
It can be shown that the stability property of \flow is governed by a Lyapunov function in a similar form~\cite{cheng2018rmpflow}, when the leaf-node policies are GDSs. An RMP $(\ab, \Mb)^\MM$ is a GDS if its motion policy is $\ab = \M(\x,\xd)^{-1}(- \nabla_\x \Phi(\x) - \Bb(\x,\xd)\xd-\bm\xi_{\G}(\x,\xd))$.
\begin{theorem} \label{th:flow property}
\textnormal{\cite{cheng2018rmpflow} }
Suppose an \tree has $K$ leaf nodes of GDSs $(\TT_k, \Gb_k, \B_k, \Phi_k)$ with Lyapunov function $V_k$ in~\eqref{eq:energy}, for $k=1,\dots,K$.
Let  
$
V_{\texttt{r}} = \sum_{k=1}^{K} V_k
$
be a Lyapunov candidate. 
\begin{enumerate}\vspace{-3mm}
\item 	If $\M_{\texttt{r}}$ of the root-node RMP on $\CC$ is positive definite, then
$ \dot{V}_r = - \sum_{k=1}^{K} \xd_k^\t \B_k \xd_k \leq 0 $.
\vspace{-2mm}
\item If further $\sum_{k=1}^{K} \J_k^\t \Gb_k \J_k \succ 0 $ and  $\sum_{k=1}^{K} \J_k^\t \B_k \J_k \succ 0$, the system converges forwardly to $\{(\q,\qd) : \nabla_\q \Phi_{\texttt{r}}(\q) = 0 , \qd = 0 \}$, where $\J_k = \partial_\q \x_k$ and $\Phi_{\texttt{r}}(\q) = \sum_{k=1}^{K} \Phi_k(\x_k(\q))$. 
\end{enumerate}\vspace{-1mm}
\end{theorem}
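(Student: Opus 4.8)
The plan is to collapse the entire \tree into a single \pullback from the leaves to the root, to record an energy identity for each leaf GDS that holds along \emph{any} curve (not only that GDS's own trajectory), and then to use the structure of the root dynamics to cancel the curvature and force terms. First I would prove a \emph{compositionality lemma}: recursively applying \pullback along the tree produces at the root exactly the globally pulled-back RMP
\begin{align*}
\M_{\texttt{r}} = \textstyle\sum_{k=1}^K \J_k^\t \M_k \J_k,
\qquad
\fb_{\texttt{r}} = \textstyle\sum_{k=1}^K \J_k^\t(\f_k - \M_k \dot\J_k \qd),
\end{align*}
where $\J_k = \partial_\q \x_k$ is the Jacobian of the composite map from the root $\CC$ to leaf $\TT_k$ and $\dot\J_k$ is its time derivative. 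This follows by induction on the tree: at an internal node the chain rule gives $\J_{\textnormal{root}\to\textnormal{leaf}} = \J_{\textnormal{root}\to\textnormal{node}}\,\J_{\textnormal{node}\to\textnormal{leaf}}$, so the quadratic forms $\J^\t\M\J$ compose, while the product rule for $\dot\J$ shows the curvature corrections $-\M\dot\J\qd$ accumulate additively; summing over children at each node reproduces \eqref{eq:pullback} one level higher. I expect this to be the main obstacle, because the internal nodes are \emph{not} themselves GDSs, so one cannot reason about them through a GDS energy; one must track the natural-form RMP-algebra honestly and verify that the $\dot\J$ bookkeeping is consistent across levels.

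Next I would establish the GDS energy identity: for a GDS $(\MM,\G,\B,\Phi)$ with $V$ as in \eqref{eq:energy}, a direct calculation using $\M = \G + \bm\Xi_{\G}$ and the definitions of $\bm\Xi_{\G},\bm\xi_{\G}$ gives, along \emph{any} differentiable curve,
\begin{align*}
\dot V = \xd^\t\big(\M\xdd + \bm\xi_{\G} + \nabla_\x\Phi\big),
\end{align*}
where the velocity-dependent part of $\tfrac12\tfrac{\der}{\der t}(\xd^\t\G\xd)$ matches $\xd^\t\bm\Xi_{\G}\xdd$ and its $\x$-dependent part matches $\xd^\t\bm\xi_{\G}$. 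Applying this to each leaf with $\xd_k = \J_k\qd$ and $\xdd_k = \J_k\qdd + \dot\J_k\qd$, substituting the GDS force $\f_k = -\nabla_{\x_k}\Phi_k - \B_k\xd_k - \bm\xi_{\G_k}$, and summing, I would collect the terms into $\dot V_{\texttt{r}} = \qd^\t(\M_{\texttt{r}}\qdd - \fb_{\texttt{r}}) - \sum_{k}\xd_k^\t\B_k\xd_k$, using the compositionality lemma to recognize $\M_{\texttt{r}}$ and $\fb_{\texttt{r}}$. Since $\M_{\texttt{r}}\succ0$, the root policy obeys $\M_{\texttt{r}}\qdd = \fb_{\texttt{r}}$, the first parenthesis vanishes, and $\dot V_{\texttt{r}} = -\sum_{k}\xd_k^\t\B_k\xd_k \le 0$ because each $\B_k\succeq0$, proving part 1.

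For part 2 I would invoke LaSalle's invariance principle. The condition $\sum_k\J_k^\t\G_k\J_k\succ0$ makes the kinetic term $\tfrac12\qd^\t(\sum_k\J_k^\t\G_k\J_k)\qd$ positive definite in $\qd$, so together with the lower-boundedness of $\Phi_{\texttt{r}}$ the function $V_{\texttt{r}}$ is a legitimate Lyapunov function whose sublevel sets confine the trajectory. The condition $\sum_k\J_k^\t\B_k\J_k\succ0$ gives $\dot V_{\texttt{r}} = -\qd^\t(\sum_k\J_k^\t\B_k\J_k)\qd$, which vanishes if and only if $\qd=0$; hence $\{\dot V_{\texttt{r}}=0\} = \{\qd=0\}$. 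On the largest invariant subset, $\qd\equiv0$ forces $\qdd=0$, so $\fb_{\texttt{r}}=\M_{\texttt{r}}\qdd=0$. Evaluating $\fb_{\texttt{r}}$ at $\qd=0$, the terms $\M_k\dot\J_k\qd$ and the curvature $\bm\xi_{\G_k}$ (homogeneous of degree two in $\xd_k$) both vanish, leaving $\fb_{\texttt{r}} = -\sum_k\J_k^\t\nabla_{\x_k}\Phi_k = -\nabla_\q\Phi_{\texttt{r}}$ by the chain rule. Thus $\nabla_\q\Phi_{\texttt{r}}=0$ on the invariant set, and forward convergence to $\{(\q,\qd):\nabla_\q\Phi_{\texttt{r}}(\q)=0,\ \qd=0\}$ follows.
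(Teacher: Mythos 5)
Your proof is correct, but it follows a genuinely different route from the one the paper leans on. The paper never proves Theorem~\ref{th:flow property} from scratch: it cites \cite{cheng2018rmpflow}, and the machinery recalled in Appendix~\ref{app:proof} shows what that proof looks like --- Lemma~\ref{lm:consistency} establishes that \pullback maps \emph{structured GDSs} to structured GDSs, one inducts down the \tree so that every node, including the root, itself follows a structured GDS, and stability then follows from the general fact that a structured GDS satisfies $\dot{V} = -\qd^\t\B\qd$ together with LaSalle. You dispense with structured GDSs altogether: your compositionality lemma collapses the whole tree into a single \pullback with composite Jacobians (your chain-rule/product-rule bookkeeping for $\J_k$ and $\dot\J_k$ is exactly right), and your energy identity $\dot{V} = \xd^\t(\M\xdd + \bm\xi_{\G} + \nabla_\x\Phi)$, valid along \emph{any} curve, is precisely the device that makes this legitimate --- since internal nodes are never assigned dynamics of their own, it does not matter that they are not GDSs; only the root's closed-loop equation $\M_{\texttt{r}}\qdd = \fb_{\texttt{r}}$ is ever invoked. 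The trade-off: the paper's closure lemma costs more up front (the ``structure'' bookkeeping exists to keep curvature terms consistent) but is reusable --- it characterizes every intermediate node, underlies the coordinate-free claims, and is exactly the ingredient the paper generalizes to the weighted \newpullback in Lemma~\ref{lm:single newpullback}, where the correction $\hb_i$ is designed to preserve closure; your direct computation is leaner and self-contained, but would have to be redone from scratch to absorb the $\nabla_\x w_i$ terms of \newflow. Two minor points to tighten: $\bm\xi_{\G_k}$ is not homogeneous of degree two in $\xd_k$ when $\G_k$ is velocity-dependent (it does vanish at $\xd_k = 0$, which is all you actually use), and LaSalle requires precompact forward trajectories, so compactness of sublevel sets of $V_{\texttt{r}}$ is implicitly assumed --- a gap shared with the cited statement, not one you introduced.
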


\vspace{-3mm}
\section{\newflow} \label{sec:new algorithm}
\vspace{-2mm}
%
%

\flow provides a control-theoretic framework for combining subtask policies. However, certain limitations exist. Particularly, it requires the user to provide sensible inertia matrices (cf. \cref{sec:tree}) to describe the subtask policies' characteristics in the leaf-nodes RMPs; failing to do so may result in a global policy with undesirable performance, albeit still being geometrically consistent with the meaningless geometric structure induced by the bad inertia matrices.
%

In this work, we propose a modified algorithm, \newflow, which 
adds extra flexibilities into \flow to address this difficulty. 
The main idea is to introduce an additional set of weight functions as gates to switch on and off the child-node policies in the \tree, 
based on the current state of the robot and the environment. 
These functions can either be designed by hand, or be parameterized as function approximators (like neural networks) which are then learned end-to-end from data (see~\cref{sec:learning}). 
As a result, \newflow can combine simpler/imperfect subtask RMPs into a better global policy, lessening the burden on the user to directly provide high-quality subtasks RMPs.

\newflow modifies \tree and \algebra into \newtree and \newalgebra, respectively.
\newtree 
augments each node in \tree with extra information and each edge with a weight function; \newalgebra 
replaces \pullback with \newpullback. 
Below we define these modifications. In addition, we show that \newflow retains the nice structural properties of \flow: under mild conditions on the weight functions, the global policy of \newflow is Lyapunov stable.
Later in Section~\ref{sec:learning}, we will show how to learn the weight functions in \newflow from data. 

\vspace{-1mm}
\subsection{\newtree and \newalgebra} \label{sec:modifcation}
\vspace{-1mm}
\textbf{Modified node\quad}
In addition to the RMP and its state, each node in \newtree also stores the \emph{values} of a scalar function $L$ 
and the metric matrix $\G$.  
When a leaf-node RMP is a GDS, $\G$ is defined as~\eqref{eq:GDS} and $L = \frac{1}{2} \xd^\t \G \xd - \Phi(\x)$ (analogue of the Lagrangian in mechanical systems).

\textbf{Modified edge\quad}
Each edge in an \newtree has in addition a weight function. 
This weight is a function of the parent-node configuration and some auxiliary state (which describes the task at hand, e.g., the location of the goal in a reaching task). 
 
\textbf{Modified pullback\quad}
We modify \pullback into \newpullback to use the weight functions on edges to combine child-node RMPs. 
For the parent and child nodes given in~\eqref{eq:pullback}, we set instead 
\begin{small}
\begin{align} \label{eq:modifed pullback}
\f = \sum_{i=1}^{K} w_i \J_i^\t (\f_{i} - \M_i \dot{\J}_i \xd) +  \hb_i,
\quad
\M = \sum_{i=1}^{K} w_i \J_i^\t \M_i \J_i, 
\quad
\G = \sum_{i=1}^{K} w_i \J_i^\t \G_i \J_i,
\quad 
L = \sum_{i=1}^{K} w_i  L_i
\end{align}
\end{small}%
where $\hb_i = L_i \nabla_\x w_i-  (\xd^\t \nabla_\x w_i) \J_i^\t \G_i \J_i \xd $.
From~\eqref{eq:modifed pullback}, we see that \newpullback does \emph{not} simply linearly combine child-node motion policies. It adds a correction term $\hb_i$, which is designed to anticipates the change of weighting $w_i$ so that the system remains stable. When applied recursively in policy generation, it would \emph{hierarchically} reshape the Lyapunov functions (see Section~\ref{sec:advantages}).

\vspace{-3mm}
\subsection{Stability}
\vspace{-2mm}
We show \newflow is also Lyapunov stable like \flow.
To state the stability property, let us introduce additional notation to describe the functions in the \newtree. We will use $(i;j)$ to denote the $i$th node in depth $j$ of an \newtree and we use $C_{(i;j)}$ to denote the indices of its child nodes. For example, node $(1;0)$ denotes the root node (also denoted as $\texttt{r}$). In addition,  we will refer to the functions on the edges using the indices of the child nodes, e.g., 
the Jacobian of the transformation to the $i$th node in depth $j$ is denoted as $\J_{(i;j)}$. 
We show the stability property of \newflow when all the leaf nodes are of GDSs, like Theorem~\ref{th:flow property}. The proof is given in Appendix~\ref{app:proof}.

\begin{theorem} \label{th:newflow property}
	Suppose an \newtree has leaf-node policies as GDSs with Lyapunov functions given as~\eqref{eq:energy}. 
	Define  $V_{(i;j)}$,  $\B_{(i;j)}$, and $\Phi_{(i;j)}$
	on the \newtree through the recursion
	\begin{align} \label{eq:recursive law}
	\begin{split}
	\textstyle
	V_{(i;j)} =  \sum_{k\in C_{(i;j)}} w_{(k;j+1)} V_{(k;j+1)}, 
	\quad&\textstyle
	\B_{(i;j)} = \sum_{k\in C_{(i;j)}} w_{(k;j+1)} \J_{(k;j+1)}^\t \B_{(k;j+1)} \J_{(k;j+1)}\\
	&\textstyle \hspace{-20mm}\Phi_{(i;j)}= \sum_{k\in C_{(i;j)}} w_{(k;j+1)} \Phi_{(k;j+1)}
	\end{split}
	\end{align}
	in which the boundary condition is given by the leaf-node GDSs. 
	Let $V_{\texttt{r}}$ be a Lyapunov candidate.
	\begin{enumerate}\vspace{-3mm}
	\item 	If $\M_{\texttt{r}}\succ0$, then $\dot{V}_{\texttt{r}} = - \qd^\t \B_{\texttt{r}} \qd \leq 0  $. 	
	\vspace{-2mm}
	\item 	If further $\G_{\texttt{r}}, \B_{\texttt{r}}\succ0$, the system converges forwardly to $\{(\q,\qd) : \nabla_\q \Phi_{\texttt{r}}(\q) = 0 , \qd = 0 \}$. 
	\end{enumerate}
\end{theorem}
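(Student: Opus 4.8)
The plan is to reduce the statement to the single-GDS stability analysis already available for \flow by showing that the root node produced by \newpullback is itself a GDS whose metric, damping, and potential are exactly the recursively-defined $\G_{\texttt{r}}$, $\B_{\texttt{r}}$, and $\Phi_{\texttt{r}}$. Once this is established, the Lyapunov candidate $V_{\texttt{r}}$ coincides with the GDS energy \eqref{eq:energy}: unrolling the recursion \eqref{eq:recursive law} and using $\xd_{(k;j+1)} = \J_{(k;j+1)}\xd$ together with $\G = \sum_k w_k \J_k^\t \G_k \J_k$ and $\Phi = \sum_k w_k \Phi_k$ gives $V_{(i;j)} = \frac12 \xd^\t \G_{(i;j)} \xd + \Phi_{(i;j)}$, so in particular $V_{\texttt{r}} = \frac12 \qd^\t \G_{\texttt{r}} \qd + \Phi_{\texttt{r}}$. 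The two claims then follow from the standard energy-dissipation identity for GDSs and from LaSalle's invariance principle, respectively.

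The core of the argument is a closure lemma proved by induction on the \newtree from the leaves to the root: if every child of a node is a GDS with metric $\G_i$, damping $\B_i$, potential $\Phi_i$, and Lagrangian $L_i = \frac12 \xd_i^\t \G_i \xd_i - \Phi_i$, then the parent RMP assembled by \newpullback \eqref{eq:modifed pullback} is a GDS with metric $\G = \sum_i w_i \J_i^\t \G_i \J_i$, damping $\B = \sum_i w_i \J_i^\t \B_i \J_i$, and potential $\Phi = \sum_i w_i \Phi_i$. The base case is the hypothesis that the leaves are GDSs. For the inductive step I would first check that the inertia is consistent, i.e. $\M = \sum_i w_i \J_i^\t \M_i \J_i$ equals $\G + \bm\Xi_{\G}$; this holds because $w_i$ and $\J_i$ are $\xd$-independent, so $\partial_{\xd}$ commutes past them, and the chain rule $\partial_{\xd}\G_i = \partial_{\xd_i}\G_i\,\J_i$ turns $\sum_i w_i \J_i^\t \bm\Xi_{\G_i}\J_i$ into exactly $\bm\Xi_{\G}$. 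I would then expand the parent GDS equation $\M\xdd + \bm\xi_{\G} = -\nabla_\x\Phi - \B\xd$ and match it term-by-term against the force $\f$ in \eqref{eq:modifed pullback}, using the unweighted \flow pullback identities to handle the contributions $\J_i^\t(\f_i - \M_i\dot\J_i\xd)$ and the Coriolis/curvature terms that already appear in \flow.

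The main obstacle, and the step deserving the most care, is accounting for the extra terms generated by the state dependence of the weights. Expanding $\nabla_\x\Phi = \sum_i (w_i \J_i^\t \nabla_{\x_i}\Phi_i + \Phi_i \nabla_\x w_i)$ and $\bm\xi_{\G} = \sdot{\G}{\x}\xd - \frac12\nabla_\x(\xd^\t\G\xd)$ produces stray factors of $\nabla_\x w_i$: an $\Phi_i\nabla_\x w_i$ piece from the potential, a $\frac12(\xd^\t\J_i^\t\G_i\J_i\xd)\nabla_\x w_i$ piece from the quadratic form, and a $(\xd^\t\nabla_\x w_i)\J_i^\t\G_i\J_i\xd$ piece from $\sdot{\G}{\x}\xd$. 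I expect these to be precisely the terms collected in the correction $\hb_i = L_i\nabla_\x w_i - (\xd^\t \nabla_\x w_i)\J_i^\t\G_i\J_i\xd$: the $-\Phi_i\nabla_\x w_i$ and $\frac12(\xd^\t\J_i^\t\G_i\J_i\xd)\nabla_\x w_i$ contributions assemble into $L_i\nabla_\x w_i$, while the leftover piece matches the second summand of $\hb_i$. Verifying this cancellation exactly, with correct index bookkeeping for $\sdot{\G}{\x}$ and $\bm\Xi_{\G}$, is what justifies the claim that \newpullback \emph{reshapes} the Lyapunov function rather than breaking the GDS structure.

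Finally, with the root identified as the GDS $(\CC,\G_{\texttt{r}},\B_{\texttt{r}},\Phi_{\texttt{r}})$ and $V_{\texttt{r}}$ its energy, claim 1 is immediate: when $\M_{\texttt{r}}\succ0$ the \resolve step yields the GDS dynamics $\qdd = \M_{\texttt{r}}^{-1}\f_{\texttt{r}}$, for which the energy identity gives $\dot V_{\texttt{r}} = -\qd^\t\B_{\texttt{r}}\qd \le 0$. For claim 2, when additionally $\G_{\texttt{r}},\B_{\texttt{r}}\succ0$ the energy $V_{\texttt{r}}$ is positive definite in $\qd$ with lower-bounded potential, so its sublevel sets are forward invariant and LaSalle's invariance principle applies: $\dot V_{\texttt{r}} = 0$ forces $\qd=0$ (since $\B_{\texttt{r}}\succ0$), and on the largest invariant set where $\qd\equiv0$ we have $\qdd\equiv0$, which substituted into the GDS equation gives $\nabla_\q\Phi_{\texttt{r}}=0$, yielding forward convergence to $\{(\q,\qd):\nabla_\q\Phi_{\texttt{r}}(\q)=0,\ \qd=0\}$.
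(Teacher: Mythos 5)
Your overall architecture (a closure lemma for \newpullback, then energy dissipation plus LaSalle at the root) matches the paper's, and your bookkeeping of the weight-gradient terms is exactly right: the stray $\nabla_\x w_i$ contributions from the potential and from the quadratic form do assemble into $\hb_i = L_i\nabla_\x w_i - (\xd^\t\nabla_\x w_i)\J_i^\t\G_i\J_i\xd$, which is the same computation as the paper's Lemma~\ref{lm:single newpullback}. The genuine gap is the class in which you run the induction. Your closure lemma asserts that \newpullback maps child \emph{GDSs} to a parent \emph{GDS}, but this is false in general: plain GDSs are not closed under pullback along a nonlinear task map when the metrics are velocity-dependent. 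Your inertia check does go through --- velocities transform linearly, $\yd_i = \J_i\xd$, so $\bm\Xi_{\G}$ pulls back consistently --- but the curvature term does not: $\bm\xi_{\G}$, computed by position-differentiation of $\G(\x,\xd) = \sum_i w_i \J_i(\x)^\t\G_i\bigl(\y_i(\x),\J_i(\x)\xd\bigr)\J_i(\x)$, differs in general from the pulled-back child curvatures $\J_i^\t(\bm\xi_{\G_i} + \M_i\dot\J_i\xd)$ (plus your weight-gradient corrections), because $\J_i$ itself depends on $\x$. This mismatch is precisely why \cite{cheng2018rmpflow} introduces \emph{structured} GDSs $(\MM,\G,\B,\Phi)_{\SS}$, which remember the factorization $\G = \J^\t\Hb\J$ and replace $\bm\xi_{\G}$ by $\bm\eta_{\G;\SS}$; the two notions coincide only when the metric is velocity-independent or the manifold is one-dimensional. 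Since Theorem~\ref{th:newflow property} must cover velocity-dependent leaf metrics (the motivating case for GDSs, e.g.\ collision avoidance), your term-by-term matching against the plain GDS equation $\M\xdd + \bm\xi_{\G} = -\nabla_\x\Phi - \B\xd$ fails at the first node sitting above a nonlinear edge, and the quantity you propagate upward is then not the energy of any plain GDS.

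The repair is exactly the paper's route: carry out the induction with structured GDSs. The paper splits each \newpullback into (i) an unweighted \pullback across the nonlinear maps, which by Lemma~\ref{lm:consistency} (RMPflow's Theorem 1) produces structured GDSs $(\MM,\tilde{\G}_i,\tilde{\B}_i,\tilde{\Phi}_i)_{\tilde{\SS}_i}$ already living on the parent manifold, followed by (ii) a weighted combination across \emph{identity} transformations, where your $\hb_i$ cancellation applies verbatim (no $\dot\J$ terms arise since $\J = \I$) and yields the structured GDS with $\G = \sum_i w_i\tilde{\G}_i$, $\B = \sum_i w_i\tilde{\B}_i$, $\Phi = \sum_i w_i\tilde{\Phi}_i$. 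The leaves, being plain GDSs, are structured GDSs with trivial structure, so the base case is unaffected. Since structured GDSs obey the same dissipation identity $\dot V = -\xd^\t\B\xd$, your final steps (claims 1 and 2, via \resolve and LaSalle) go through unchanged once every occurrence of ``GDS'' in your closure lemma is replaced by ``structured GDS.''
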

\cref{th:newflow property} shows that the system is Lyapunov stable with respect to $V_{\texttt{r}}$. To satisfy the conditions required in \cref{th:newflow property}, a sufficient condition is to select leaf-node GDSs with certain monotone metrics~\cite[Theorem 2]{cheng2018rmpflow} and have positive weight functions on edges. Therefore, in addition to the conditions needed by \flow, 
\newflow only imposes mild constraints on the weight functions. This is a useful feature when the weight functions are learned from data, because \cref{th:newflow property} essentially guarantees the output policy is always stable even in the premature stage of learning.

Note that it is straightforward to extend \newtree 
to include, in~\eqref{eq:pullback}, an extra time-varying term that vanishes as $t\to\infty$ (like the one used in DMPs~\cite{IjspeertDMPs2013}) and to consider time-varying potentials (e.g. in tracking applications). We omit discussions about these generalizations due to space limitation.

\vspace{-3mm}
\subsection{Advantages of \newflow over \flow} \label{sec:advantages}
\vspace{-2mm}
\newflow strictly generalizes \flow. When each weight is constant one, \newflow becomes \flow (i.e. \newpullback is the same as \pullback and Theorem~\ref{th:newflow property} reduces to Theorem~\ref{th:flow property}).
More generally, \newflow allows mixing policies through reweighting their Lyapunov functions, while retaining the nice structural properties of \flow, as shown in Theorem~\ref{th:newflow property}.

In comparison, \newflow has a more flexible way to express policies and compose the subtask Lyapunov functions into the Lyapunov candidate $V_{\texttt{r}}$ in~\eqref{eq:new Lyapunov}.  
Whereas Theorem~\ref{th:flow property} uses the simple summation of subtask energies $V_{\texttt{r}} = \sum_{i=1}^{K} V_i$, 
Theorem~\ref{th:newflow property} effectively uses the Lyapunov function
	\begin{align} \label{eq:new Lyapunov}
	\textstyle	V_{\texttt{r}} =  \sum_{k_1\in C_{(1;0)}}  w_{(k_1;1)} 
	\sum_{k_2 \in C_{(k_1;1)}} \dots 
	\sum_{k_{D} \in C_{(k_{D-1};D-1)}} w_{(k_{D};D)} V_{(k_{D};D)}
	\end{align}
for a depth-$D$ \newtree  (cf.~\eqref{eq:recursive law}) and each weight $w_{(i;j)}$ can be a function of the configuration and auxiliary state of the parent of node $(i;j)$. Therefore, from~\eqref{eq:recursive law} and~\eqref{eq:new Lyapunov}, \newflow can be viewed as a form of hierarchical Lyapunov function reshaping scheme along the hierarchy structure induced by the \newtree.
Consequently, the recursive formulation of \newflow allows the user only to provide basic subtask policies and gradually increase
their expressiveness by the weight functions. 
In contrast, using \flow requires directly specifying subtask policies with complicated behaviors.
We include a concrete example to illustrate the benefit of this extra flexibility in \cref{app:flexibility example}.


\vspace{-2mm}
\subsection{Learning \newflow} \label{sec:learning}
\vspace{-2mm}

\begin{figure}[!t]
	\vspace{-6mm}
	\centering
	\begin{subfigure}[b]{0.6\linewidth}
		\centering
		\includegraphics[width=\linewidth,trim={0 5mm 0 0}]{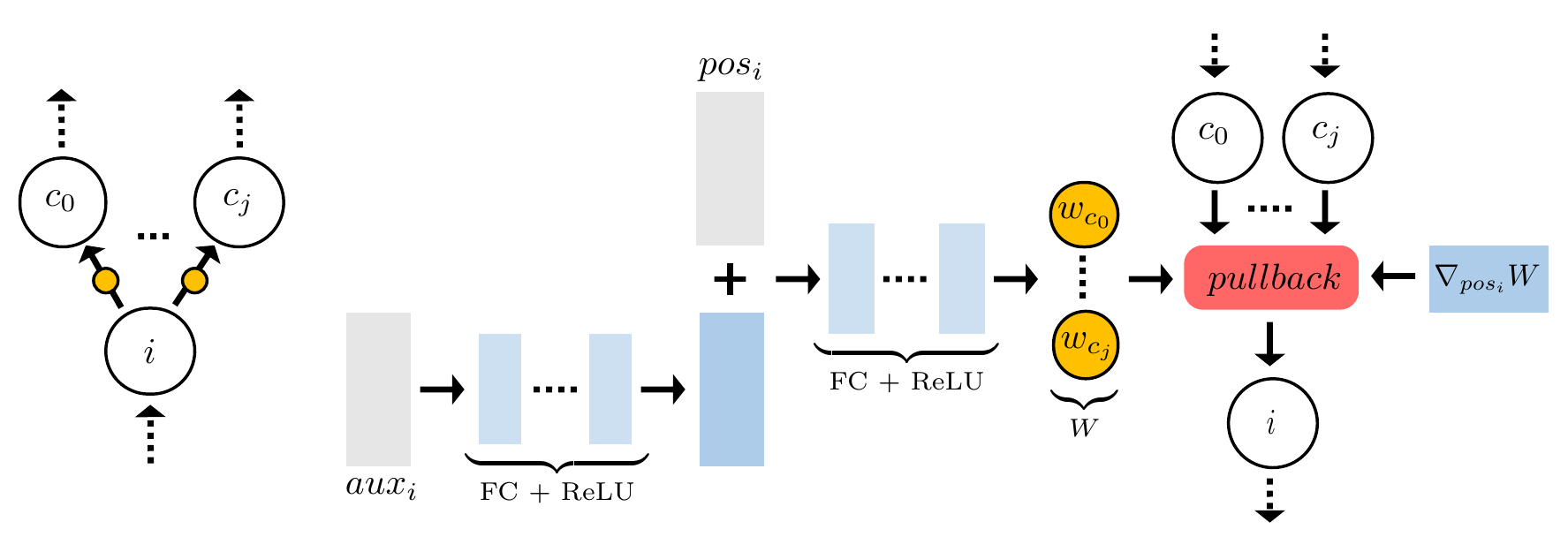}\vspace{-2mm}
		\caption{}
		\label{fig:rmp-flow-net}
		\vspace{-2mm}
	\end{subfigure}
	\begin{subfigure}[b]{0.19\linewidth}
		\centering
		\includegraphics[width=\linewidth,trim={0 6mm 0 0}]{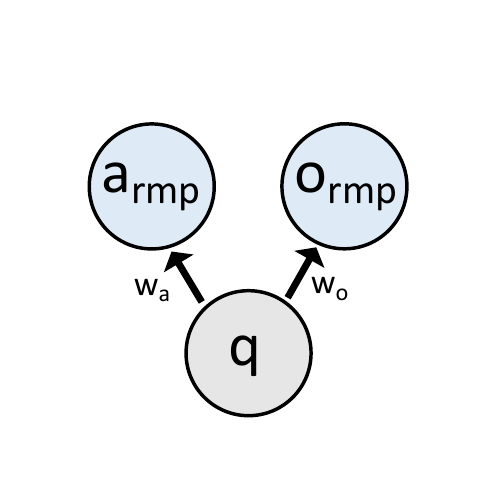}\vspace{-2mm}
		\caption{}
		\label{fig:2d1level}
		\vspace{-2mm}
	\end{subfigure}
	\begin{subfigure}[b]{0.19\linewidth}
		\centering
		\includegraphics[width=\linewidth,trim={0 0 0 0}]{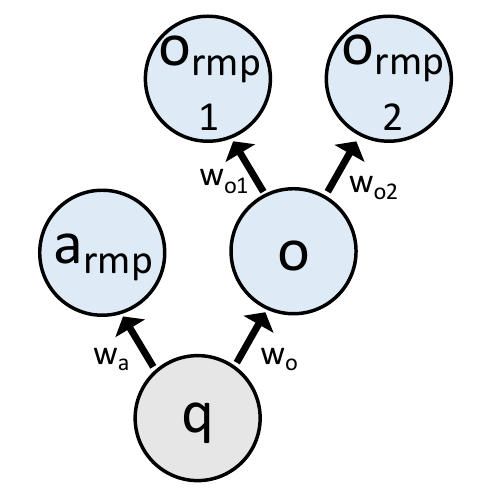}\vspace{-2mm}
		\caption{}
		\label{fig:2d2level}
		\vspace{-2mm}
	\end{subfigure}
	\vspace{-1mm}
	\caption{\small (a) Shows the network used for learning with \newflow, specifically for any node $i$ on the \newtree, with children $c_0, \dots, c_j$. If $i$ is a leaf node, then it is evaluated from the designed RMP policy. The global policy is obtained by applying \resolve on the root node RMP. \newtree used in experiments for (b) \texttt{2d1level} and (c) \texttt{2d2level}.}
	\label{fig:an example weight function}
	\vspace{-6mm}
\end{figure}


We presented a new computational graph, \newflow, which supplements \flow with a set of multiplicative weight functions to achieve extra flexibility in policy fusion. 
In Appendix~\ref{app:learning},  we show these weight functions can be learned by back-propagation, and therefore \newflow can be treated as a parameterized policy class in policy optimization by using computational graph libraries like tensorflow~\cite{abadi2016tensorflow} or pytorch~\cite{paszke2017automatic}.
%
%
Finally, it is important to note that we do not have to learn all the weight functions in a \newtree. If we know that certain leaf-node RMPs have to be turned on, we can adopt a semi-parametric scheme of weight functions. For example, we can design parameterization of the weight functions such that only collision avoidance RMPs are turned on, when the robot is extremely close to an obstacle.  This property is due to the structure of \newtree, which is interpretable, unlike policies purely based on general function approximators. Interpretability allows for prior knowledge (like constraints and preferences) to be easily incorporated into the policy structure.  This feature is particularly valuable for policy learning with safety constraints~\cite{garcia2015comprehensive}. 

\vspace{-3mm}
\section{Experiments}
\vspace{-2mm}

We validate our approach with experiments of imitation learning. The goal is to show that \newflow with an \newtree that is parametrized by randomly initialized neural networks (as in Figure~\ref{fig:an example weight function}) is able to mimic the expert policy's behavior by observing expert demonstrations. This setup simulates the situation where the user of \newflow only provides imperfect subtask policies. We also use these experiments to validate the stability properties of \newflow by studying if the Lyapunov function of the policies generated by \newflow (even the premature ones obtained before learning converges) decay monotonically over time. We perform these experiments with a 2D particle robot and with a Franka Panda 7-DOF robot (video of experiments is available at \href{https://youtu.be/McSrpW-mIq4}{https://youtu.be/McSrpW-mIq4}).

As our aim it not to invent a new imitation learning algorithm, we adopt the most basic approach, behavior cloning~\cite{pomerleau1989alvinn}, in which the demonstrations are purely generated by running the expert policy alone without any active intervention from the learner. The objective of these experiments is to study how well \newflow can recover the behaviors of an expert that is within its effective policy class, and therefore we use a known \newtree with fixed weights as the expert policy. We choose this setting to rule out bias due to mismatches between policy classes, 
because properly handling policy class biases in imitation learning is a non-trivial research question on its own right~\cite{ross2011reduction,ross2014reinforcement,cheng2018convergence,cheng2018fast}. Note that any policy optimization technique can be used to train RMPfusion, including online imitation learning and policy gradient methods, etc.

\vspace{-1mm}
\subsection{2D Robot}
\vspace{-2mm}
We first validate our approach on two problems where a 2D robot is tasked with reaching a goal while avoiding one obstacle (\texttt{2d1level}) or two obstacles (\texttt{2d2level}). The \newtree for these problems are shown in Figure~\ref{fig:2d1level}-\ref{fig:2d2level} and are detailed in Appendix~\ref{app:exp}. The tree structure here is heuristically chosen based on the problem domain, as in RMPflow and typically follows the robot’s kinematic chain and then extends into the workspace and abstract task spaces.
In the \texttt{2d1level} problem, the aim is to show near-perfect recovery of the weights given that the problem is convex in the weight functions. The \texttt{2d2level} problem adds extra complexity to the learning process. It introduces multiplication between weights so the weights cannot be uniquely identified.
The aim here is to show that close-to-expert behavior can still be achieved.

\textbf{Data\quad}
For each problem, the expert policy is generated by the respective \newtree with some fixed assigned weights, which are unknown to the learner. The training data consist of 20 \emph{randomly selected environments} with varying placements and sizes of obstacles. In each environment, the expert is run to generate 50 trajectories from unique initial states, and 60 temporally equidistant data points on each trajectory are recorded. Each data point is a pair of input and output: the input consists of the state (position and velocity) of the 2D particle and the auxiliary state (obstacle location and dimension, goal location) i.e. the meta information about the environment; the output consists of the action (acceleration) as specified by the expert given the input state visited by running the expert policy. Test data are collected by repeating this process with 5 new environments with 10 trajectories in each environment.

\textbf{Unstructured network\quad}
For \texttt{2d2level} we also compare our \newflow \texttt{learner-rmp} with an unstructured neural network \texttt{learner-un}. This is a fully connected feed forward network with similar number of learnable parameters compared to \texttt{learner-rmp}. This network takes robot state and auxiliary state as the inputs, and outputs the acceleration. Our aim with this comparison is to show that an unstructured approach cannot offer any stability or safety guarantees, and with the same amount of data and training underperforms compared to the structured approach.

\textbf{Training\quad}
We use the mean squared error between the action generated by any learner and the action specified by the expert as the loss function for imitation learning.
All learners are trained using RMSprop~\citep{tieleman2012lecture} with a minibatch size of 200 for 20,000 iterations.
The number of iterations were chosen such that learning roughly converged and over-fitting had not happened.

\begin{figure*}[!t]
	\centering
	\begin{subfigure}[b]{0.16\textwidth}
		\centering
		\includegraphics[trim={70 0 70 0},clip,width=1\linewidth]{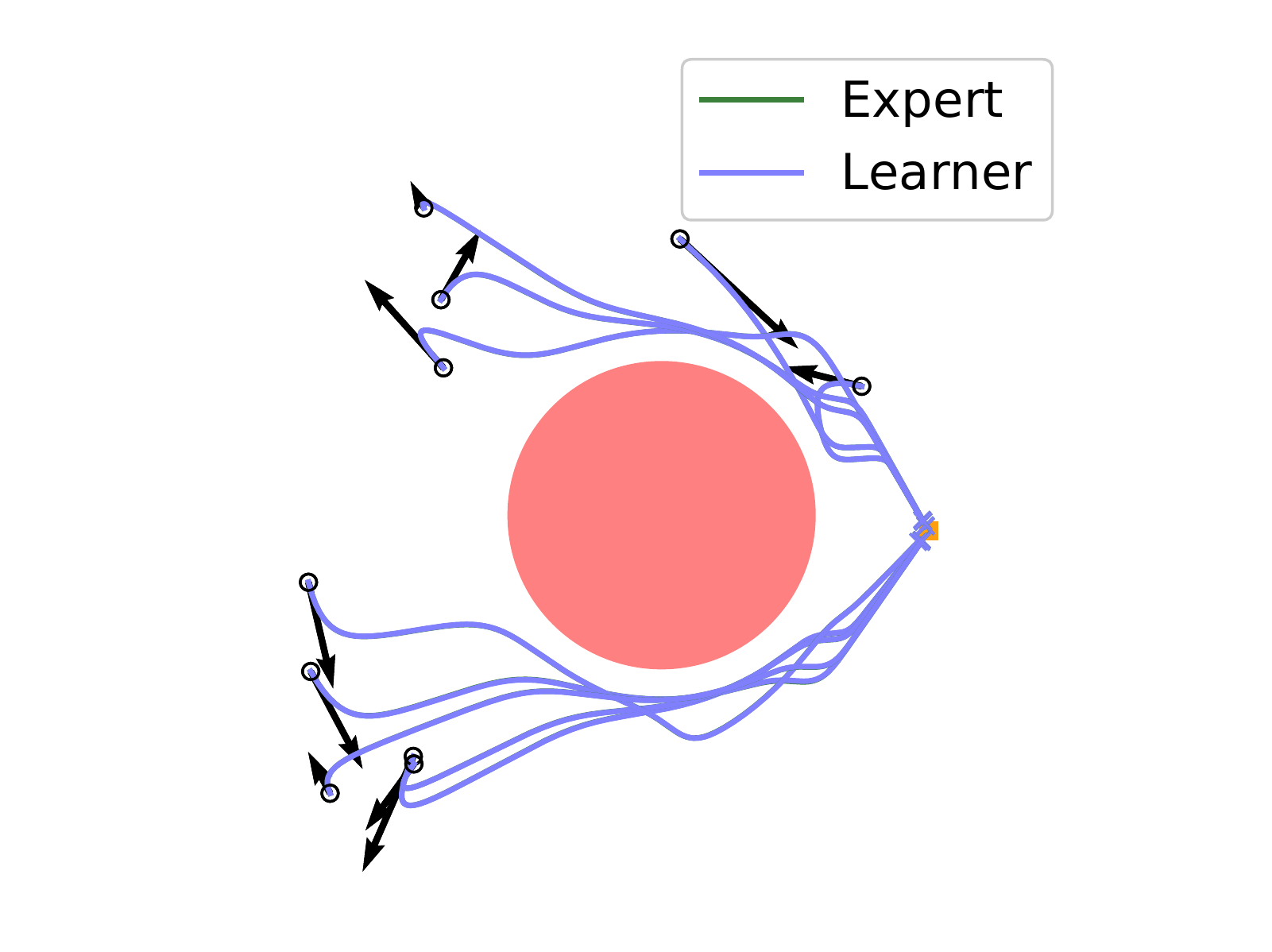}\vspace{-2mm}
		\caption{\texttt{2d1level}}
		\label{fig:2d1l_traj}
	\end{subfigure}
	\begin{subfigure}[b]{0.16\textwidth}
		\centering
		\includegraphics[trim={70 0 70 0},clip,width=1\linewidth]{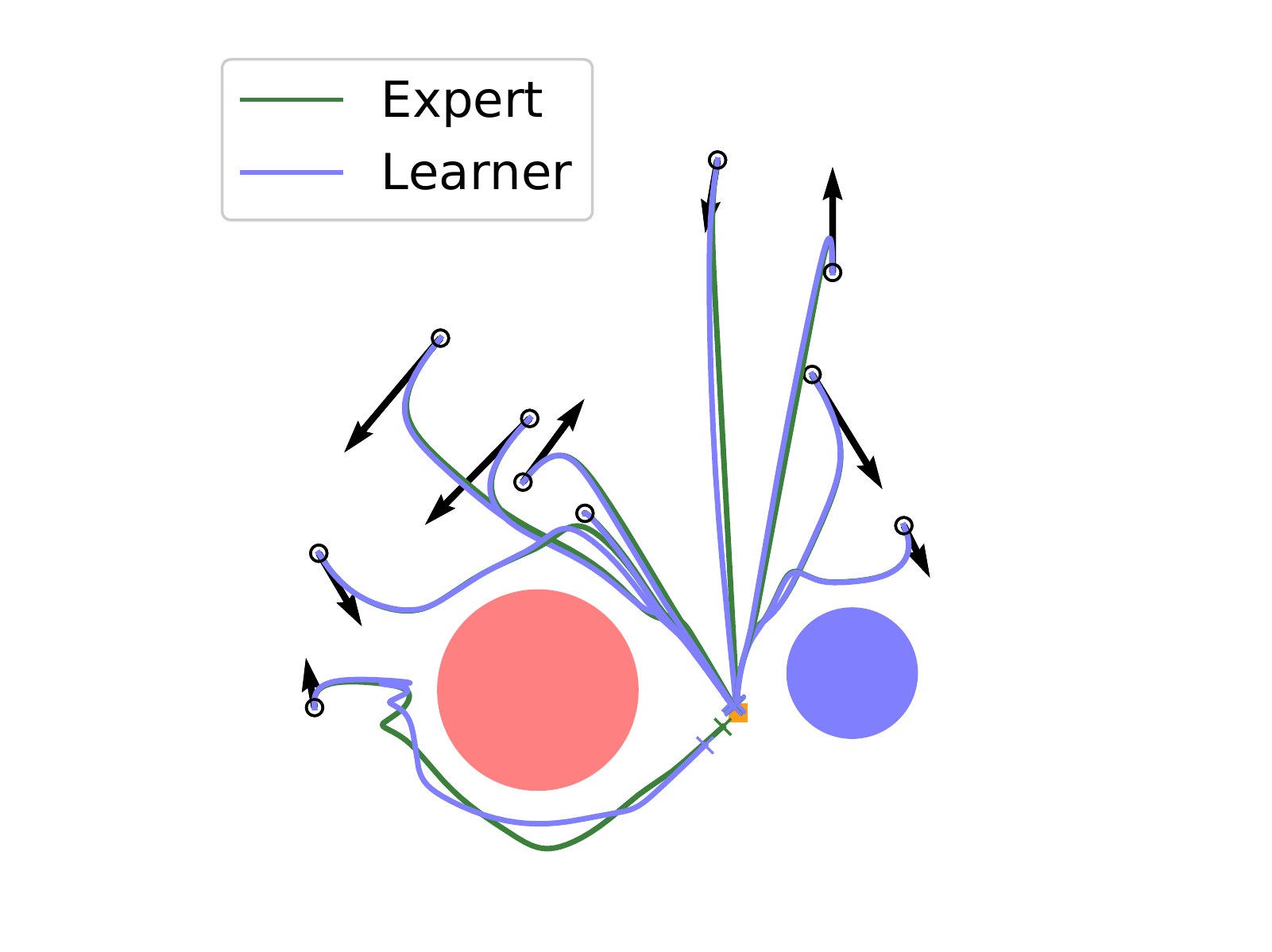}\vspace{-2mm}
		\caption{\texttt{2d2level}}
		\label{fig:2d2l_traj}
	\end{subfigure}
	\begin{subfigure}[b]{0.216\textwidth}
		\centering
		\includegraphics[trim={0 0 0 0},clip,width=1\linewidth]{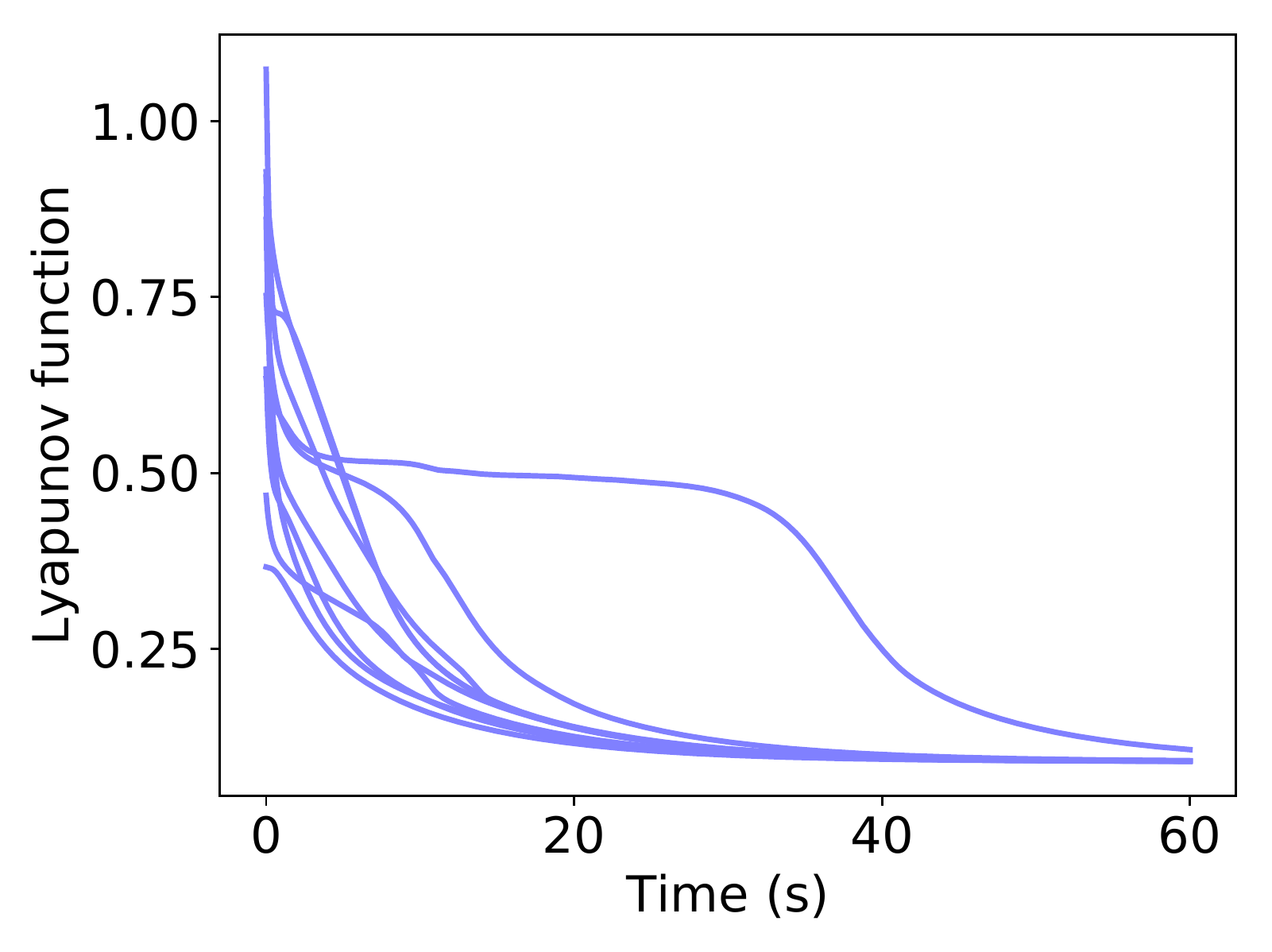}\vspace{-2mm}
		\caption{}
		\label{fig:2d2l_energy}
	\end{subfigure}
	\begin{subfigure}[b]{0.216\textwidth}
		\centering
		\includegraphics[trim={0 0 0 0},clip,width=1\linewidth]{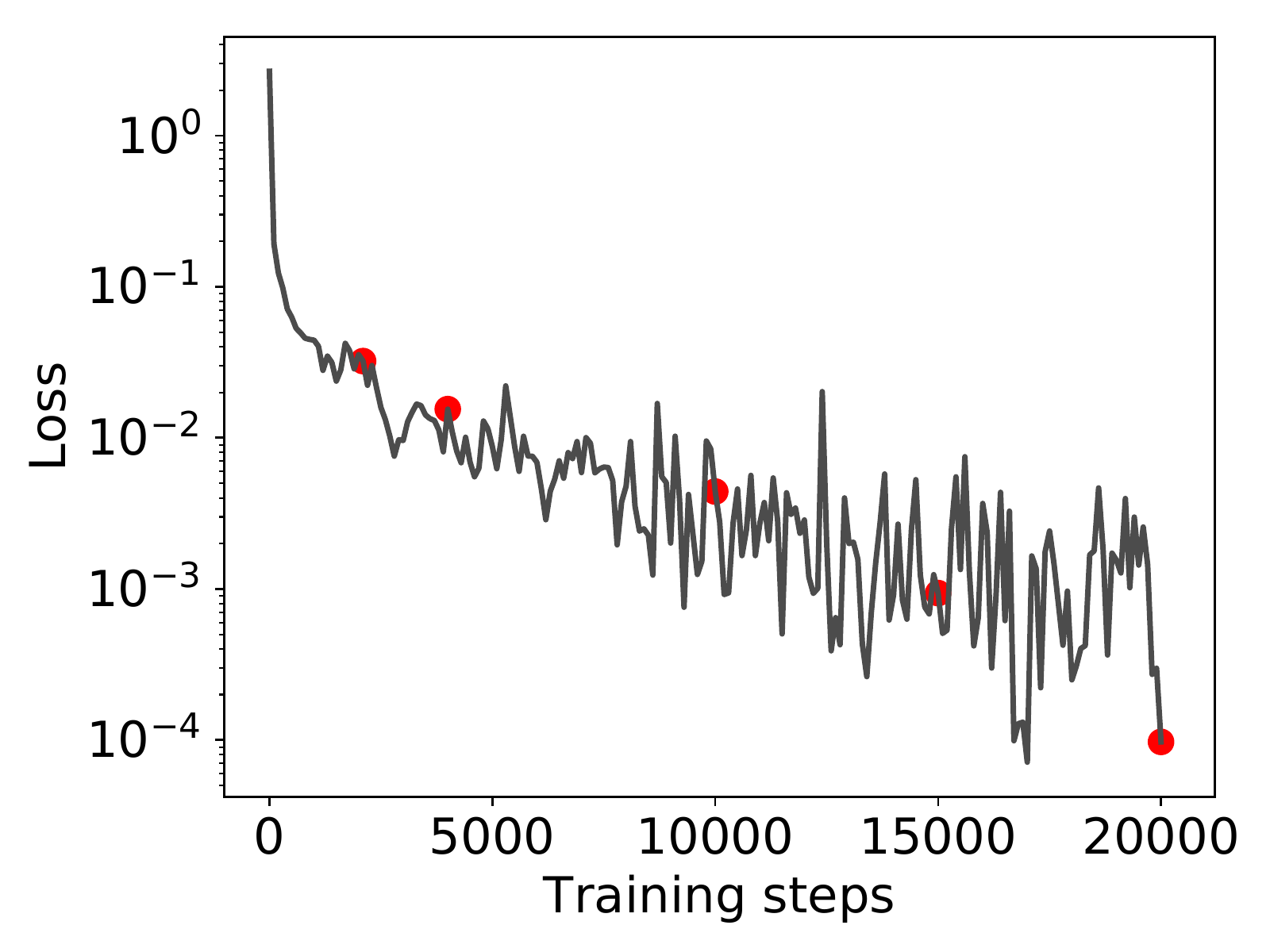}\vspace{-2mm}
		\caption{}
		\label{fig:2d2l_loss}
	\end{subfigure}
	\begin{subfigure}[b]{0.16\textwidth}
		\centering
		\includegraphics[trim={70 0 70 0},clip,width=1\linewidth]{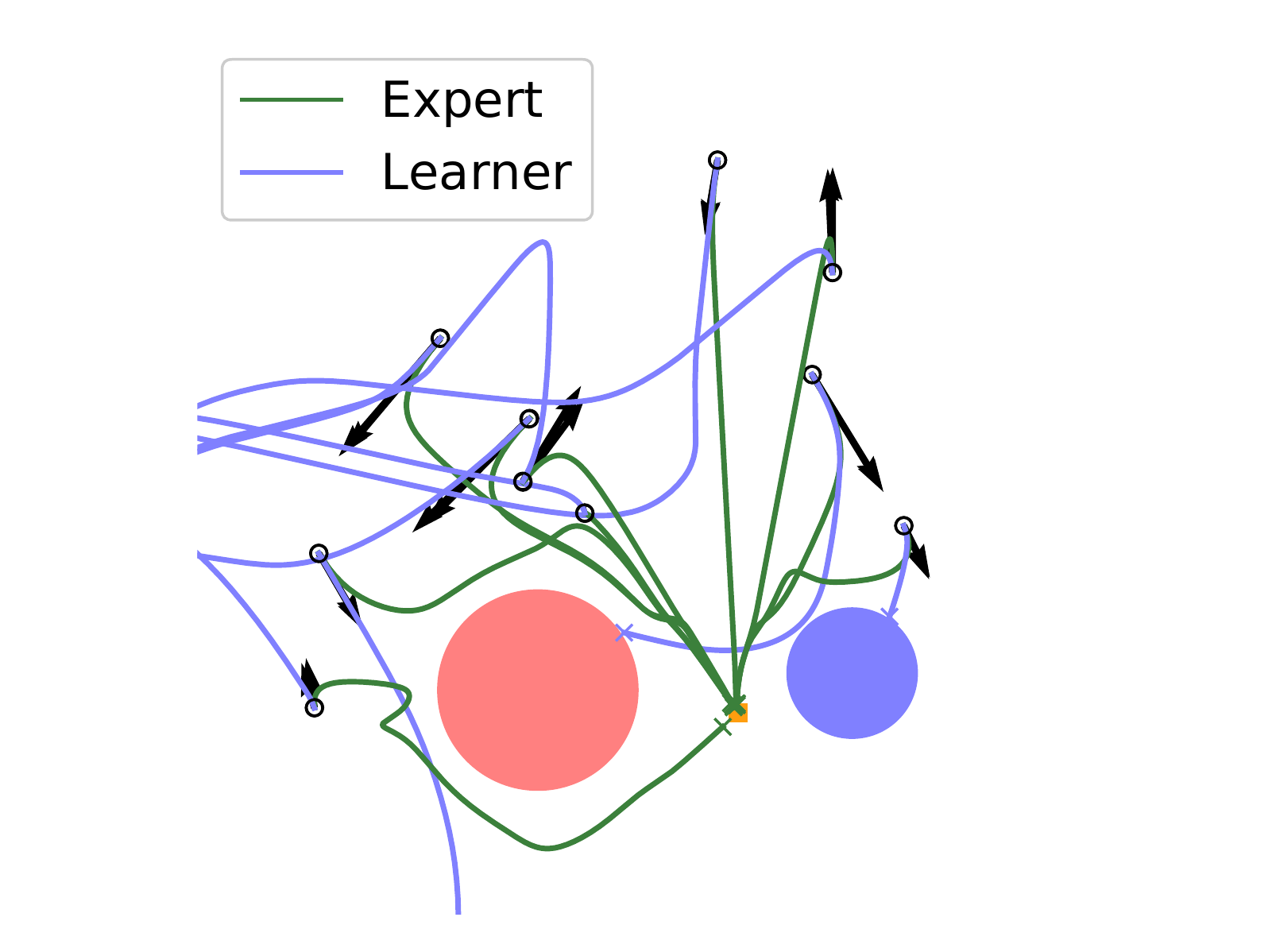}\vspace{-2mm}
		\caption{\texttt{2d2level}}
		\label{fig:2d2l_u_traj}
	\end{subfigure}
	\vspace{-2mm}
	\caption{\small Trajectories generated in by (a)-(b) \texttt{learner-rmp} and (e) \texttt{learner-un}, compared to the expert are shown. Initial state is a black circle for position and black arrow for velocity. The environment has obstacles (red and blue) and goal (orange square). (c) shows the corresponding Lyapunov function for \texttt{learner-rmp} trajectories in (b) while (d) shows its learning curve.}
	\label{fig:2d}
	\vspace{-2mm}
\end{figure*}

\begin{figure}[!t]
	\centering
	\begin{subfigure}[b]{0.19\textwidth}
		\centering
		\includegraphics[trim={100 70 80 100},clip,width=1\linewidth]{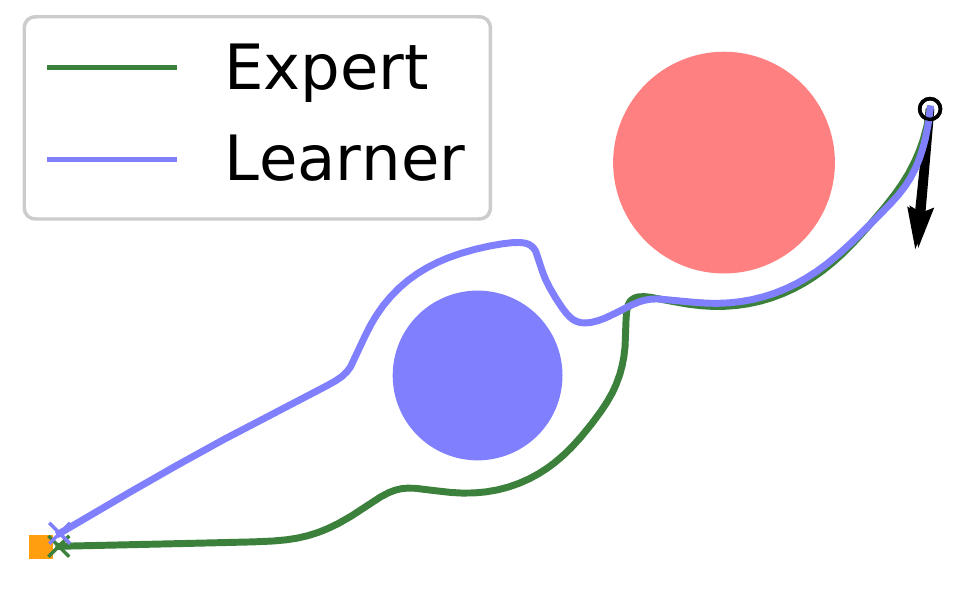}
	\end{subfigure}
	\begin{subfigure}[b]{0.19\textwidth}
		\centering
		\includegraphics[trim={100 70 80 100},clip,width=1\linewidth]{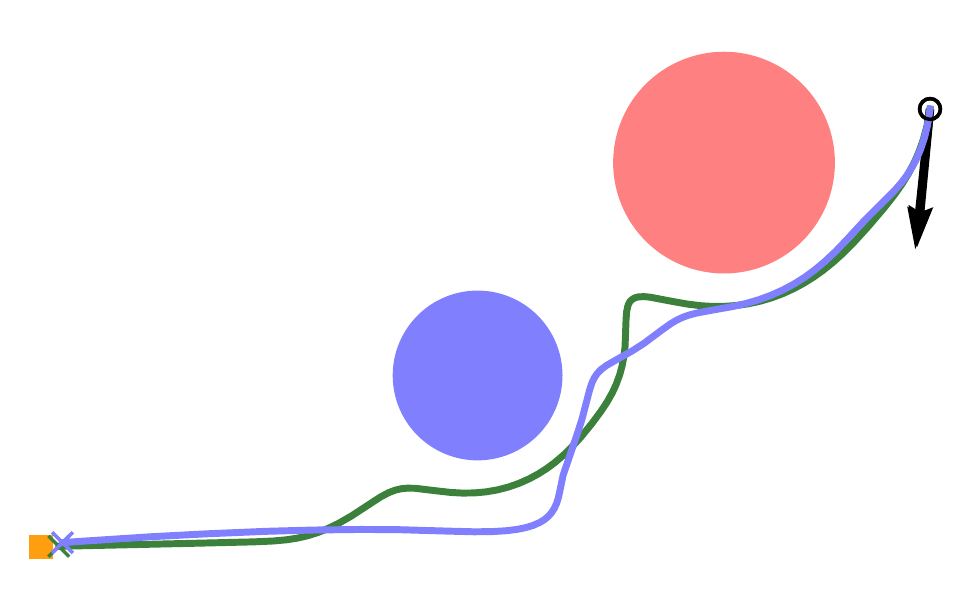}
	\end{subfigure}
	\begin{subfigure}[b]{0.19\textwidth}
		\centering
		\includegraphics[trim={100 70 80 100},clip,width=1\linewidth]{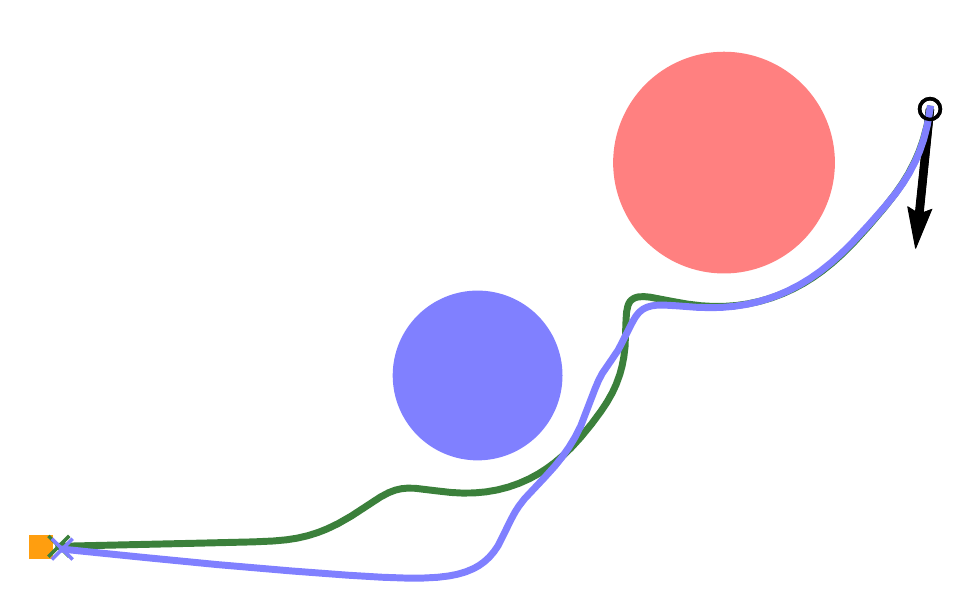}
	\end{subfigure}
	\begin{subfigure}[b]{0.19\textwidth}
		\centering
		\includegraphics[trim={100 70 80 100},clip,width=1\linewidth]{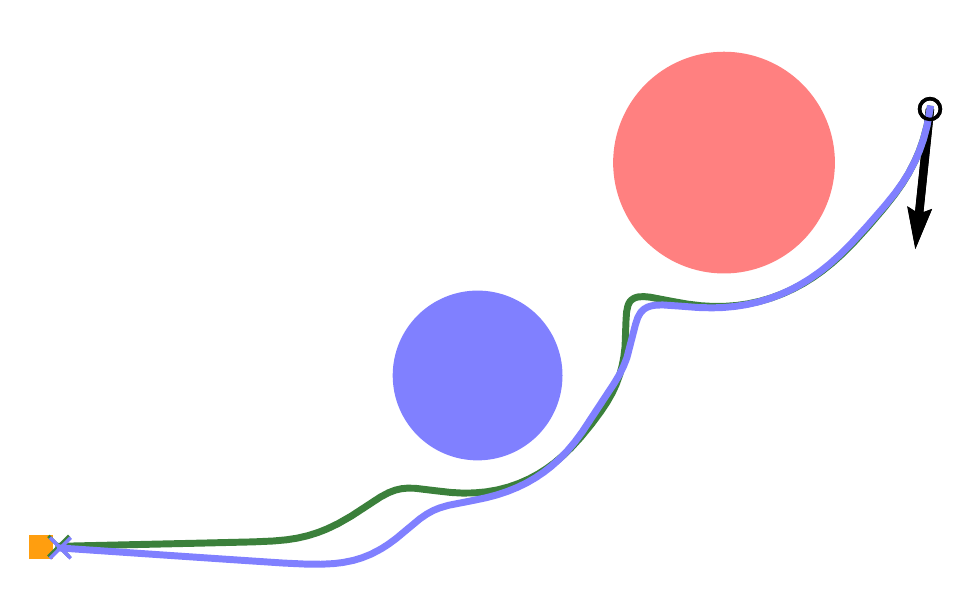}
	\end{subfigure}
	\begin{subfigure}[b]{0.19\textwidth}
		\centering
		\includegraphics[trim={100 70 80 100},clip,width=1\linewidth]{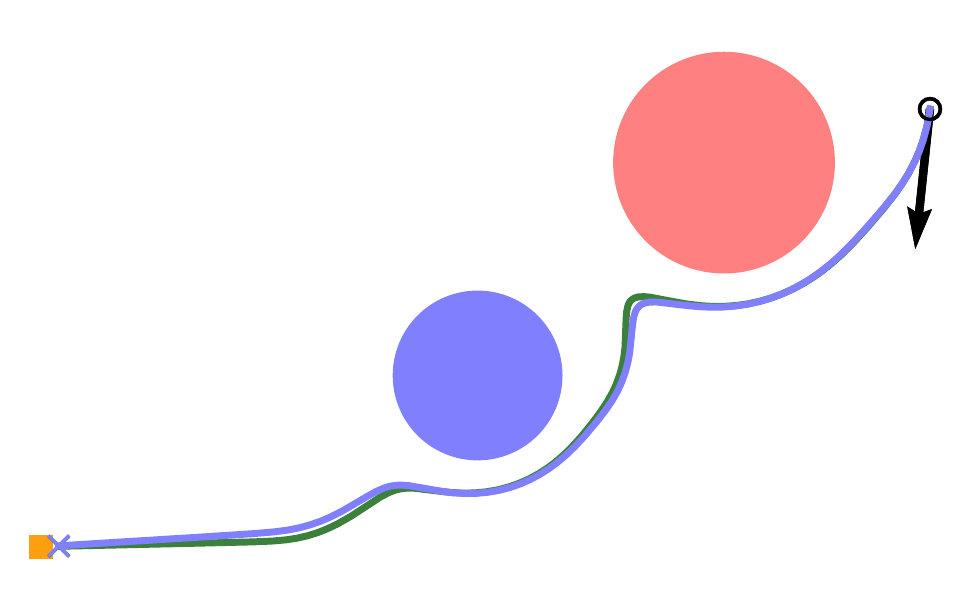}
	\end{subfigure}
	\begin{subfigure}[b]{0.19\textwidth}
		\centering
		\includegraphics[trim={0 0 0 0},clip,width=1\linewidth]{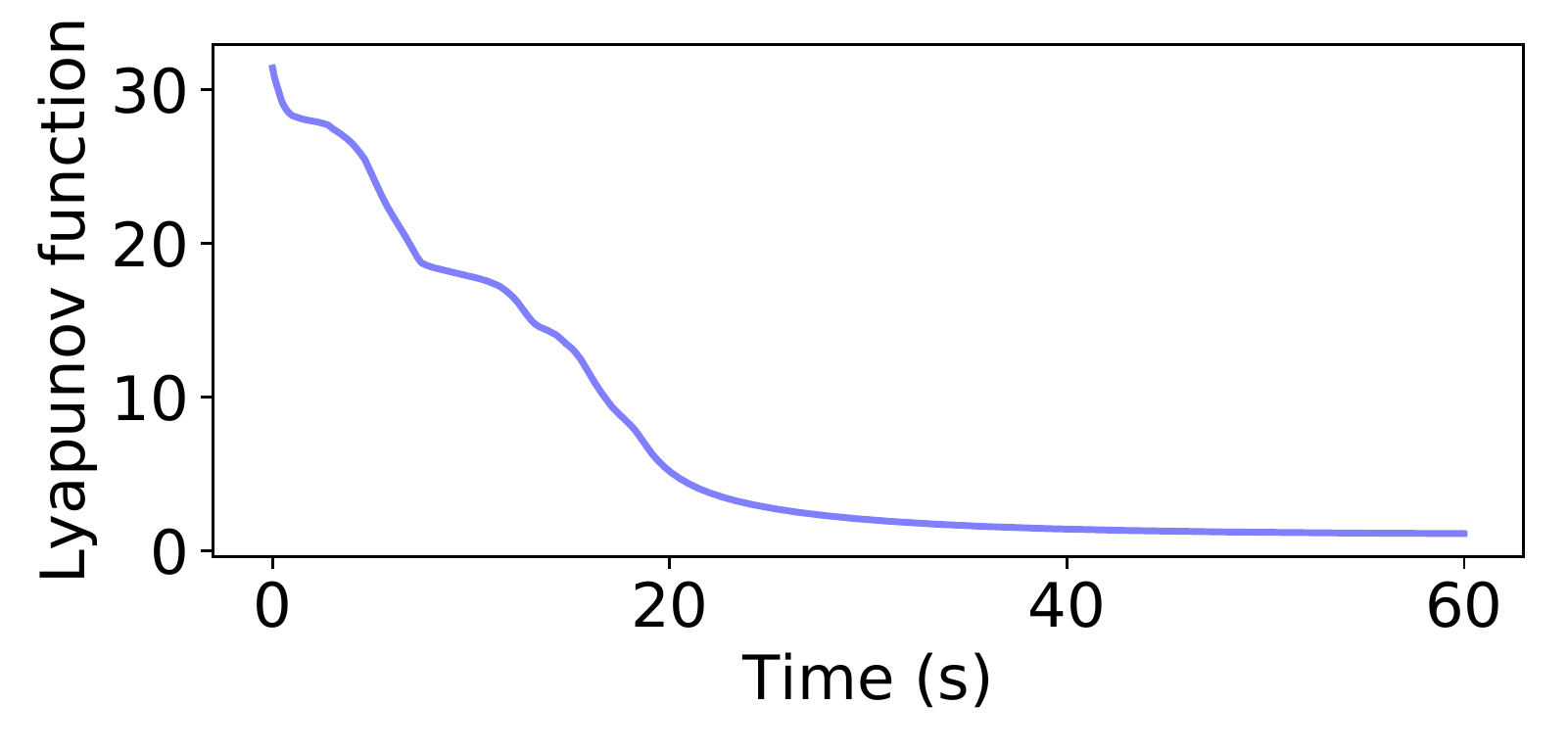}
	\end{subfigure}
	\begin{subfigure}[b]{0.19\textwidth}
		\centering
		\includegraphics[trim={0 0 0 0},clip,width=1\linewidth]{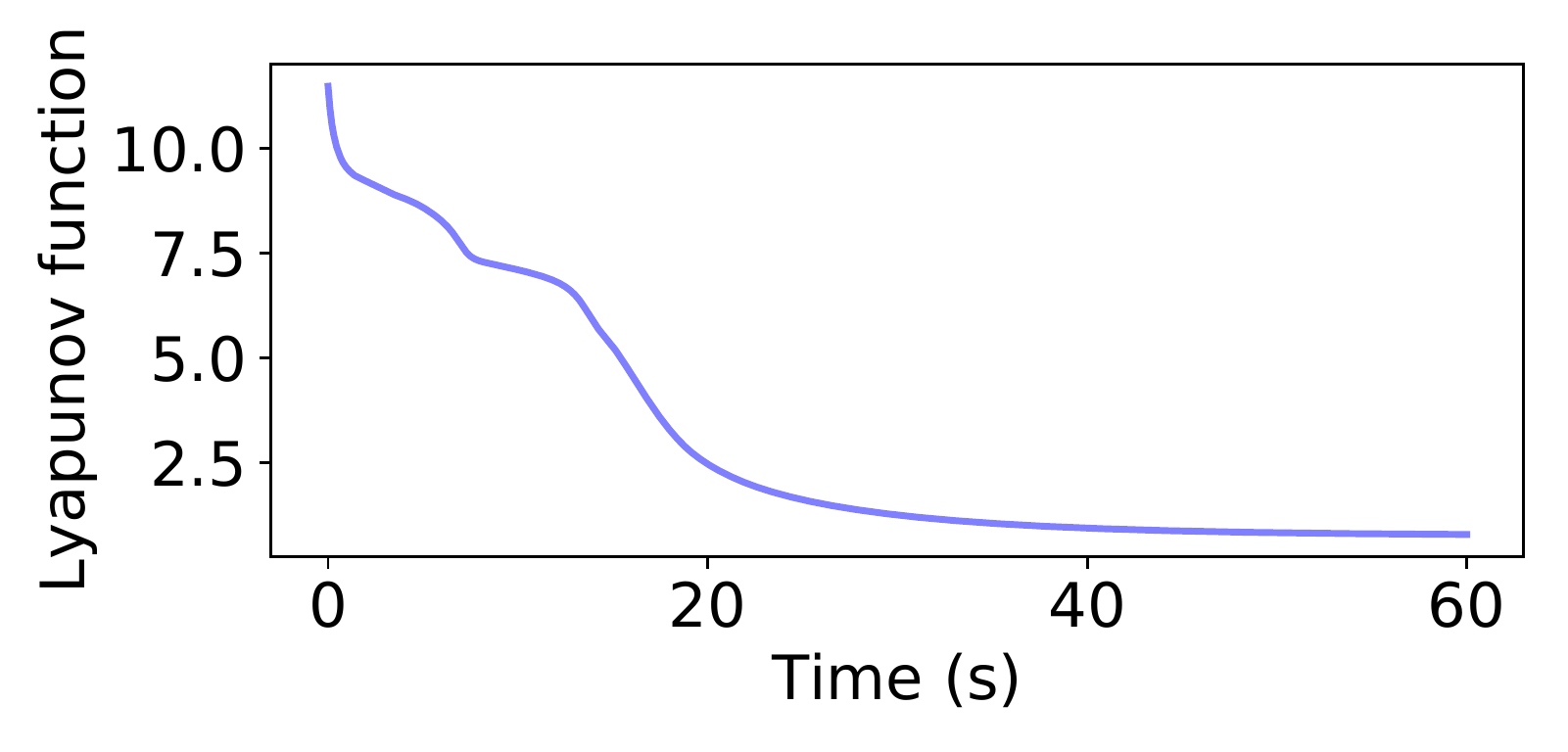}
	\end{subfigure}
	\begin{subfigure}[b]{0.19\textwidth}
		\centering
		\includegraphics[trim={0 0 0 0},clip,width=1\linewidth]{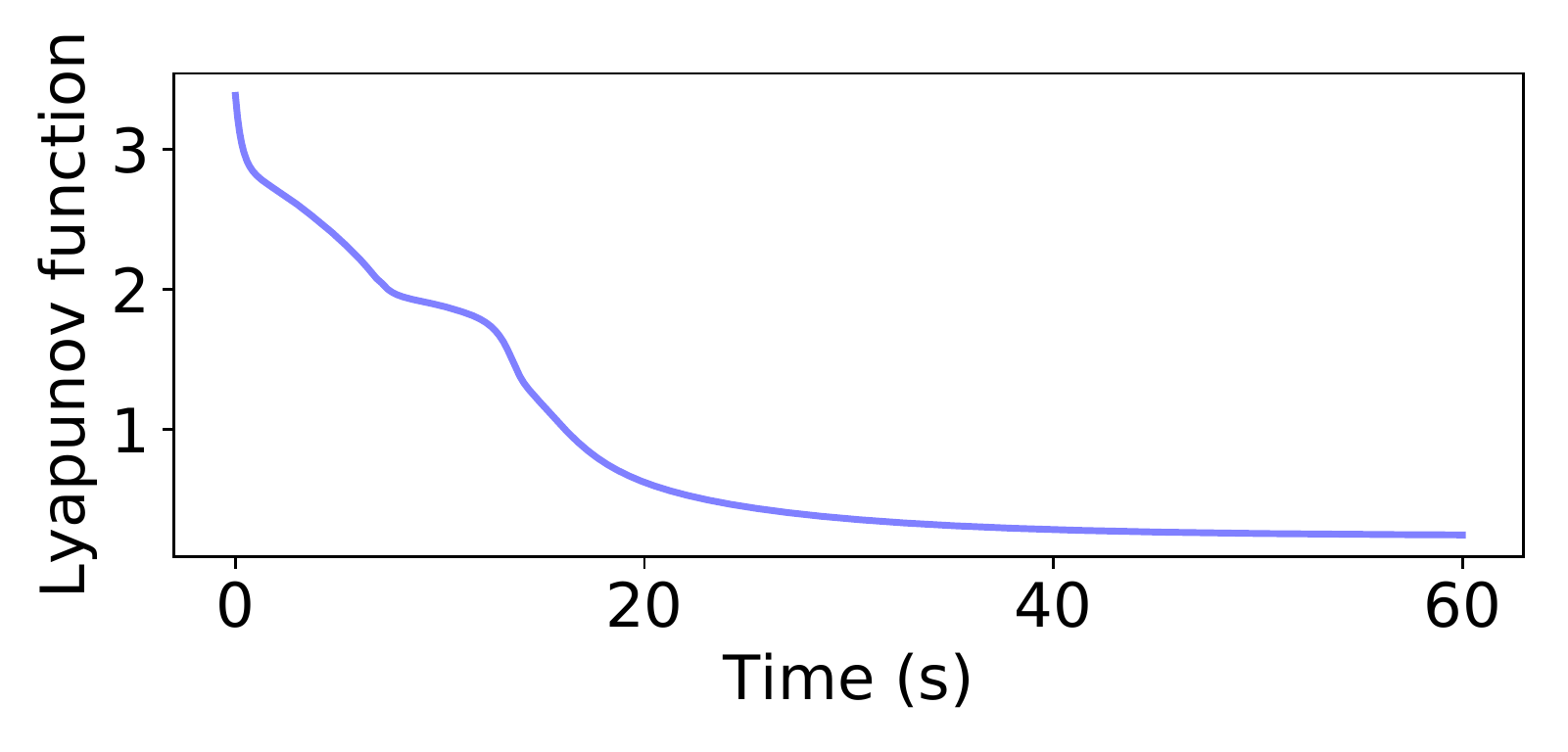}
	\end{subfigure}
	\begin{subfigure}[b]{0.19\textwidth}
		\centering
		\includegraphics[trim={0 0 0 0},clip,width=1\linewidth]{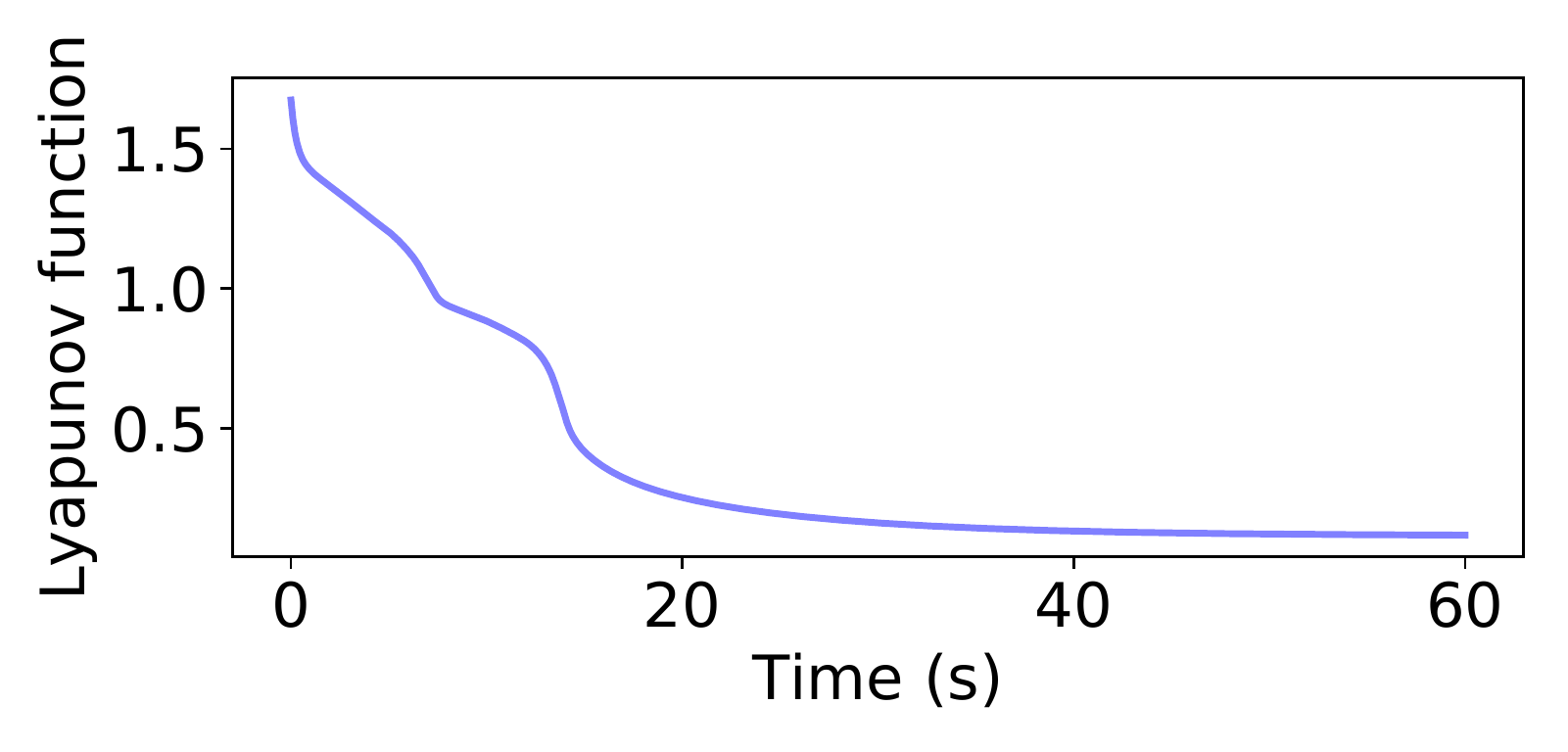}
	\end{subfigure}
	\begin{subfigure}[b]{0.19\textwidth}
		\centering
		\includegraphics[trim={0 0 0 0},clip,width=1\linewidth]{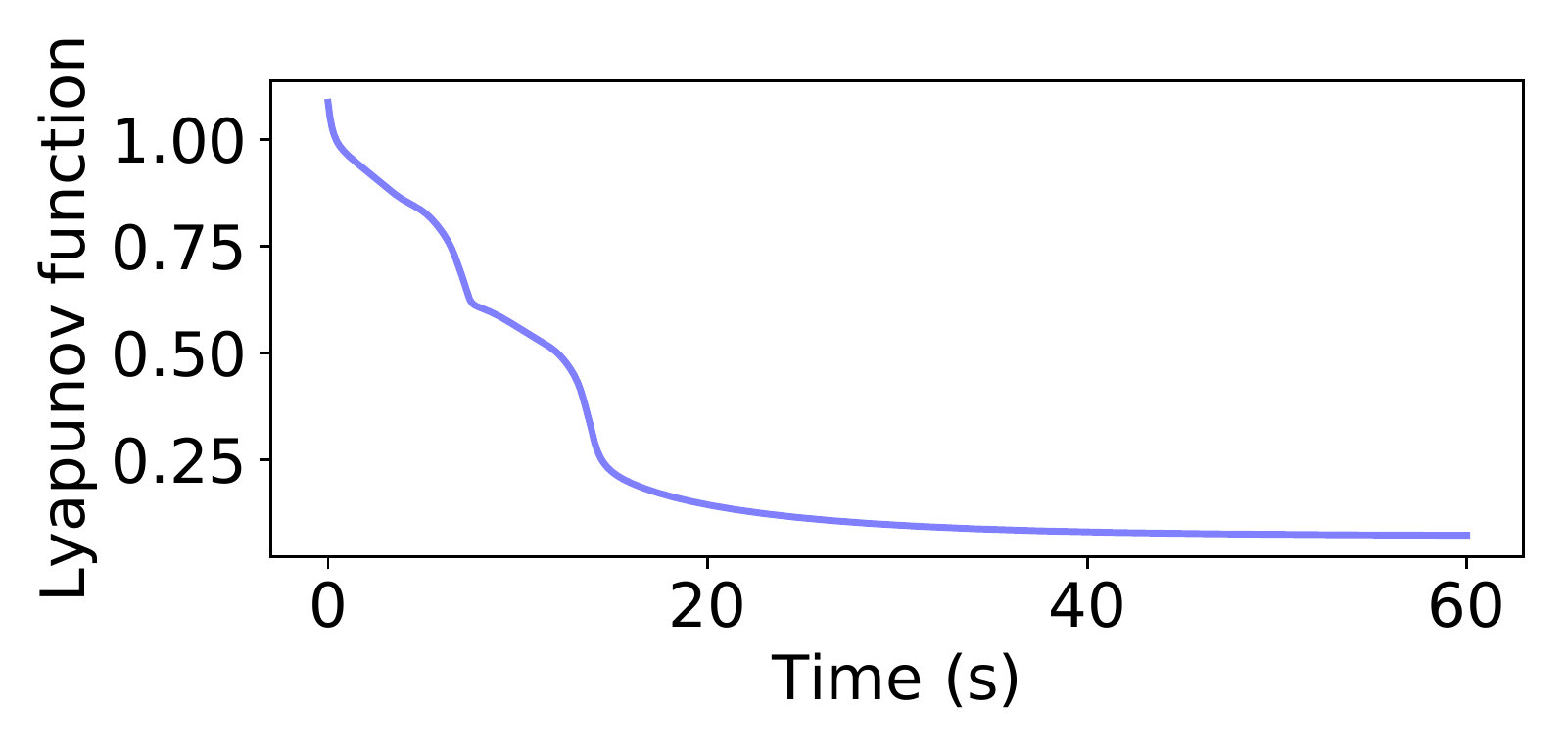}
	\end{subfigure}
	\vspace{-2mm}
	\caption{\small Improvement of the behavior produced by \texttt{learner-rmp} at various stages during training for \texttt{2d2level}. The top row shows the trajectories and the bottom row shows the corresponding Lyapunov function. From left to right these plots correspond to the red dots from left to right on the training curve in Figure~\ref{fig:2d2l_loss}.}
	\label{fig:2d_inc}
	\vspace{-6mm}
\end{figure}

\textbf{Results\quad}
We report two types of test loss: the batch-loss is the average loss on the entire test dataset generated by the expert policy, and the online-loss is the average loss at every time step (1 second interval) on the trajectories generated by the learner's policy starting from the initial states in the test dataset. In \texttt{2d1level}, the batch-loss is $5.42\times 10^{-5}$ and the online-loss is $5.82\times10^{-5}$. In \texttt{2d1level}, for \texttt{learner-rmp} the batch-loss is $2.45\times 10^{-4}$ and the online-loss is $2.78\times 10^{-4}$, while for \texttt{learner-un} the batch-loss is 0.111 and the online-loss is 12.203. 
The higher batch-loss for \texttt{learner-un} indicates that with the same amount of data and training the network is unable to learn the policy from the expert, while the much worse online-loss indicates that it cannot generalize well and succumbs to covariate shift problems.

Figures~\ref{fig:2d1l_traj},~\ref{fig:2d2l_traj} and~\ref{fig:2d2l_u_traj} show the evaluation of the trained networks on an example test environment. These results show that \newflow can perfectly match the behavior of the expert in the convex case (\texttt{2d1level}), while achieving near-expert performance in the non-identifiable case (\texttt{2d12evel}). From the overall results we also observe that \texttt{learner-un} is never able to reach the goal and also has a collision rate of 28\% (e.g. Figure~\ref{fig:2d2l_u_traj}), whereas \texttt{learner-rmp} successfully finishes the task 100\% of the time. We also tried a unstructured network with 5.8 times the number of learnable parameters. While the loss values improved with a small drop in collision rate, it was still never able to complete the task (please see Appendix~\ref{app:exp} for more details).
Figure~\ref{fig:2d_inc} shows the improving progression of \texttt{learner-rmp} during training, in which each snapshot corresponds to an associated point on the training curve in Figure~\ref{fig:2d2l_loss}. This verifies that with training we can progressively improve the behavior of the learner.
In addition, we verify that the stability properties of \newflow in the associated Lyapunov functions in Figure~\ref{fig:2d2l_energy} and Figure~\ref{fig:2d_inc}. We see that, regardless of the setting, the Lyapunov functions always decays monotonically as indicated by Theorem~\ref{th:newflow property}. This suggests \newflow produces a stable policy even when the learned weight functions are premature before the learning has converged (Figure~\ref{fig:2d_inc}). On the other hand, \texttt{learner-un} does not always avoid collision or provide any stability during or after training (see Figure~\ref{fig:2d_inc_unstruc} in Appendix~\ref{app:exp}).

\vspace{-4mm}
\subsection{Franka Robot} \label{sec:franka exps}
\vspace{-2mm}

We also validate our approach in a more realistic setup with a Franka Panda 7-DOF robot arm. In these experiments, the task is to reach a goal while navigating around an obstacle. The \newtree used is shown in Figure~\ref{fig:franka_tree}, where the configuration space of the robot is the root node, and weights functions are shown on the edges where they are defined. Please see Appendix~\ref{app:exp} for details.

\begin{figure}[!t]
	\centering
	\begin{subfigure}[b]{0.4\linewidth}
		\centering
		\includegraphics[trim={0 0 0 0},clip,width=0.48\linewidth]{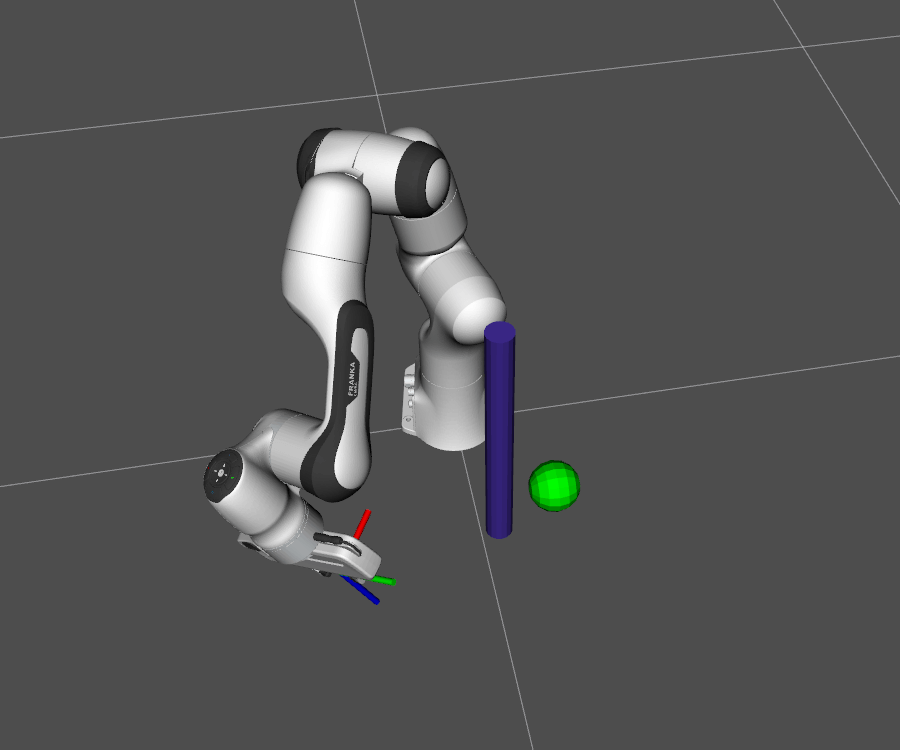}
		\includegraphics[trim={0 0 0 0},clip,width=0.48\linewidth]{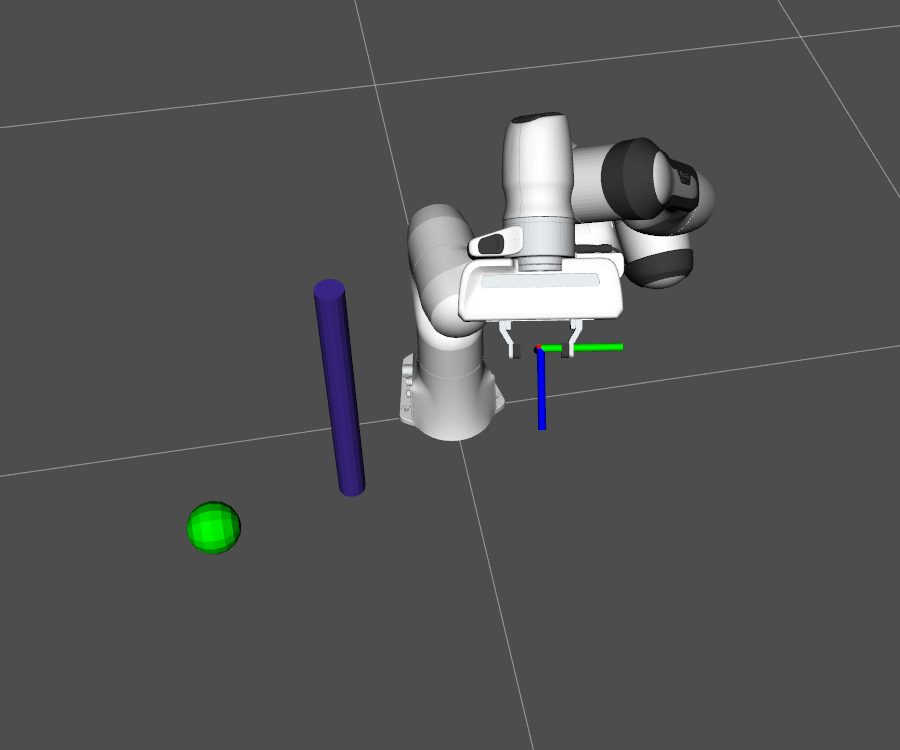}
		\caption{}
		\label{fig:franka_env}
	\end{subfigure}
	\begin{subfigure}[b]{0.55\linewidth}
		\centering
		\includegraphics[trim={80 0 50 0},clip,width=1\linewidth]{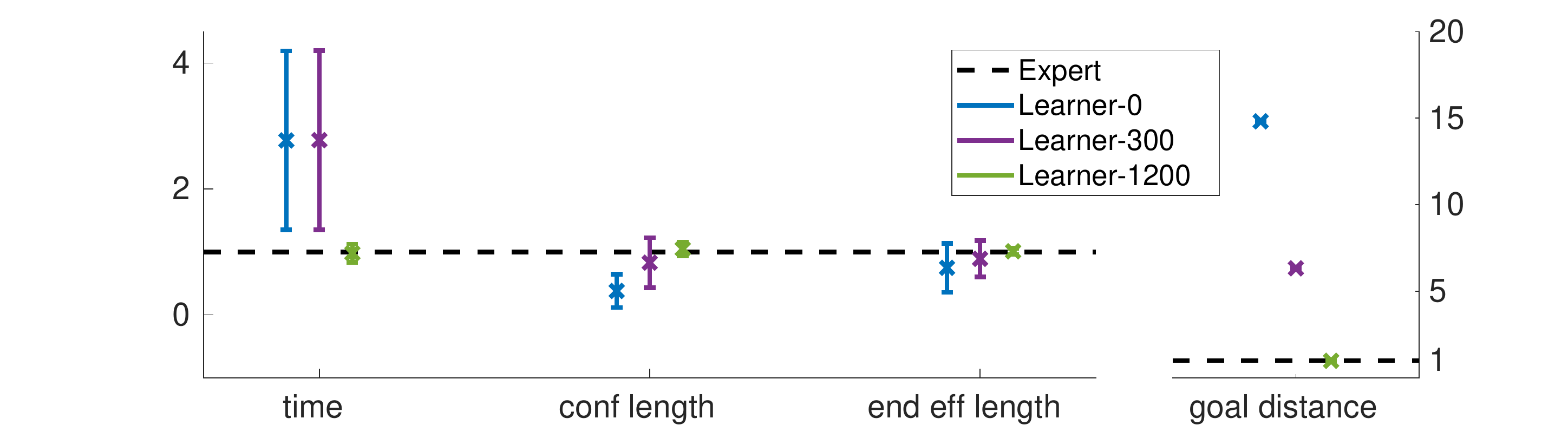}
		\caption{}
		\label{fig:franka_metrics}
	\end{subfigure}
	\vspace{-2mm}
	\caption{\small (a) An example from the training dataset (left) and the test dataset (right). The robot is shown in its start configuration with an obstacle (cylinder) and a goal (sphere). (b) Learner's performance with respect to the expert on the test dataset for the experiments with the Franka robot.}
	\vspace{-6mm}
\end{figure}

\textbf{Data and training\quad}
The expert policy is given by the \newtree with some fixed but unknown weights, while the learner's policy is defined by the \newtree with neural network weight functions that will be learned through behavior cloning. For training data we place an obstacle in a fixed location near the robot and sample different start configurations and goal locations that are in a region in front of the obstacle from the robot's perspective, so that the robot is forced to interact with the obstacle while trying to reach the goal. We run the expert to generate 110 unique trajectories for the training data. The trajectories are 5-10 seconds long and data is collected every 0.1 seconds; a data point consists of the state (configuration position and velocity of the robot), the auxiliary state (distances to goal and obstacle), and the expert action (acceleration). In a new environment with a different placement of the obstacle, this process is repeated to gather the test dataset where the expert is used to generate 20 unique trajectories. An example from the training and test dataset is shown in Figure~\ref{fig:franka_env}. The loss function is the same as in the experiments with the 2D robot and we train the policy using ADAM~\citep{kingma2014adam} with a minibatch size of 200 for 1500 iterations.
The number of iterations were chosen such that learning roughly converged and over-fitting had not happened.

\textbf{Results\quad}
We compare the performance of the learner, against the expert, at various stages of training: \texttt{learner-0} at no training (the neural network is initialized with random weights), \texttt{learner-300} at 300 iterations, and \texttt{learner-1200} at 1200 iterations when the learning converges. We record the following metrics on the test dataset for the expert and all the learners: (i) time: the time to reach within a precision of 0.05$m$ of the goal; we time-out the execution at 10 seconds, (ii) conf length: the distance traveled in configuration space, (iii) end eff length: the distance traveled by the end effector in workspace, and (iv) goal distance: the distance to the goal from the end effector at the end of an execution.

Figure~\ref{fig:franka_metrics} shows the performance of the learners relative to the expert on the test dataset (it plots the mean and the standard deviation of the ratios of the learner's metric and the expert's metric across trajectories; the expert is shown as the dotted horizontal baseline). From these results we see that, when the learner is not trained, the robot does not move much and incurs a high goal distance before timing out. With more training, the goal error reduces as the robot start traveling towards the goal but it still often times out. As the learning converges so does the performance of the learner towards the expert's performance. In all the trajectories across all the learners there are no collision, which verifies that constraints like safety can be incorporated through the structured learning approach that \newflow allows. We do a qualitative comparison on an example execution with the expert and the learners and also verify the stability properties of \newflow (even during learning) with the monotonically decreasing Lyapunov functions on these executions. Please see Appendix~\ref{app:exp} and Figure~\ref{fig:franka_traj} therein for details.

\vspace{-4mm}
\section{Conclusion}\label{sec:conc}
\vspace{-3mm}

We introduce extra parametrization flexibility into \flow and propose a new algorithm called \newflow. \newflow features a set of learnable weight functions that specifies the importance of subtask policies based on the robot's configuration and the environment.
Consequently, \newflow can combine imperfect subtask policies into a global policy with good performance, where the original \flow fails. We demonstrate the ability of \newflow to learn weight functions for policy fusion in experiments, and further theoretically prove that \newflow inherits the Lyapunov-type stability from \flow with only mild conditions on the weight functions. 
These structural properties and encouraging experimental results of \newflow suggest that \newflow can be treated as a class of structural policies suitable for policy learning with safety and interpretability requirements. 
Important future work includes designing more expressive policies based on \newflow, e.g., we can modify \newflow slightly to also learn part of the subtask policies and extra perturbations. 



\clearpage
\acknowledgments{
This research was partially supported by NSF CAREER award 1750483 and NSF NRI award 1637758.
}


\bibliography{refs}  

\clearpage
\appendix

\section*{Appendix}
\section{Proof of \cref{th:newflow property}} \label{app:proof}
We provide the proof of Theorem~\ref{th:newflow property} for completeness. We use~\cite[Thoerem 1]{cheng2018rmpflow} as the main lemma in our proof. 

\subsection{Background}
We first recall the definition of structured GDS~\cite{cheng2018rmpflow}, which augments a GDS with the information on how the metric matrix $\Gb$ factorizes,  in order to state~\cite[Thoerem 1]{cheng2018rmpflow}.
\begin{definition}\label{def:structured GDS} \cite{cheng2018rmpflow} 
	Suppose $\Gb$ has a structure $\SS$ that factorizes $\G(\x,\xd) = \J(\x)^\t \Hb(\z,\zd) \J(\x)$, where 
	$\z: \x \mapsto \z(\x) \in \R^n$  and $\Hb: \R^n \times \R^n \to \R^{n\times n}_+$, and $\J(\x) = \partial_\x \z$. 
	The tuple $(\MM, \G, \B, \Phi)_{\SS}$ is a \emph{structured GDS}, if 
	\begin{align} \label{eq:structured GDS}
	\Mb(\x,\xd)  \xdd 
	+ \bm\eta_{\G;\SS}(\x,\xd)  = - \nabla_\x \Phi(\x) - \Bb(\x,\xd)\xd 
	\end{align}
	where
	$
	\bm\eta_{\G;\SS}(\x,\xd) 
	\coloneqq  \J(\x)^\t ( \bm\xi_{\Hb}(\z,\zd) + 
	(\Hb(\z,\dot\z) + \bm\Xi_{\Hb}(\z,\zd) ) 
	\dot\J(\x,\xd) \xd  )
	$. 
	Given two structures, $\SS_a$ is said to \emph{preserve} $\SS_b$ if $\SS_a$ has the factorization (of $\Hb$) made by $\SS_b$.
\end{definition}
\noindent As noted in~\cite{cheng2018rmpflow}, GDSs are structured GDSs with a trivial structure (i.e. $\z =\x$), and structured GDSs reduce to GDSs if $\G(\x,\xd)=\G(\x)$, or if the manifold is one-dimensional.

\begin{lemma}	 \label{lm:consistency}
	\cite[Thoerem 1]{cheng2018rmpflow}
	Suppose the $i$th child node follows $(\NN_i, \G_i, \B_i, \Phi_i)_{\SS_i}$ and has coordinate $\y_i$. 
	Let $\ab_i = (\G_i + \bm\Xi_{\G_i})^{\dagger} (-\bm\eta_{\G_i;\SS_i} - \nabla_{\y_i} \Phi_i - \B_{i}\yd_i) $ and $\M_i =\G_i + \bm\Xi_{\G_i}$.
	Suppose $\ab$ of the parent node is given by \emph{\pullback} with $\{(\ab_i, \M_i)_C^{\NN_i} \}_{i=1}^K$. 
	Then $\ab$ 
	follows the \emph{pullback structured GDS}
	$(\MM, \G, \B, \Phi)_\SS$, 
	where $\Gb = \sum_{i=1}^{K}\J_i^\t\G_i\J_i$, $\B = \sum_{i=1}^{K}\J_i^\t \B_i \J_i$, $\Phi =  \sum_{i=1}^{K}\Phi_i \circ \y_k $, $\SS$ preserves $\SS_i$, and $\J_i = \partial_\x \y_i$.
	That is, the parent node is $(\ab, \M)_{C}^{\MM}$ such that  $\M = \sum_{i=1}^{K}\J_i^\t (\G_i+\bm\Xi_{\G_i} )\J_i$ and
	$
	\ab  = \left(\Gb + \bm\Xi_{\G}\right)^\dagger \left( 
	- \bm\eta_{\G;\SS}  - \nabla_\x \Phi - \Bb\xd  \right)
	$.
\end{lemma}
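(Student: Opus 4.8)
The plan is to prove Lemma~\ref{lm:consistency} by direct verification, matching the $\M$ and $\f$ produced by \pullback term-by-term against the inertia and force of the claimed pullback structured GDS $(\MM,\G,\B,\Phi)_\SS$. In natural form each child force is $\f_i = \M_i\ab_i = -\bm\eta_{\G_i;\SS_i} - \nabla_{\y_i}\Phi_i - \B_i\yd_i$, so substituting into~\eqref{eq:pullback} gives $\f = \sum_i \J_i^\t(-\bm\eta_{\G_i;\SS_i} - \nabla_{\y_i}\Phi_i - \B_i\yd_i - \M_i\dot{\J}_i\xd)$ together with $\M = \sum_i \J_i^\t \M_i\J_i$. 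The goal is to recognize these right-hand sides as $-\bm\eta_{\G;\SS} - \nabla_\x\Phi - \B\xd$ and $\G + \bm\Xi_\G$, respectively, so that \resolve yields $\ab = \M^\dagger\f = (\G+\bm\Xi_\G)^\dagger(-\bm\eta_{\G;\SS} - \nabla_\x\Phi - \B\xd)$.

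First I would dispatch the two easy terms by the chain rule: since $\Phi = \sum_i \Phi_i\circ\y_i$ we have $\sum_i \J_i^\t\nabla_{\y_i}\Phi_i = \nabla_\x\Phi$, and since $\yd_i = \J_i\xd$ the damping contributions collapse to $\sum_i\J_i^\t\B_i\J_i\xd = \B\xd$ with $\B = \sum_i\J_i^\t\B_i\J_i$. The first genuinely geometric step is the \emph{metric identity}. Writing $\M = \sum_i\J_i^\t(\G_i+\bm\Xi_{\G_i})\J_i = \G + \sum_i\J_i^\t\bm\Xi_{\G_i}\J_i$, it remains to prove the curvature-pullback identity $\bm\Xi_\G = \sum_i\J_i^\t\bm\Xi_{\G_i}\J_i$. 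I would prove this componentwise: because each $\J_i$ depends on $\x$ only, in $\bm\Xi_\G = \frac12\sum_a \dot{x}_a\,\partial_{\xd}\gb_a$ the operator $\partial_{\xd}$ reaches $\G_i(\y_i,\yd_i)$ solely through $\yd_i = \J_i\xd$, producing a factor $\J_i$; contracting the surviving $\dot{x}_a$ against $\J_i$ rebuilds $\yd_i$ inside $\bm\Xi_{\G_i}$, while the fixed factors of $\G_i$ supply the outer $\J_i^\t$ and $\J_i$. This gives $\M = \G + \bm\Xi_\G$.

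The main obstacle is the \emph{curvature-force identity} $\bm\eta_{\G;\SS} = \sum_i\J_i^\t(\bm\eta_{\G_i;\SS_i} + \M_i\dot{\J}_i\xd)$. The key is first to identify the pullback structure $\SS$ itself: stacking $\z = (\z_i\circ\y_i)_i$ and setting $\Hb = \bdiag(\Hb_i)$ realizes $\G = \J_z^\t\Hb\J_z$ with block Jacobian $\J_z = [\J_{z_i}\J_i]_i$, which manifestly preserves each $\SS_i$. Expanding $\bm\eta_{\G;\SS} = \J_z^\t(\bm\xi_\Hb + (\Hb+\bm\Xi_\Hb)\dot{\J}_z\xd)$ block-by-block and applying the product rule $\tfrac{d}{dt}(\J_{z_i}\J_i) = \dot{\J}_{z_i}\J_i + \J_{z_i}\dot{\J}_i$, each block separates into $\bm\eta_{\G_i;\SS_i}$ plus a cross term $\J_{z_i}^\t(\Hb_i+\bm\Xi_{\Hb_i})\J_{z_i}\dot{\J}_i\xd$. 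Applying the curvature-pullback identity at the single-child level (so that $\J_{z_i}^\t\Hb_i\J_{z_i} = \G_i$ and $\J_{z_i}^\t\bm\Xi_{\Hb_i}\J_{z_i} = \bm\Xi_{\G_i}$) collapses that cross term to exactly $\M_i\dot{\J}_i\xd$, the curvature correction already present in \pullback.

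Combining the four term types then identifies $\f$ and $\M$ with the force and inertia of $(\MM,\G,\B,\Phi)_\SS$, proving the claim. I expect essentially all of the effort to lie in the block-wise bookkeeping of this last step, in particular keeping the two Jacobian layers $\J_{z_i}$ and $\J_i$ and their time derivatives cleanly separated; once the composite structure $\SS$ is pinned down, the single-child curvature identity does the real work of reconciling the $\bm\eta$ terms with the $\M_i\dot{\J}_i\xd$ corrections.
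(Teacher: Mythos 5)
The paper never proves this lemma itself: it is imported wholesale as \cite[Theorem 1]{cheng2018rmpflow} and used as a black box in the proof of Theorem~\ref{th:newflow property}, so there is no internal proof to compare against, and a self-contained derivation like yours is genuinely additional content. Your derivation is correct, and it follows the same direct-verification strategy by which the cited reference establishes the result. The two identities that carry all the weight both check out by componentwise computation: (i) for $\G = \sum_{i}\J_i^\t\G_i\J_i$ with $\yd_i = \J_i\xd$ and $\J_i$ depending on $\x$ only, one indeed gets $\bm\Xi_{\G} = \sum_{i}\J_i^\t\bm\Xi_{\G_i}\J_i$ (the $\partial_{\xd}$ chain rule produces one factor of $\J_i$, and contracting $\dot x_a$ against the other fixed factor rebuilds $\yd_i$), hence $\M = \sum_i \J_i^\t\M_i\J_i = \G + \bm\Xi_\G$; and (ii) realizing the parent structure $\SS$ by stacking $\z = (\z_i\circ\y_i)_{i=1}^K$ with $\Hb = \bdiag(\Hb_1,\dots,\Hb_K)$ and composite Jacobian blocks $\J_{z_i}\J_i$ --- which is precisely what ``$\SS$ preserves $\SS_i$'' means --- the product rule $\tfrac{d}{dt}(\J_{z_i}\J_i) = \dot\J_{z_i}\J_i + \J_{z_i}\dot\J_i$ together with identity (i) applied at the child level (so $\J_{z_i}^\t(\Hb_i+\bm\Xi_{\Hb_i})\J_{z_i} = \G_i + \bm\Xi_{\G_i} = \M_i$) gives $\bm\eta_{\G;\SS} = \sum_i\J_i^\t\bigl(\bm\eta_{\G_i;\SS_i} + \M_i\dot\J_i\xd\bigr)$, so the curvature correction $-\M_i\dot\J_i\xd$ inside \pullback is absorbed exactly. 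Combined with the chain-rule collapse of the potential and damping terms, this identifies $[\f,\M]$ with the natural-form dynamics of $(\MM,\G,\B,\Phi)_\SS$, as required. One caveat worth recording: your opening step $\f_i = \M_i\ab_i = -\bm\eta_{\G_i;\SS_i} - \nabla_{\y_i}\Phi_i - \B_i\yd_i$ silently assumes the child force lies in the range of $\M_i$, since $\M_i\M_i^\dagger$ is only a projection when $\M_i$ is degenerate; this is automatic for $\M_i \succ 0$ and is the same convention adopted implicitly in \cite{cheng2018rmpflow}, so it is a shared convention rather than a gap in your argument.
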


Lemma~\ref{lm:consistency} shows that the original \pullback operator preserves structured GDSs. Consequently, when all the leaf nodes are GDSs, the root node is a structured GDS, which implies the type of Lyapunov stability in Theorem~\ref{th:flow property}. 

\subsection{Proof of Theorem~\ref{th:newflow property}}

We prove the stability of \newflow using similar techniques as~\cite{cheng2018rmpflow}. Using the recursive property, it is sufficient to show that \newpullback preserves a family of structured GDSs, which are specified by the weight functions. Then the statement of Theorem~\ref{th:newflow property} follows directly as in~\cite{cheng2018rmpflow}.

We proceed by first decoupling the \newpullback into two steps. Let $u$ be a parent node on manifold $\MM$ and $\{v_k\}_{k=1}^K$  be its $K$ child nodes on manifold $\{\NN_k\}_{k=1}^K$ in an \newtree. Between  $u$ and each $v_k$, we add an extra node $\tilde{v}_k$ on manifold $\MM$ to create a new graph. In this new graph, $u$ has $K$ child nodes $\{\tilde{v}_k\}_{k=1}^K$ \emph{with identity transformation and the original weight function $w_k$}, and $\tilde{v}_k$ has a single child which is $v_k$ \emph{with the original transformation from $u$ to $v_k$ and an identity weight function}. 
Under this construction, the \newpullback operator in the original graph can then be realized in the new graph as 
\begin{enumerate}
	\item a \newpullback operator from $v_k$ to $\tilde{v}_k$ for each $k$ 
	\item a \newpullback operator from $\{\tilde{v}_k\}_{k=1}^K$ to $u$. 
\end{enumerate}
To verify this we rewrite~\eqref{eq:modifed pullback} as
\begin{align*} 
\begin{split}
\f &=\textstyle \sum_{i=1}^{K} w_i \J_i^\t (\f_{i} - \M_i \dot{\J}_i \xd) +  \hb_i \eqqcolon \sum_{i=1}^{K} w_i \tilde{\f}_i +  \tilde{\hb}_i\\
\M &=\textstyle \sum_{i=1}^{K} w_i \J_i^\t \M_i \J_i \eqqcolon \sum_{i=1}^{K} w_i \tilde{\M}_i\\
\G &=\textstyle \sum_{i=1}^{K} w_i \J_i^\t \G_i \J_i \eqqcolon \sum_{i=1}^{K} w_i \tilde{\G}_i\\
L &=\textstyle \sum_{i=1}^{K} w_i  L_i \eqqcolon \sum_{i=1}^{K} w_i  \tilde{L}_i
\end{split}	
\end{align*} 
where we also has
$
\hb_i = \tilde{\hb}_i =\tilde{L}_i \nabla_\x w_i-  (\xd^\t \nabla_\x w_i) \tilde\G_i \xd
$.
That is, node $\tilde{v}_k$ has the RMP $(\tilde{\f}_i, \tilde{\M}_i)_{C}^\MM$, the metric matrix $\tilde{\G}_i$, and the Lagrangian $\tilde{L}_i$. From the equalities above, we verify the two-step decomposition of \newpullback is valid.

Next we show that each step in the two-step decomposition yields a structured GDS like Lemma~\ref{lm:consistency}, which is sufficient condition we need to prove Theorem~\ref{th:newflow property}.  In the first step from $v_i$ to $\tilde{v}_i$, because the weight is constant identity, \newpullback is the same as \pullback. We apply Lemma~\ref{lm:consistency} and  conclude that  $\tilde{v}_i$ follows  $(\MM, \tilde\G_i, \tilde\B_i, \tilde\Phi_i)_{\tilde{\SS}_i}$, where
$\tilde{\SS}_i$ preserves $\SS_i$. 

Then we show the second step from $\{\tilde{v}_i\}_{i=1}^K$ to $u$ has similar properties. This is summarized as Lemma~\ref{lm:single newpullback} below.
\begin{lemma} \label{lm:single newpullback}
	If $\tilde{v}_i$ follows $(\MM, \tilde\G_i,\tilde\B_i, \tilde\Phi_i)_{\tilde{S}_i}$, then $u$ follows $
	(\MM, \G , \B, \Phi)_{S}
	$, where $S$ preserves $\tilde{S}_i$,  $\G = \sum_{i=1}^{K} w_i \tilde{\G}_i$, $\B = \sum_{i=1}^{K} w_i \tilde{\B}_i$, and $\Phi = \sum_{i=1}^{K} w_i \tilde{\Phi}_i$.
\end{lemma}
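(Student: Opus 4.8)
The plan is to prove \cref{lm:single newpullback} by showing that the parent node $u$ satisfies the structured GDS equation~\eqref{eq:structured GDS} with metric $\G=\sum_i w_i\tilde{\G}_i$, damping $\B=\sum_i w_i\tilde{\B}_i$, and potential $\Phi=\sum_i w_i\tilde{\Phi}_i$, for some structure $S$ that preserves each $\tilde{S}_i$. By hypothesis each child $\tilde{v}_i$ is a structured GDS on $\MM$, so its natural-form force and inertia are $\tilde{\f}_i = -\bm{\eta}_{\tilde{\G}_i;\tilde{S}_i} - \nabla_\x\tilde{\Phi}_i - \tilde{\B}_i\xd$ and $\tilde{\M}_i = \tilde{\G}_i + \bm{\Xi}_{\tilde{\G}_i}$. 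Since the transformations from $u$ to each $\tilde{v}_i$ are the identity, \newpullback reduces~\eqref{eq:modifed pullback} to $\f = \sum_i w_i\tilde{\f}_i + \tilde{\h}_i$ and $\M = \sum_i w_i\tilde{\M}_i$, with $\tilde{\h}_i = \tilde{L}_i\nabla_\x w_i - (\xd^\t\nabla_\x w_i)\tilde{\G}_i\xd$ and $\tilde{L}_i = \tfrac12\xd^\t\tilde{\G}_i\xd - \tilde{\Phi}_i$. My goal is to match $\f$ and $\M$ term-by-term against the structured GDS form $\f = -\bm{\eta}_{\G;S} - \nabla_\x\Phi - \B\xd$ and $\M = \G + \bm{\Xi}_{\G}$.

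The inertia, damping, and potential match almost immediately. For the inertia I would use that $\bm{\Xi}$ is $\R$-linear in the metric and that each $w_i$ depends only on the parent configuration $\x$ (never on $\xd$); hence $\sum_i w_i\bm{\Xi}_{\tilde{\G}_i} = \bm{\Xi}_{\sum_i w_i\tilde{\G}_i} = \bm{\Xi}_{\G}$, so $\M = \sum_i w_i(\tilde{\G}_i + \bm{\Xi}_{\tilde{\G}_i}) = \G + \bm{\Xi}_{\G}$. The damping term in $\f$ is $-\sum_i w_i\tilde{\B}_i\xd = -\B\xd$ directly. For the potential, the product rule gives $-\nabla_\x\Phi = -\sum_i(w_i\nabla_\x\tilde{\Phi}_i + \tilde{\Phi}_i\nabla_\x w_i)$; the first summand comes from $w_i\tilde{\f}_i$ and the second is supplied by the $-\tilde{\Phi}_i\nabla_\x w_i$ piece hidden inside $\tilde{L}_i\nabla_\x w_i$ in $\tilde{\h}_i$. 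This is precisely why the correction $\tilde{\h}_i$ carries the factor $\tilde{L}_i$ rather than just the kinetic energy.

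The crux is the curvature identity that remains after the easy terms are stripped off, namely $\bm{\eta}_{\G;S} = \sum_i w_i\bm{\eta}_{\tilde{\G}_i;\tilde{S}_i} + \sum_i\bigl[(\xd^\t\nabla_\x w_i)\tilde{\G}_i\xd - \tfrac12(\xd^\t\tilde{\G}_i\xd)\nabla_\x w_i\bigr]$, where the bracketed terms are exactly the kinetic part of $\tilde{\h}_i$. I would first verify this in the trivial-structure case, where $\bm{\eta}_{\G;S}=\bm{\xi}_{\G}$: expanding $\bm{\xi}_{\G} = \sdot{\G}{\x}\xd - \tfrac12\nabla_\x(\xd^\t\G\xd)$ with $\G = \sum_i w_i\tilde{\G}_i$ and applying the product rule, the terms that differentiate $w_i$ collapse to $(\xd^\t\nabla_\x w_i)\tilde{\G}_i\xd$ from $\sdot{\G}{\x}\xd$ and to $-\tfrac12(\xd^\t\tilde{\G}_i\xd)\nabla_\x w_i$ from the gradient term, while the terms that differentiate $\tilde{\G}_i$ reassemble into $\sum_i w_i\bm{\xi}_{\tilde{\G}_i}$. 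To lift this to general structures I would construct $S$ explicitly by stacking the inner maps of the $\tilde{S}_i$, so that $\G = \J^\t\Hb\J$ with $\J=[\,\tilde{\J}_i\,]_i$ and $\Hb = \bdiag(w_i\tilde{\Hb}_i)$; this $S$ visibly preserves each $\tilde{S}_i$, and the same product-rule expansion of the structured-curvature formula $\bm{\eta}_{\G;S}=\J^\t(\bm{\xi}_{\Hb} + (\Hb+\bm{\Xi}_{\Hb})\dot{\J}\xd)$ should produce the weight-gradient corrections.

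The main obstacle is that the inner metric $\Hb$ in \cref{def:structured GDS} must be a function of the inner coordinate $\z$, whereas each weight $w_i$ is a function of the \emph{parent} coordinate $\x$, so $\bdiag(w_i\tilde{\Hb}_i)$ is not literally of the required form. I expect to resolve this by augmenting the inner coordinate to $\z = (\x, \z_1, \dots, \z_K)$ (legitimate since the $u\!\to\!\tilde{v}_i$ maps are the identity, contributing an identity block to $\J$), so that $w_i(\x)$ becomes a bona fide function of $\z$ and the augmented $\Hb$ stays positive semidefinite with a zero block on the $\x$-component. Once $S$ is well-defined in this way, the remaining work is the bookkeeping of confirming that the $\x$-derivatives of $w_i$ appearing through $\bm{\xi}_{\Hb}$ and $\bm{\Xi}_{\Hb}$ reproduce exactly the kinetic part of $\tilde{\h}_i$ and nothing more; \cref{lm:single newpullback} then follows, and feeding it back into the two-step decomposition yields \cref{th:newflow property} as in~\cite{cheng2018rmpflow}.
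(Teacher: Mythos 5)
Your proposal is correct and follows essentially the same route as the paper's own proof: both verify \cref{lm:single newpullback} by algebraically matching the output of \newpullback (with identity transformations from $u$ to the $\tilde{v}_i$) against the structured-GDS dynamics of \cref{def:structured GDS}, and both hinge on the same three facts --- velocity-independence of $w_i$ giving $\bm\Xi_{\G} = \sum_{i} w_i \bm\Xi_{\tilde\G_i}$ and hence $\M = \sum_i w_i \tilde\M_i$; the product rule splitting $\nabla_\x \Phi$ so that the $-\tilde\Phi_i\nabla_\x w_i$ piece is supplied by the $\tilde L_i \nabla_\x w_i$ term of $\hb_i$; and the curvature identity $\bm\eta_{\G;S} = \sum_i w_i\bm\eta_{\tilde\G_i;\tilde{S}_i} + \sum_i\bigl[(\xd^\t\nabla_\x w_i)\tilde\G_i\xd - \tfrac{1}{2}(\xd^\t\tilde\G_i\xd)\nabla_\x w_i\bigr]$, whose residual terms are exactly the kinetic part of $\hb_i$.

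The one place you diverge is the construction of the structure $S$, and there the paper's choice is simpler: it factorizes $\G$ only one level down, taking the inner map to be the $K$-fold stacking of \emph{identity} maps on $\MM$ with inner metric $\bdiag(w_1\tilde\G_1,\dots,w_K\tilde\G_K)$. Since each inner coordinate is then a copy of $\x$ itself, the weights $w_i(\x)$ are automatically legitimate functions of the inner coordinate, so the ``main obstacle'' you identify never arises; preservation of $\tilde{S}_i$ follows because each block $w_i\tilde\G_i$ further factorizes according to $\tilde{S}_i$, and since the stacked identity Jacobian has $\dot\J = 0$, the curvature reduces immediately to $\sum_i \bm\xi_{w_i\tilde\G_i}$, after which a single product-rule expansion gives the identity above. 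Your alternative --- stacking the children's inner maps $\tilde\J_i$ and augmenting the inner coordinate with $\x$ (with a zero block in $\Hb$ to keep positive semi-definiteness) so that $w_i$ becomes a bona fide function of the inner coordinate --- is also valid and, when pushed through $\J^\t\bigl(\bm\xi_{\Hb} + (\Hb+\bm\Xi_{\Hb})\dot\J\xd\bigr)$, reproduces the same identity; it is simply extra bookkeeping that the paper's coarser factorization makes unnecessary.
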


\begin{proof}[Proof of Lemma~\ref{lm:single newpullback}]
	This can be shown by algebraically comparing the dynamics of $(\MM, \G , \B, \Phi)_{S}$ and the result of~\eqref{eq:modifed pullback}.
	Let $\x$ be a coordinate of $\MM$ and, without loss of generality, let us consider $w_k$ to be a function of only $\x$ (we ignore the dependency on the auxiliary state). By Definition~\ref{def:structured GDS}, the dynamics of $(\MM, \G , \B, \Phi)_{S}$ satisfies
	\begin{align} \label{eq:resultant GDS}
	\M(\x,\xd) \xdd 
	+ {\bm\eta}_{\G;\SS}(\x,\xd)  = - \nabla_\x \Phi(\x) - \B(\x,\xd)\xd 
	\end{align}
	We first show the recursion of $\f$ of \newpullback satisfies~\eqref{eq:resultant GDS}. To this end, we rewrite ${\bm\eta}_{\G;\SS}$ by Definition~\ref{def:structured GDS} as
	\begin{align*}
	&\bm\eta_{\G;\SS}(\x,\xd)
	= \textstyle \sum_{i=1}^{K} \bm\xi_{w_i \tilde{\G}_i}(\x,\xd)  \\
	&= \textstyle \sum_{i=1}^{K} w_i(\x) \bm\eta_{\tilde{\G}_i}(\x,\xd) +  (\xd^\t \nabla_\x w_i(\x))\tilde{\G}_i(\x,\xd) \xd
	 \textstyle- \frac{1}{2}\nabla_\x w_i(\x) \xd^\t  \tilde{\G}_i(\x,\xd)\xd
	\end{align*}
	where in the first equality we use 
	the trick we made that the transformation from $u$ to $\tilde{v}_k$ is identity and we use  the fact $\tilde{S}_i$ preserves $S_i$, so the structure $S$ that preserves $\tilde{S}_i$  has a clean structure
	\begin{align*}
	\G = \begin{bmatrix}
	I &\dots &I
	\end{bmatrix}
	\begin{bmatrix}
	w_1 \tilde{\G}_1 & & \\
	& \ddots & \\
	& & w_K \tilde{\G}_K
	\end{bmatrix}
	\begin{bmatrix}
	I \\ \vdots \\I
	\end{bmatrix}
	\end{align*}
	Similarly, we rewrite $
	\nabla_\x \Phi(\x) = \textstyle
	\sum_{i=1}^K w_i(\x)\nabla_\x \tilde\Phi_i(\x) + \tilde{\Phi}_i \nabla_\x w_i(\x)
	$. Substituting these two equalities into~\eqref{eq:resultant GDS}, we can write (with input dependency omitted) 
	\begin{align*}
	&\M \xdd = - \nabla_\x \Phi - \B\xd  - {\bm\eta}_{\G;\SS}\\
	&=\textstyle \sum_{i=1}^{K} - w_i\nabla_\x \tilde\Phi_i - \tilde{\Phi}_i \nabla_\x w_i  - w_i\B_i\xd   + \sum_{i=1}^{K}- w_i \bm\eta_{\tilde{\G}_i} - (\xd^\t \nabla_\x w_i)\tilde{\G}_i \xd
	+  \frac{1}{2}\nabla_\x w_i\xd^\t  \tilde{\G}_i\xd 
	\\
	&=\textstyle \sum_{i=1}^{K}  w_i \tilde{\f}_i 
	+  \frac{1}{2}\nabla_\x w_i\xd^\t  \tilde{\G}_i\xd- \tilde{\Phi}_i \nabla_\x w_i - (\xd^\t \nabla_\x w_i)\tilde{\G}_i \xd
	\\
	&=\textstyle \sum_{i=1}^{K}  w_i \tilde{\f}_i + \hb_i
	\end{align*}
	where we use the fact that $\tilde{\f_i } = - \nabla_\x \tilde\Phi_i - \tilde\B_i\xd  - {\bm\eta}_{\tilde\G_i;\tilde\SS_i}$ as  $\tilde{v}_i$ follows  $(\MM, \tilde\G_i, \tilde\B_i, \tilde\Phi_i)_{\tilde{\SS}_i}$ with $\tilde{\SS}_i$ preserving $\SS_i$. This is exactly the recursion of $\f$ when \newpullback is applied between $\tilde{v}_i$ and $u$,  i.e. $\f = \M \xdd = \sum_{i=1}^{K}  w_i \tilde{\f}_i + \hb_i$. 
	
	To establish the equivalence of the other recursions, we next rewrite $\Mb$ by definition in~\eqref{eq:GDS} as
	\begin{align*}
	\Mb(\x,\xd) &= \G(\x,\xd) + \bm\Xi_{\G}(\x,\xd)\\
	&=\textstyle \sum_{i=1}^{K}w_i(\x)\left( \tilde{\G}_i(\x,\xd)  + \bm\Xi_{\tilde{\G}_i}(\x,\xd)\right)\\
	&=\textstyle \sum_{i=1}^{K}w_i(\x) \tilde{\Mb}_i(\x,\xd)
	\end{align*}
	where we use the fact that $w_i$ does not on the velocity $\xd$. The recursion for $\G$ and $L$ can be derived similarly, so we omit them here.
\end{proof}

So far we have shown that \newpullback of \newflow retains the closure of structured GDSs as \pullback in \flow. In addition, we show that the structured GDS created by \newpullback has a linearly weighted metric matrix, damping matrix, and potential function (cf. Lemma~\ref{lm:single newpullback}). By recursively applying the two-step decomposition above, from the leaf nodes to the root node, we conclude that the root node policy will be a structured GDS with a Lyapunov function given by the recursion in~\eqref{eq:recursive law}. 
The rest of the statement of Theorem~\ref{th:newflow property} follows from the properties of structured GDSs as shown in~\cite{cheng2018rmpflow}. 

\section{Benefits due to the Extra Flexibility of \newflow} \label{app:flexibility example}

We use an example to illustrate the extra flexibility offered by \newflow. 
Consider a simple Y-shape \newtree with a root node and two child nodes with weight functions $w_1$ and $w_2$. 
For the child nodes, suppose they are GDS $(\NN_i, \G_i, \B_i, \Phi_i)$ and have coordinate $\y_i$, for $i=1,2$. For simplicity, let us assume $\G_i$ only depends on the configuration $\y_i$.
From Theorem~\ref{th:newflow property}, we see that the root node has an energy function $V_{\texttt{r}} = \frac{1}{2} \qd^\t \G_r \qd + \Phi_r$, where
$\G_r(\q) =  w_1(\q) \G_1(\y_1(\q)) + w_2(\q) \G_2(\y_2(\q))$ and $\Phi_r(\q)  = w_1(\q)  \Phi_1(\y_1(\q)) + w_2 (\q) \Phi_2(\y_2(\q))  $
Because $w_i$ is a function of $\q$ not $\y_i$ and the Lyapunov function of \flow only allows summing child-node functions, this example root node policy does not admit a tree structure decomposition in the original \tree and can only be implemented as a single large node. Conversely, because of the weight function on the edges, \newtree can further exploits potential sparsity inside the policy representation 
so that building complicated global polices with only basic elementary policies becomes possible.

We note that the example above does not imply that \newflow can generate more expressive policies than \flow. More precisely, \newtree allows representing the same global policy using more basic leaf-node policies. This property has two implications: it suggests (i) \newflow can be more efficient to compute and (ii) \newflow can offload the difficulties of designing leaf-nodes policies into the weight functions, which are learnable.

\section{Learning \newflow} \label{app:learning}

To show the weights are learnable, it is sufficient to check  if we can  differentiate through the output of the final policy $\pi = \ab_{\texttt{r}}$ with respect to the parameters that specify the weight functions. As the computation of $\ab_{\texttt{r}}$ is accomplished recursively in the backward pass using \newpullback, we will only illustrate that  \newpullback is differentiable. This can be seen by treating \newpullback as a computation graph, as illustrated in Figure~\ref{fig:an example weight function}. 
Take the nodes in~\eqref{eq:modifed pullback} as an example. \newpullback receives $\f_i$, $\M_i$, $\G_i$, $\B_i$, $\J_i$, $\dot\J_i$, $L_i$ from the edges to the child nodes, the current state $(\x,\xd)$ and the auxiliary state to define the weight function $w_i$ and the correction term $\h_i$. As these inputs values do not depend on the weight functions $\{w_i\}$ at the current node (i.e. they do not form a loop), the derivative of $\ab_r$ with respect to the weight functions in the \newtree can be computed recursively by back-propagating the derivatives through each \newpullback operator. 

\section{Experimental Details}\label{app:exp}

\subsection{2D Robot}
\texttt{2d1level} consists of a 2D particle that aims to reach a goal while avoiding an obstacle. The \newtree for \texttt{2d1level} is of depth one (see Figure~\ref{fig:2d1level}), where the root node q (configuration space of the robot) has one child obstacle RMP node (o\textsubscript{rmp}) and one child attractor RMP node (a\textsubscript{rmp}). \texttt{2d2level} consists of a 2D particle that aims to reach a goal while avoiding two obstacles. The \newtree for \texttt{2d2level} is of depth two (see Fig~\ref{fig:2d2level}), where the root node (q) has one child attractor RMP node (a\textsubscript{rmp}) and one all-obstacle RMP (o) that is meant to combine two child obstacle RMPs (o\textsubscript{rmp}, one for each obstacle). The respective weight functions are shown on the edges of both these trees. The tree structures here are heuristically chosen based on the problem domain, as in RMPflow and typically follow the robot’s kinematic chain and then extend into the workspace and abstract task spaces.

Figure~\ref{fig:2d_inc_unstruc} shows the progression of \texttt{learner-un} during training, in which each snapshot corresponds to an associated point on the training curve in Figure~\ref{fig:2d2l_u_loss}. We see that \texttt{learner-un} is never able to reach the goal and often ends up in collision during and after training. 
We also compared with a unstructured network, \texttt{learner-un-large}, that has 5.8 times more learnable parameters compared to \texttt{learner-un}. We see improvement over loss values where the batch-loss is 0.065 and the online-loss is 0.393, and the collision rate decreases to 16\%. However, it is still never able to complete the task (e.g. see Figure~\ref{fig:2d2l_u_traj_large}). Figure~\ref{fig:2d_inc_unstruc_large} shows the progression of \texttt{learner-un-large} during training, in which each snapshot corresponds to an associated point on the training curve in Figure~\ref{fig:2d2l_u_loss_large}.

\begin{figure}[!t]
	\centering
	\begin{subfigure}[b]{0.25\textwidth}
		\centering
		\includegraphics[trim={0 0 0 0},clip,width=1\linewidth]{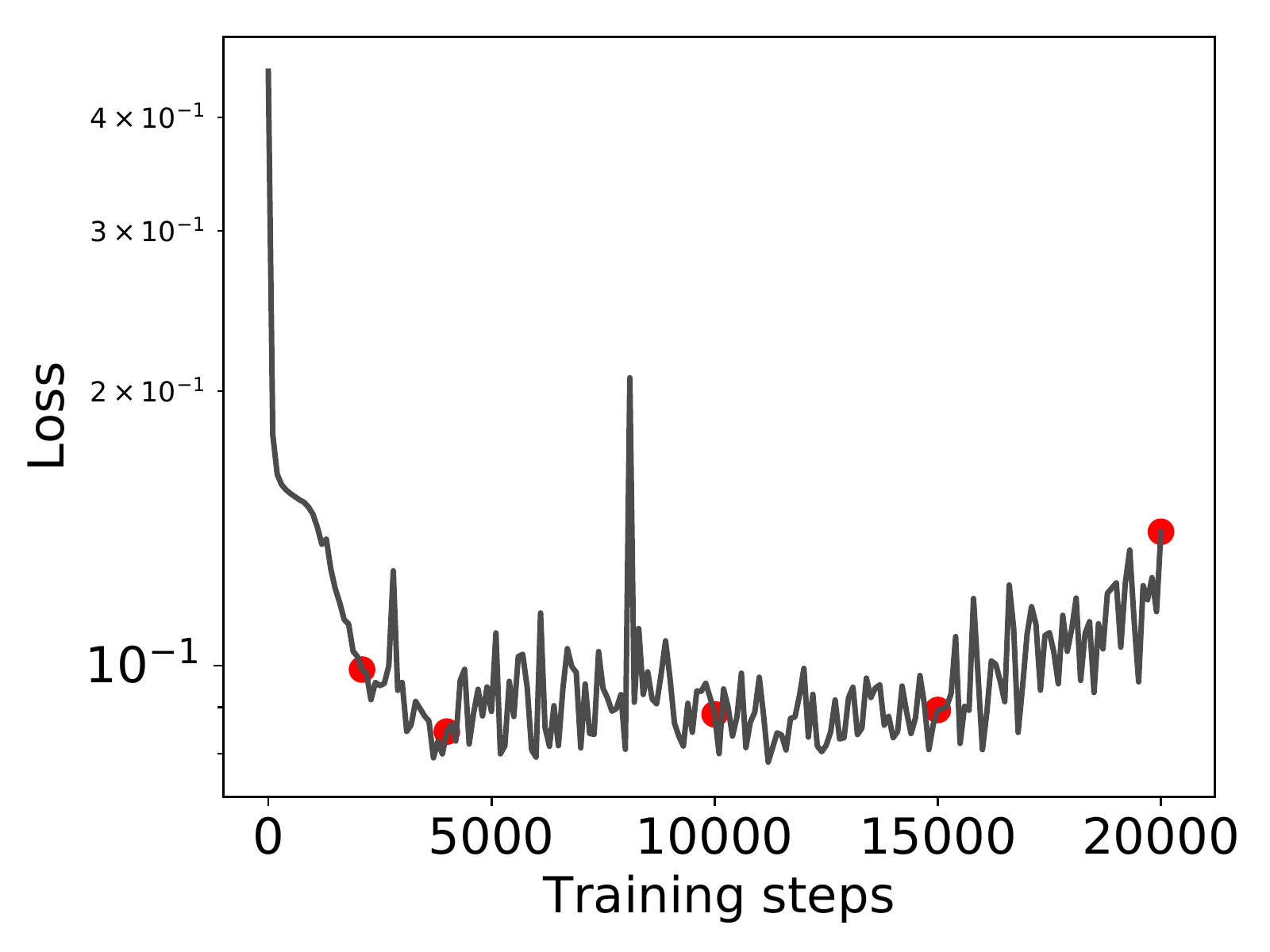}
		\caption{}
		\label{fig:2d2l_u_loss}
	\end{subfigure}
	\begin{subfigure}[b]{0.18\textwidth}
		\centering
		\includegraphics[trim={70 0 70 0},clip,width=1\linewidth]{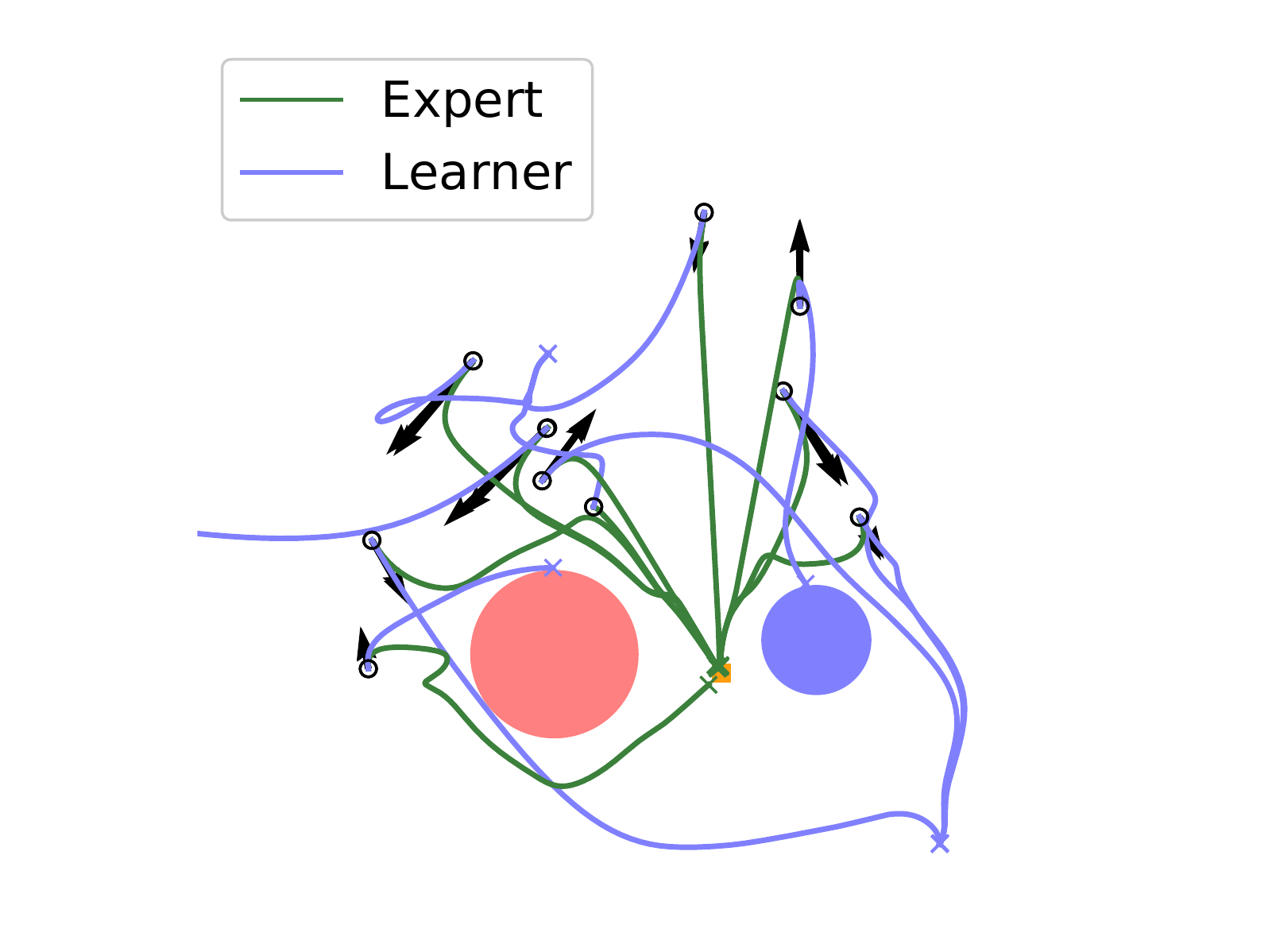}
		\caption{}
		\label{fig:2d2l_u_traj_large}
	\end{subfigure}
	\begin{subfigure}[b]{0.25\textwidth}
		\centering
		\includegraphics[trim={0 0 0 0},clip,width=1\linewidth]{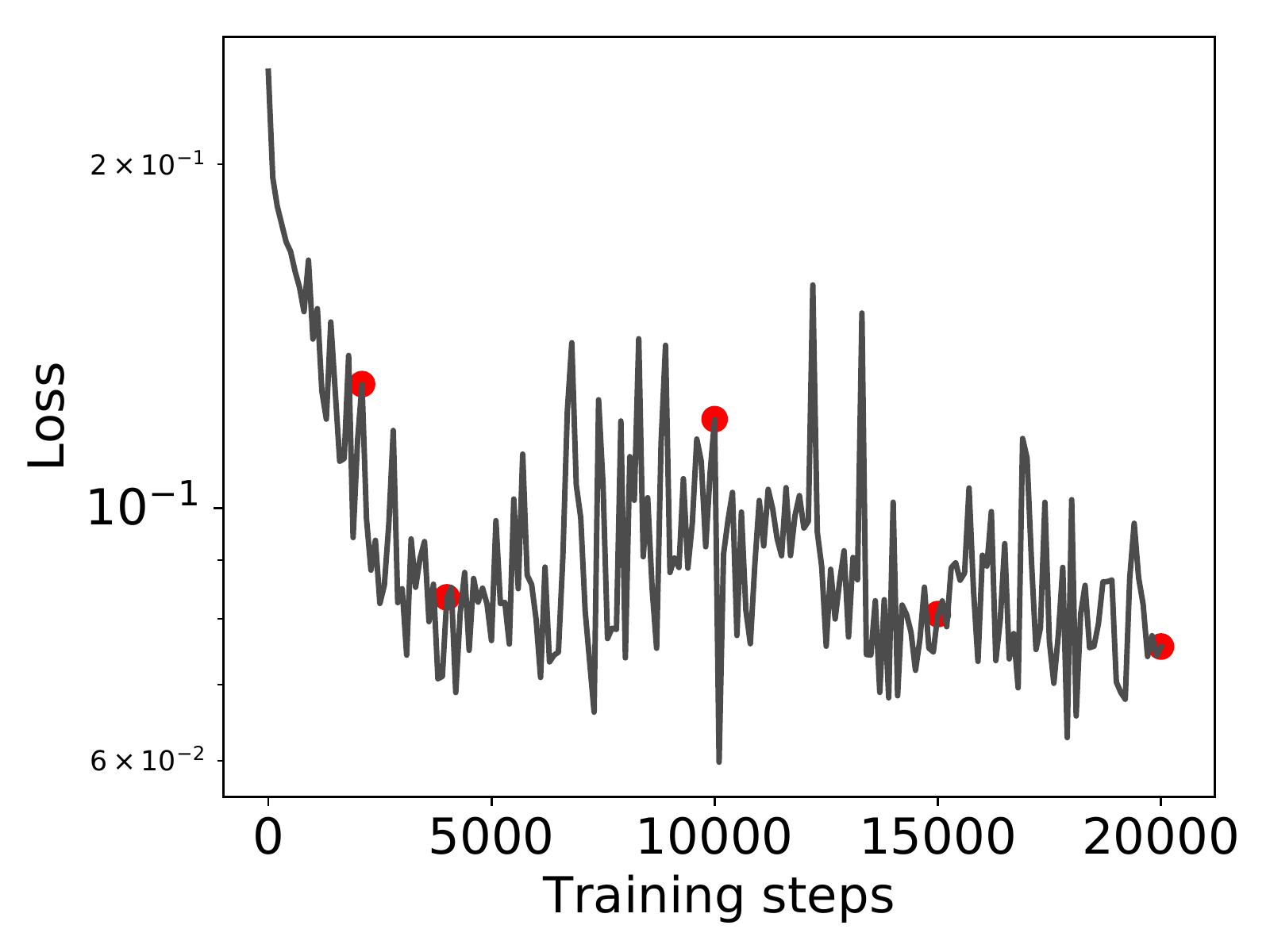}
		\caption{}
		\label{fig:2d2l_u_loss_large}
	\end{subfigure}
	\caption{\small (b) Trajectories generated in \texttt{2d2level} by \texttt{learner-rmp-large} compared to the expert is shown. Initial state is a black circle for position and black arrow for velocity. The environment has obstacles (red and blue) and goal (orange square). Learning curves for (a) \texttt{learner-rmp} and (c) \texttt{learner-rmp-large} on \texttt{2d2level} is also shown.}
\end{figure}

\begin{figure}[!t]
	\centering
	\begin{subfigure}[b]{0.19\textwidth}
		\centering
		\includegraphics[trim={100 50 90 100},clip,width=1\linewidth]{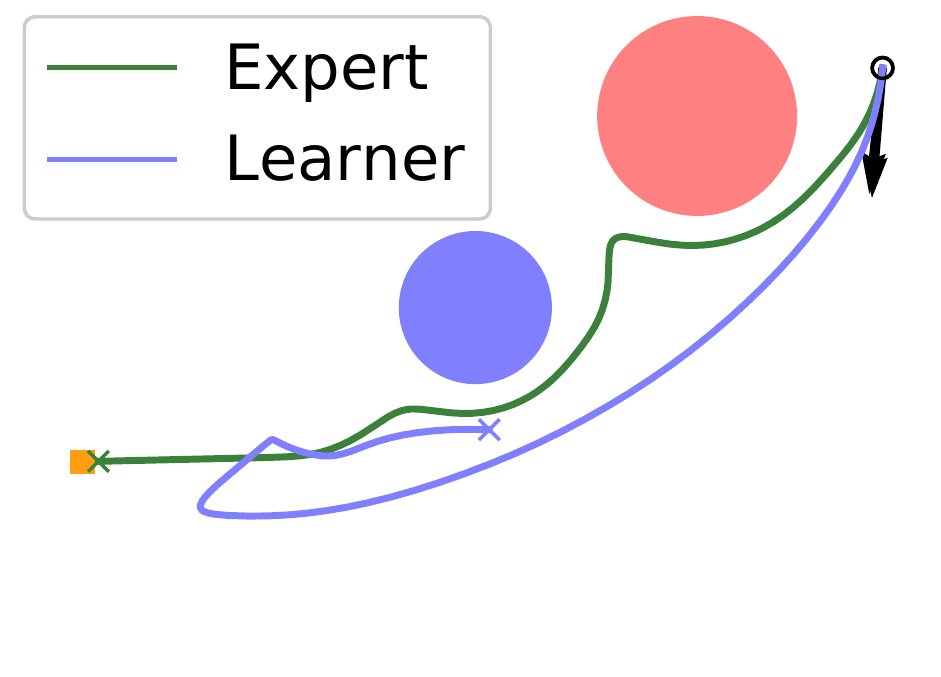}
	\end{subfigure}
	\begin{subfigure}[b]{0.19\textwidth}
		\centering
		\includegraphics[trim={100 50 90 100},clip,width=1\linewidth]{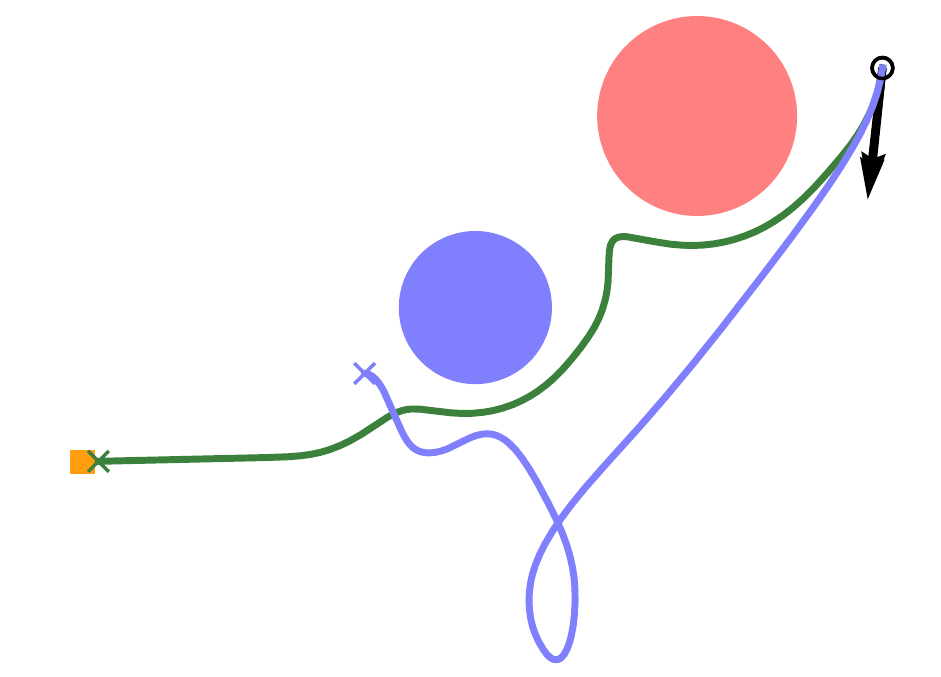}
	\end{subfigure}
	\begin{subfigure}[b]{0.19\textwidth}
		\centering
		\includegraphics[trim={100 50 90 100},clip,width=1\linewidth]{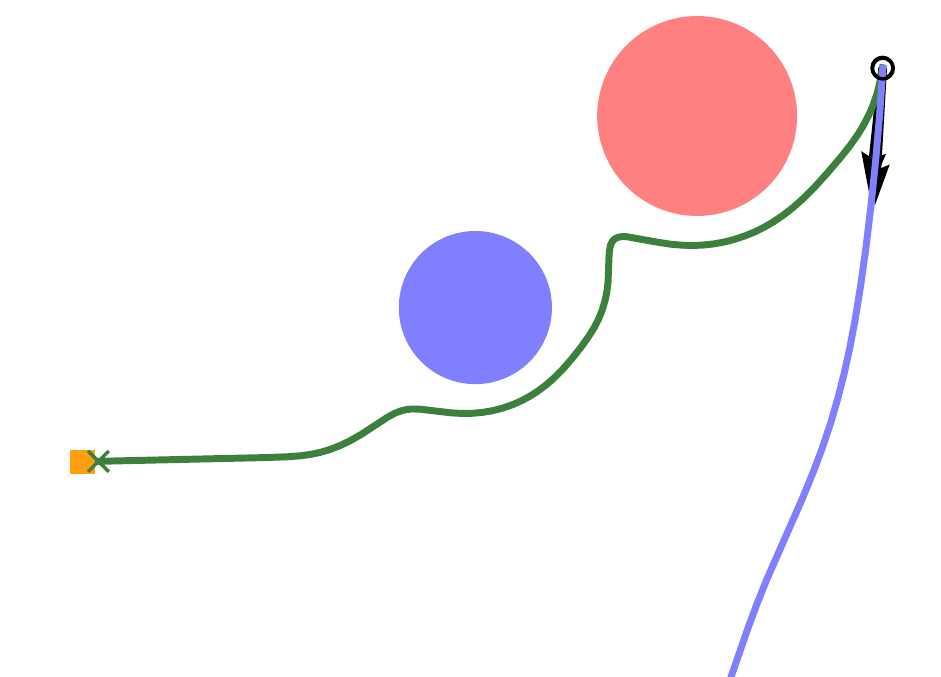}
	\end{subfigure}
	\begin{subfigure}[b]{0.19\textwidth}
		\centering
		\includegraphics[trim={100 50 90 100},clip,width=1\linewidth]{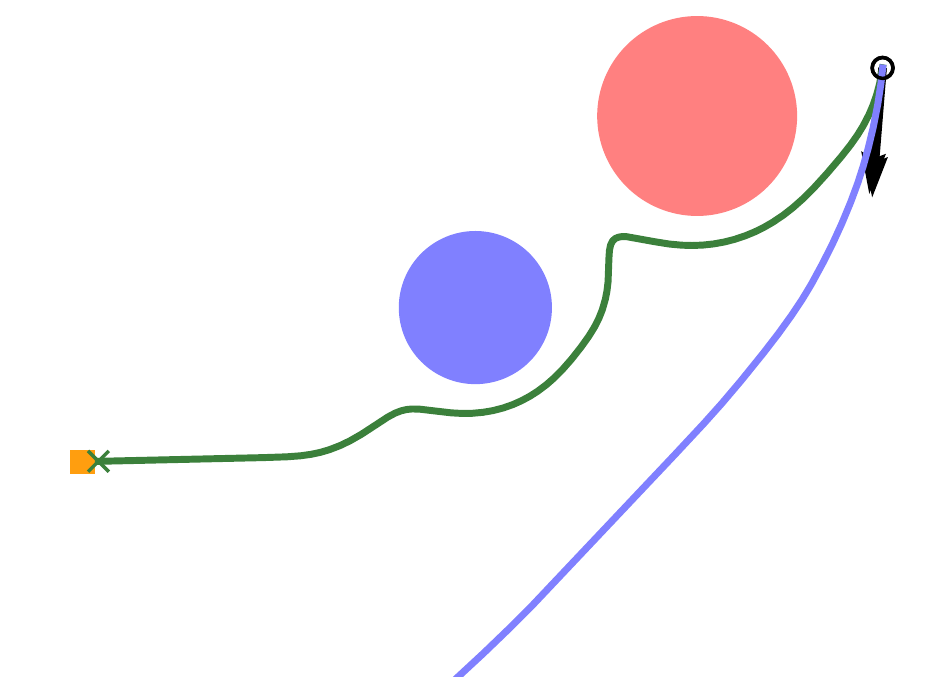}
	\end{subfigure}
	\begin{subfigure}[b]{0.19\textwidth}
		\centering
		\includegraphics[trim={100 50 90 100},clip,width=1\linewidth]{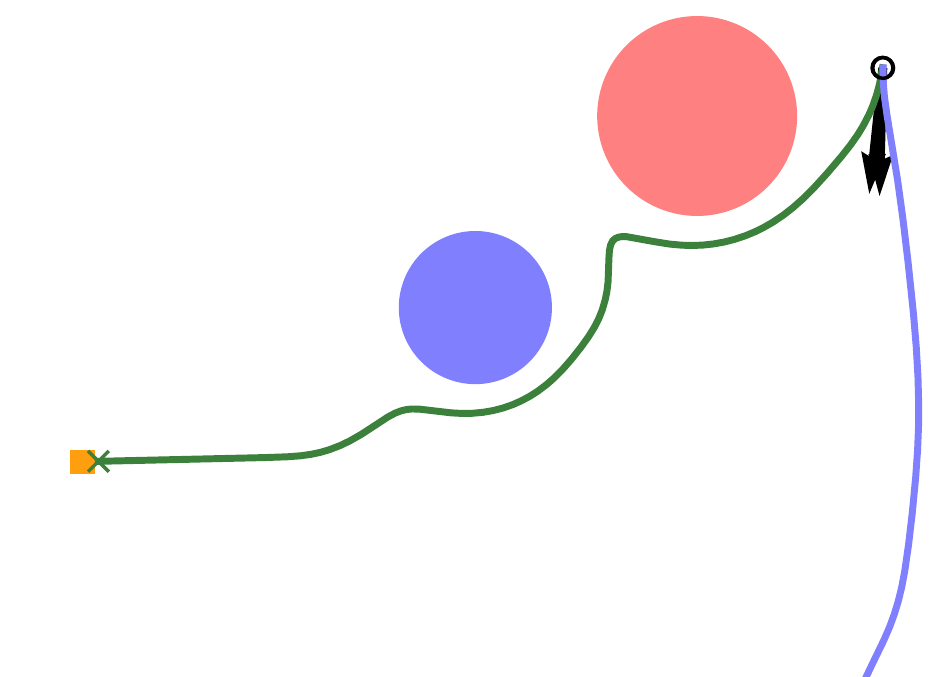}
	\end{subfigure}
	\caption{\small Trajectories produced by \texttt{learner-un} at various stages during training for \texttt{2d2level}. From left to right these plots correspond to the red dots from left to right on the training curve in Figure~\ref{fig:2d2l_u_loss}.}
	\label{fig:2d_inc_unstruc}
\end{figure}

\begin{figure}[!t]
	\centering
	\begin{subfigure}[b]{0.19\textwidth}
		\centering
		\includegraphics[trim={100 20 90 100},clip,width=1\linewidth]{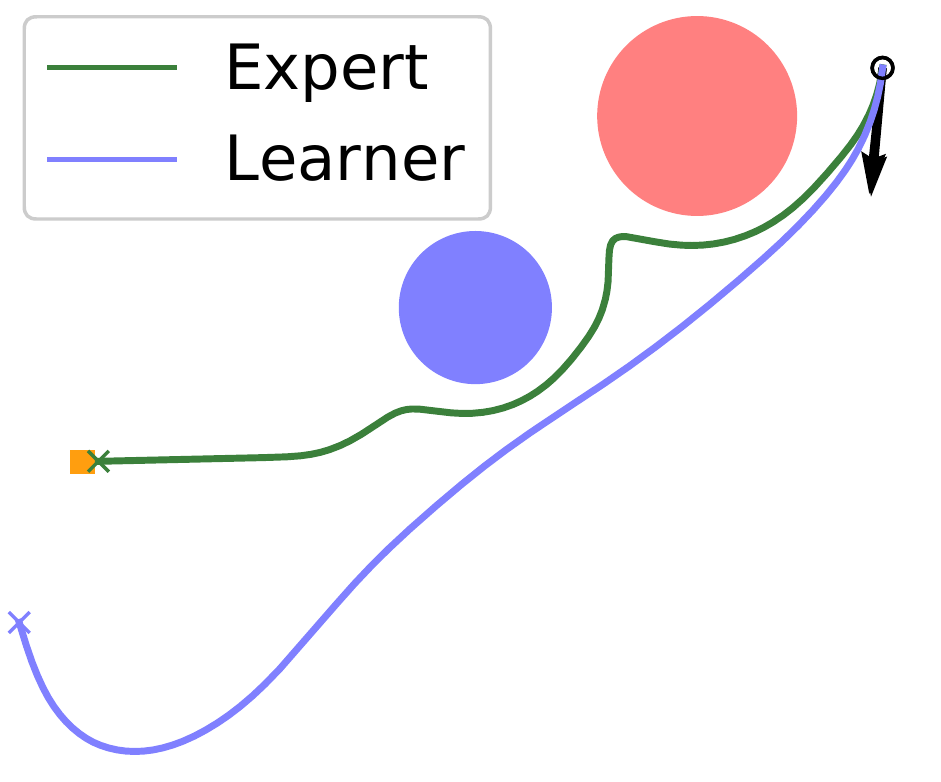}
	\end{subfigure}
	\begin{subfigure}[b]{0.19\textwidth}
		\centering
		\includegraphics[trim={100 20 90 100},clip,width=1\linewidth]{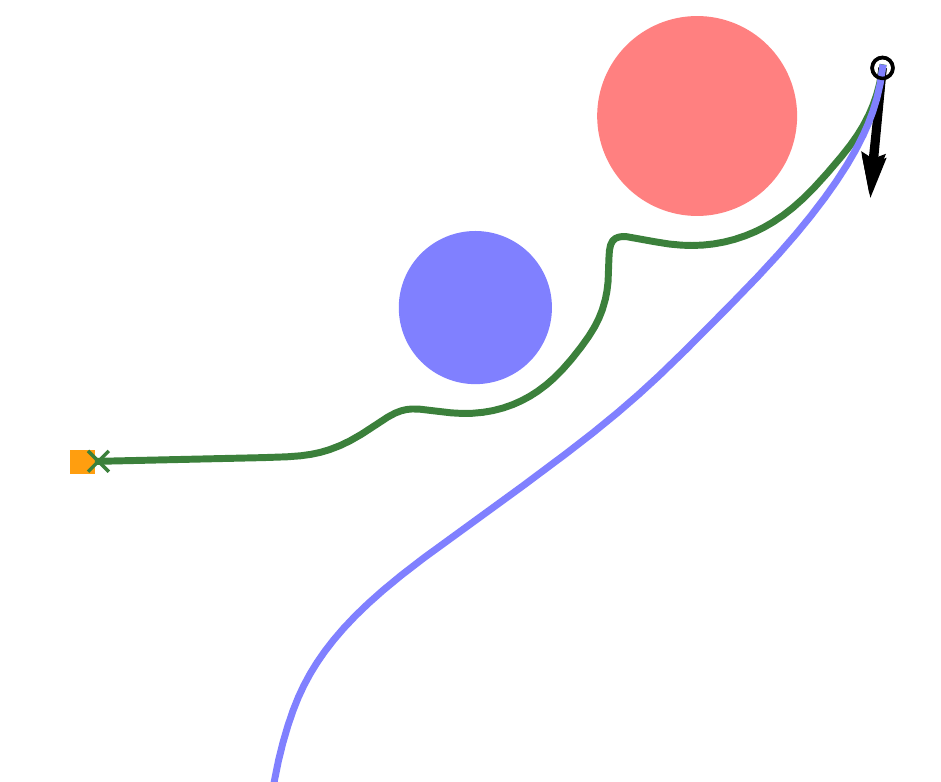}
	\end{subfigure}
	\begin{subfigure}[b]{0.19\textwidth}
		\centering
		\includegraphics[trim={100 20 90 100},clip,width=1\linewidth]{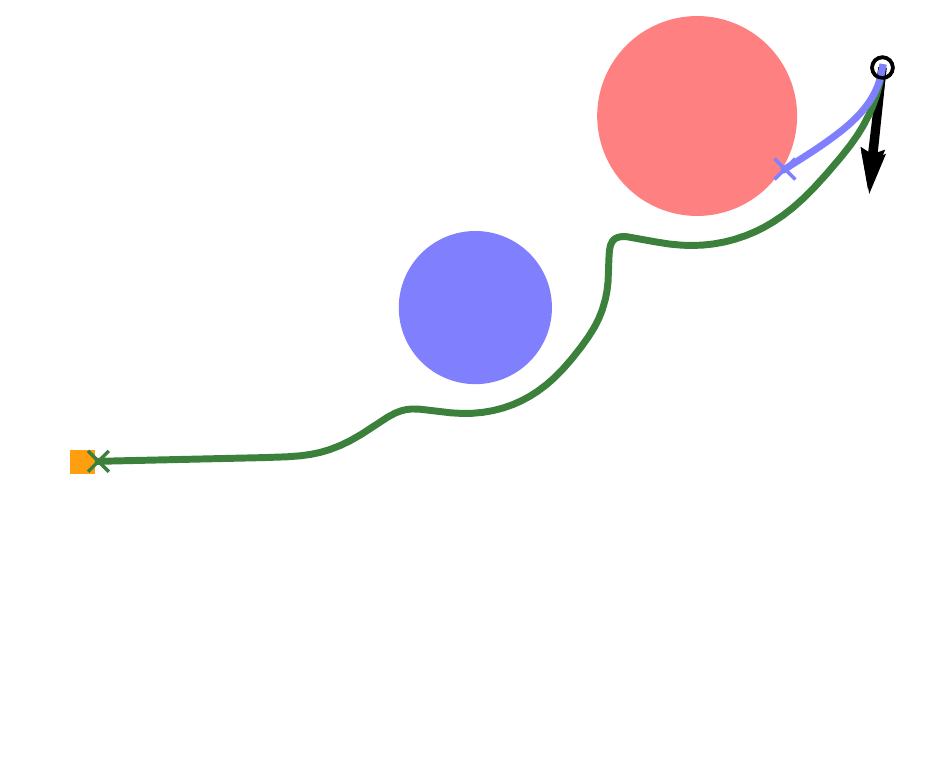}
	\end{subfigure}
	\begin{subfigure}[b]{0.19\textwidth}
		\centering
		\includegraphics[trim={100 20 90 100},clip,width=1\linewidth]{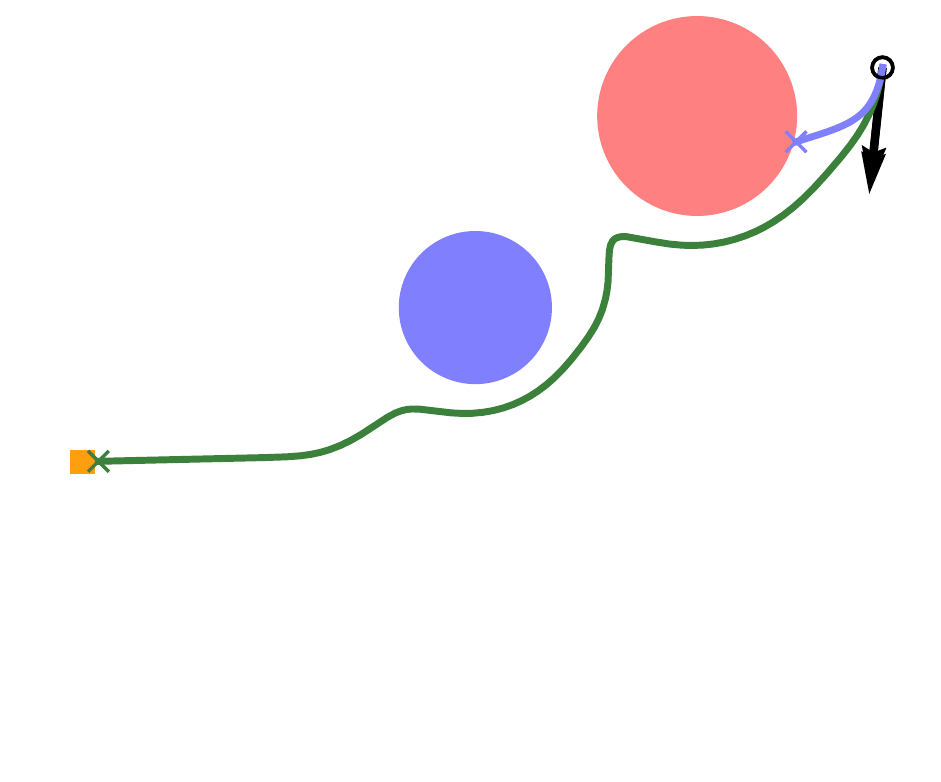}
	\end{subfigure}
	\begin{subfigure}[b]{0.19\textwidth}
		\centering
		\includegraphics[trim={100 20 90 100},clip,width=1\linewidth]{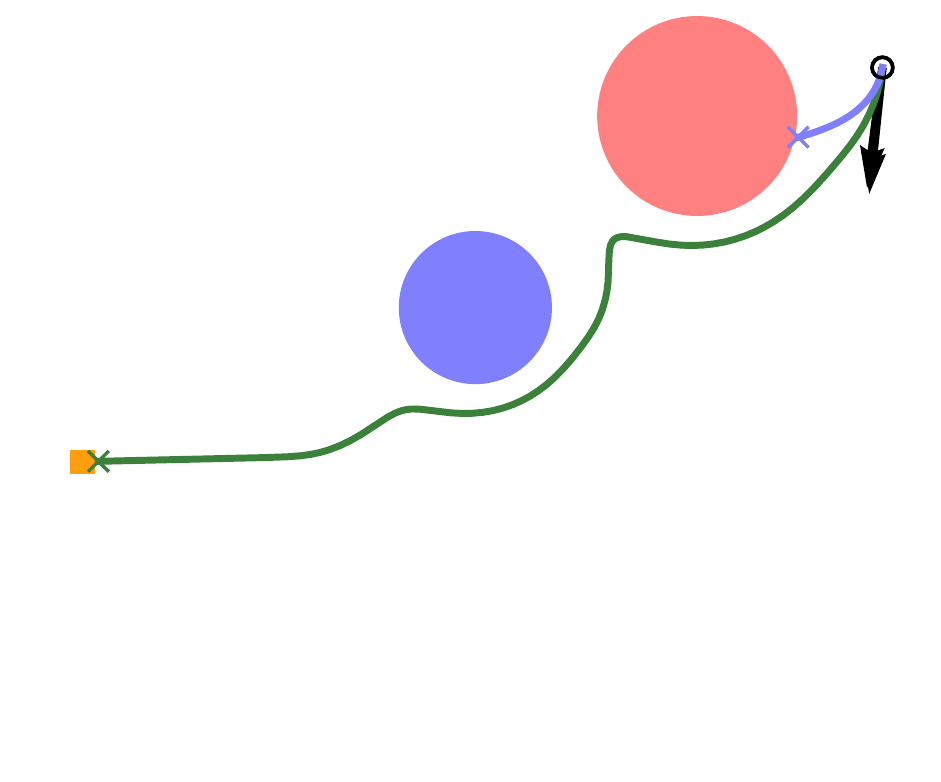}
	\end{subfigure}
	\caption{\small Trajectories produced by \texttt{learner-un-large} at various stages during training for \texttt{2d2level}. From left to right these plots correspond to the red dots from left to right on the training curve in Figure~\ref{fig:2d2l_u_loss_large}.}
	\label{fig:2d_inc_unstruc_large}
\end{figure}

\begin{figure}[h]
	\centering
	\begin{subfigure}[b]{1\linewidth}
		\centering
		\includegraphics[trim={97.5 0 97.5 0},clip,width=0.19\linewidth]{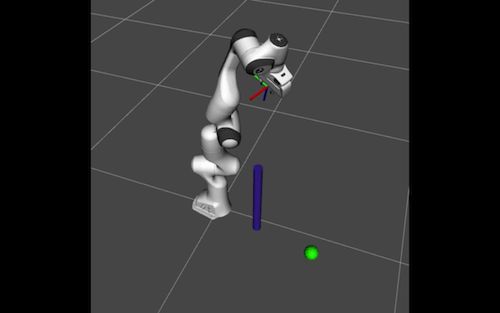}
		\includegraphics[trim={97.5 0 97.5 0},clip,width=0.19\linewidth]{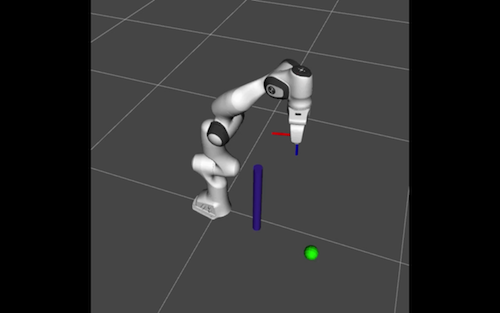}
		\includegraphics[trim={97.5 0 97.5 0},clip,width=0.19\linewidth]{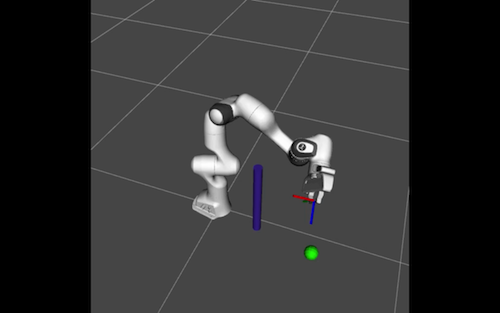}
		\includegraphics[trim={97.5 0 97.5 0},clip,width=0.19\linewidth]{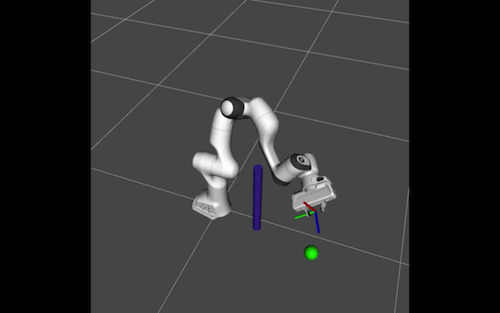}
		\includegraphics[trim={97.5 0 97.5 0},clip,width=0.19\linewidth]{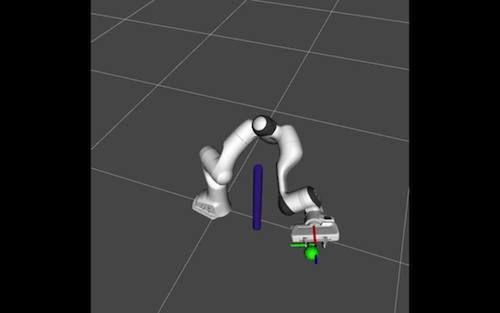}
		\caption{expert}
	\end{subfigure}
	\begin{subfigure}[b]{1\linewidth}
		\centering
		\includegraphics[trim={97.5 0 97.5 0},clip,width=0.19\linewidth]{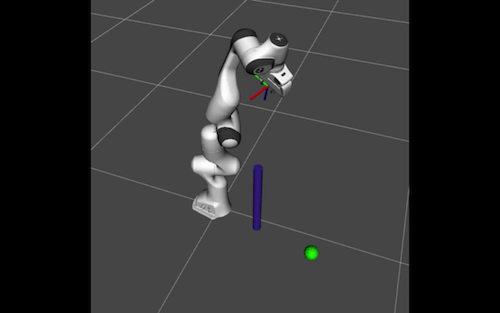}
		\includegraphics[trim={97.5 0 97.5 0},clip,width=0.19\linewidth]{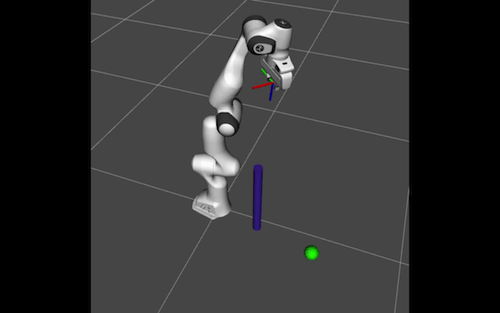}
		\includegraphics[trim={97.5 0 97.5 0},clip,width=0.19\linewidth]{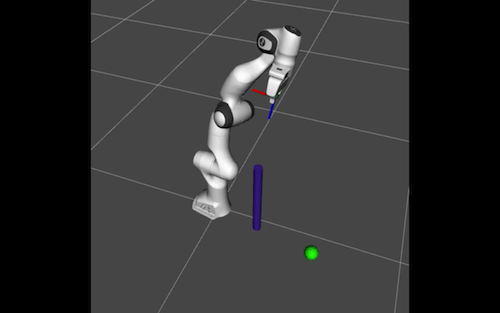}
		\includegraphics[trim={97.5 0 97.5 0},clip,width=0.19\linewidth]{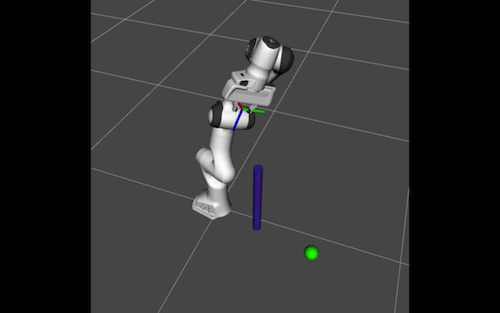}
		\includegraphics[trim={97.5 0 97.5 0},clip,width=0.19\linewidth]{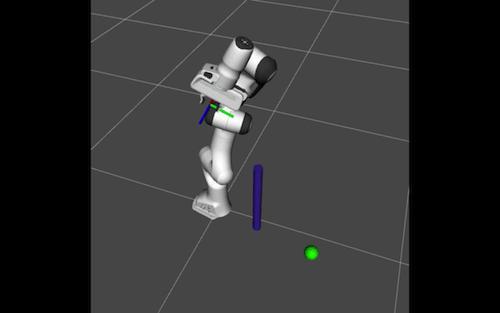}
		\caption{\texttt{learner-0}}
	\end{subfigure}
	\begin{subfigure}[b]{1\linewidth}
		\centering
		\includegraphics[trim={97.5 0 97.5 0},clip,width=0.19\linewidth]{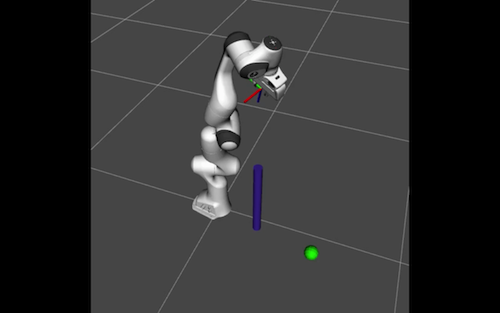}
		\includegraphics[trim={97.5 0 97.5 0},clip,width=0.19\linewidth]{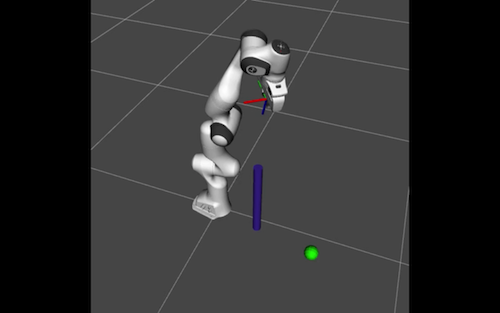}
		\includegraphics[trim={97.5 0 97.5 0},clip,width=0.19\linewidth]{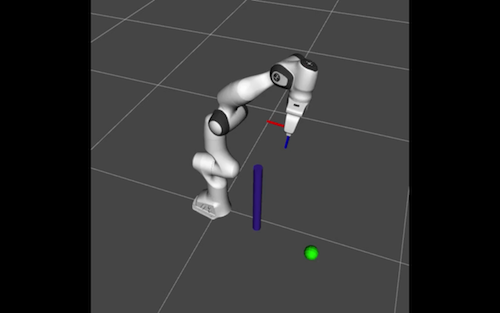}
		\includegraphics[trim={97.5 0 97.5 0},clip,width=0.19\linewidth]{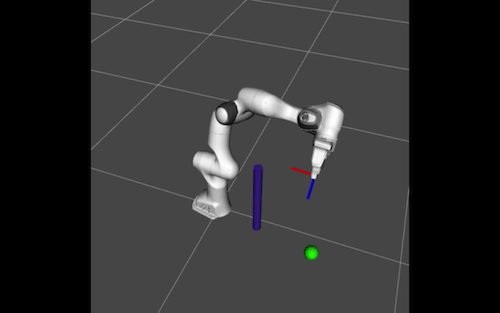}
		\includegraphics[trim={97.5 0 97.5 0},clip,width=0.19\linewidth]{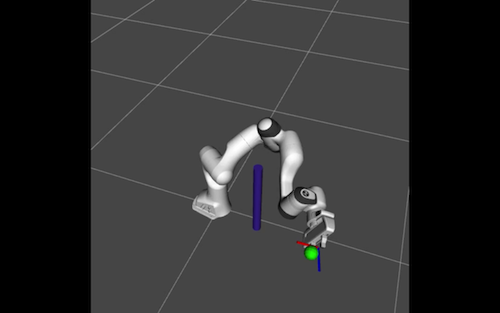}
		\caption{\texttt{learner-300}}
	\end{subfigure}
	\begin{subfigure}[b]{1\linewidth}
		\centering
		\includegraphics[trim={97.5 0 97.5 0},clip,width=0.19\linewidth]{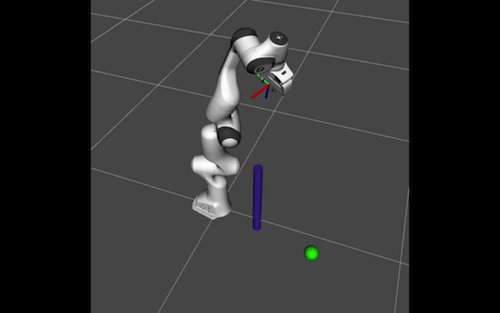}
		\includegraphics[trim={97.5 0 97.5 0},clip,width=0.19\linewidth]{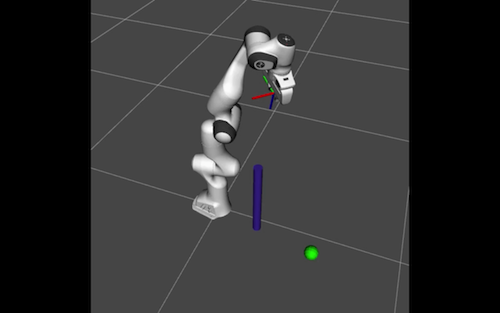}
		\includegraphics[trim={97.5 0 97.5 0},clip,width=0.19\linewidth]{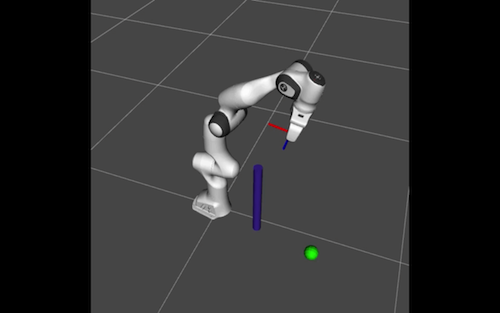}
		\includegraphics[trim={97.5 0 97.5 0},clip,width=0.19\linewidth]{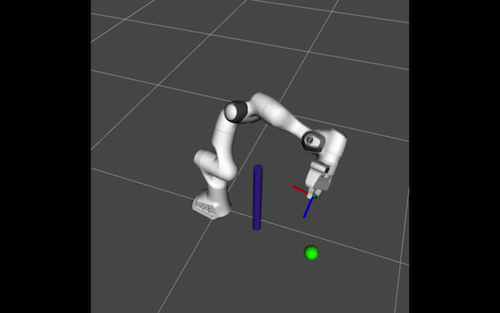}
		\includegraphics[trim={97.5 0 97.5 0},clip,width=0.19\linewidth]{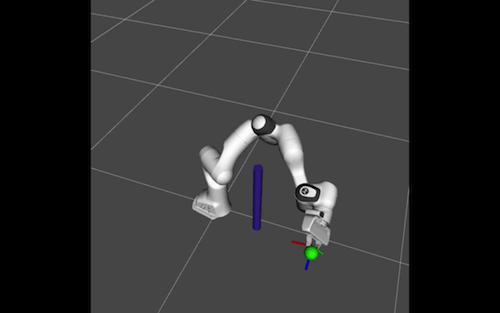}
		\caption{\texttt{learner-1200}}
	\end{subfigure}
	\begin{subfigure}[b]{1\linewidth}
		\centering
		\includegraphics[trim={0 0 0 0},clip,width=0.28\linewidth]{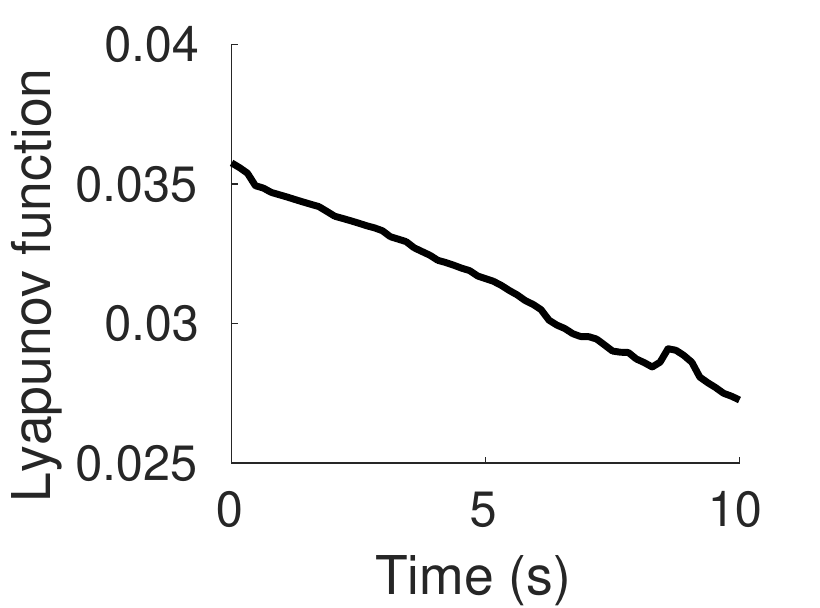}
		\includegraphics[trim={0 0 0 0},clip,width=0.28\linewidth]{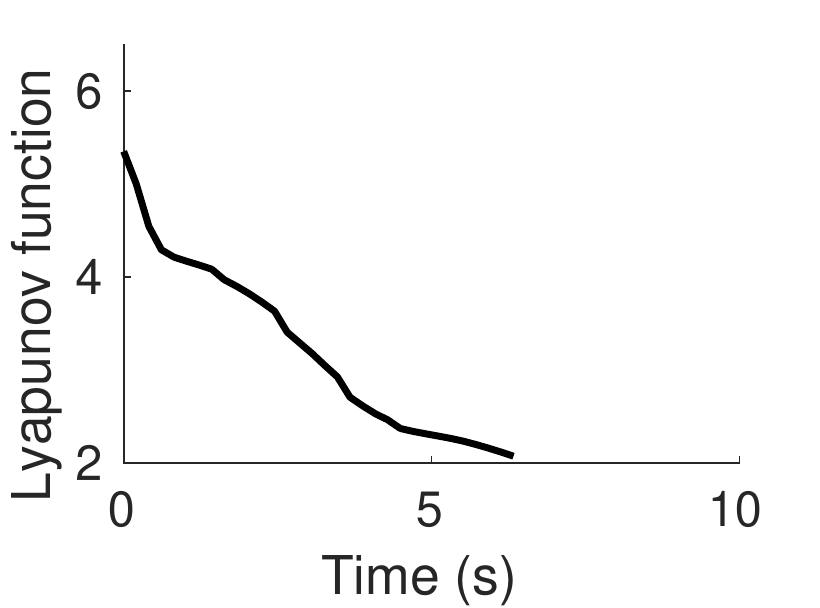}
		\includegraphics[trim={0 0 0 0},clip,width=0.28\linewidth]{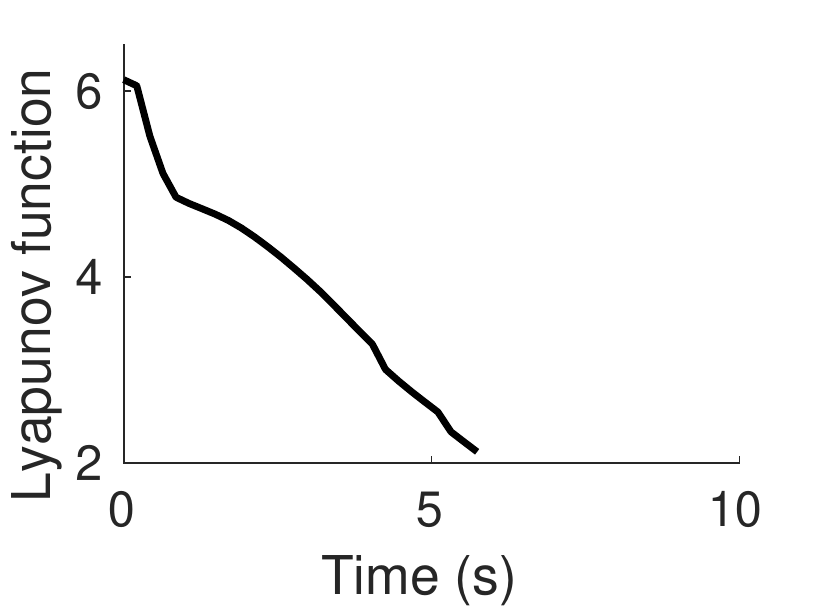}
		\caption{}
	\end{subfigure}
	\caption{\small (a)-(d) An example execution (left to right) from the test dataset, comparing (a) the expert with (b) \texttt{learner-0}, (c) \texttt{learner-300}, and (d) \texttt{learner-1200}. (e) The respective Lyapunov function of the learners' trajectories (\texttt{learner-0} (left), \texttt{learner-300} (middle), \texttt{learner-1200} (right)).}
	\label{fig:franka_traj}
\end{figure}

\subsection{Franka Robot}
From the root node we have various task spaces, like the end-effector position (ee) on which the attractor space (a) is defined by a change of coordinates such that the goal position is at the origin. The attractor RMP (a\textsubscript{rmp}) is then defined on the attractor space for a goal reaching subtask. Each joint of the robot is mapped to a one dimensional upper (ujl\textsubscript{i}) and lower (ljl\textsubscript{i}) joint limit space where a joint limit RMP (jl\textsubscript{rmp}) is defined for joint limit avoidance subtasks. The root node is also mapped to a pre-specified number of control points on the robot (cp\textsubscript{i}) such that they collectively approximate the robot's body and can be used for collision avoidance. On any control point space we add a distance space to the obstacle (d\textsubscript{i}) where the obstacle RMP (o\textsubscript{rmp}) is defined. Note that when multiple obstacles are present we can add distance spaces and the obstacle RMPs for each obstacle on every control point. Now, since the tree structure can change with the number of obstacles, in practice, shared weights can be specified across all obstacles on a given control point, such that training can be performed with only one obstacle to learn the weight function and then can be applied to arbitrary number of obstacles during execution. Finally, there are also native RMPs defined on the root node like a constant damper RMP (q\textsubscript{d}) and an RMP which is just an identity metric (q\textsubscript{mi}) with no learnable weight function to ensure the \resolve operator is numerically stable.

Figure~\ref{fig:franka_traj} shows a qualitative comparison on an example execution with the expert and the learners. We verify the stability properties of \newflow (even during learning) with the monotonically decreasing Lyapunov function plots on these executions. Note that the scale on the plot for \texttt{learner-0} is very small and the tiny kink on the plot is due to numerical issues with Euler integration.

\subsection{Discussion}

The experiments shown here were designed to study if RMPfusion can combine imperfect subtask RMPs, whose inertia weight functions are incorrectly specified while motion policies are sensibly designed with domain knowledge. While this setup does not emulate the full generality where everything is unknown, we think that it captures a representative and important scenario that often happens in practice. We’ve had extensive literature in designing motion policies, whereas designing the associated metrics/inertias for these policies is a fairly new and nontrivial concept, which is a major user burden imposed by RMPflow. 

We address this issue by learning the weight functions, and show in the experiments that imperfect subtask RMPs with poorly designed metrics can still be compensated by our framework. Importantly, we emphasize that RMPfusion is designed for generality and does not assume the knowledge that only the inertias are wrongly specified. Therefore, though not tested in the current experiments, we do believe RMPfusion can be used in more general setups, so long as the user provides sufficiently rich subtask RMPs such that there exists a fusion that can generate the desired behavior. However, how to choose the subtask RMPs to start with is a domain specific problem, similar to specifying the size and structure of a neural network in general. Therefore, we consider it beyond the scope of the current paper, because our main focus here is to study and validate the theoretical benefits of RMPfusion (like stability during immature learning).

Generally, an RMPfusion policy with constant weights (not a function of the parent state, etc.) can be reduced into an RMPflow policy with the same tree structure. This can be seen from \eqref{eq:modifed pullback}; when the weights are constant, we can effectively push all the weights of an RMP-tree* to the leaf-nodes to define modified inertia matrices on an RMP-tree (the motion policy doesn't change). In other words, in the experiments, the expert can be viewed as an RMPflow policy with some unknown inertia matrices and therefore RMPflow wasn't directly compared.

Using neural networks to parameterize the weight functions maybe is an overkill in our experiments. The reason for using general function approximators here is to show that our framework is practically feasible and can support situations where this will become necessary. For example, this allows for learning general differentiable representation for the weight functions, e.g., using images for auxiliary states. However, one should note also that while using expressive function approximators would add representation to the whole policy it could also potentially make learning more difficult.

\end{document}